\documentclass[11pt,reqno]{proc-l}
\usepackage[letterpaper,margin=1in]{geometry} % 1in margins on all sides
\usepackage{mathptmx,tikz} 
\usetikzlibrary{arrows.meta,positioning,fit,backgrounds,calc,shapes.geometric,shadings,shadows,shadows.blur}
\usepackage{sansmath}

\newtheorem{theorem}{Theorem}[section]
\newtheorem{lemma}[theorem]{Lemma}
\newtheorem{corollary}[theorem]{Corollary}
\theoremstyle{definition}
\newtheorem{definition}[theorem]{Definition}

\theoremstyle{remark}
\newtheorem{remark}[theorem]{Remark}
\numberwithin{equation}{section}
\allowdisplaybreaks[1]

\usepackage{amsmath,amsthm,amssymb,amsxtra,mathtools,bm}
\usepackage{array}
\usepackage{graphicx}
\usepackage{algorithm}
\usepackage{algpseudocode}
\usepackage{physics}
\usepackage{MnSymbol,wasysym}
\usepackage{tikzsymbols}
\usepackage{array,tabularx,multirow}
\usepackage{makecell}
\usepackage{array}
\usepackage{paralist}
\usepackage{parskip}
\usepackage{relsize}
\usepackage{booktabs}

\usepackage{array}
\newcolumntype{P}[1]{>{\centering\arraybackslash}p{#1}}

\usepackage{verbatim}
\usepackage{enumerate}
\usepackage{parskip}
\usepackage{subfigure}

\usepackage[many]{tcolorbox}
\usetikzlibrary{decorations}

\usepackage{color,soul}
\definecolor{clemson-orange}{RGB}{234,106,32}
\definecolor{highlight-orange}{RGB}{255,150,150}
\definecolor{chicago-maroon}{RGB}{128,0,0}
\definecolor{cincinnati-red}{RGB}{190,0,0}
\definecolor{soft-cyan}{RGB}{68,85,90}
\definecolor{firebrick}{RGB}{178,34,34}
\definecolor{crimson}{RGB}{220,20,60}
\definecolor{cerrulean}{rgb}{0.165,0.322,0.745}
\definecolor{jaam}{rgb}{0.45,0.0,0.45}

\usepackage{hyperref}
\hypersetup{
  colorlinks   = true,    % Colours links instead of ugly boxes
  urlcolor     = jaam,    % Colour for external hyperlinks
  linkcolor    = firebrick,    % Colour of internal links
  citecolor    = blue      % Colour of citations
}

\usepackage{parskip}

\newtheorem{assumption}{\bf Assumption}
\newif\ifsolutions \solutionstrue

\def\final{0}
\newcommand{\reviewer}[3]{
  \expandafter\newcommand\csname #1\endcsname[1]{
    \ifthenelse{\equal{\final}{1}} {
      \textcolor{#3}{}
    } {
      \textcolor{#3}{\begin{center} \textbf{#2} ##1 \end{center}}
    }
  }
}

\reviewer{anirbit}{}{jaam}

\newcommand{\Ldata}{\mathcal{L}}

\DeclareMathOperator{\vecmap}{vec}
\newcommand{\mHess}{\mathbf{G}}
\newcommand{\mzero}{\bm{0}}

\def\1{\bm{1}}

\newcommand{\E}{\mathbb{E}}

\newcommand{\Var}{\mathrm{Var}}

\def\vzero{{\bm{0}}}

\def\va{{\bm{a}}}
\def\vb{{\bm{b}}}

\def\ve{{\bm{e}}}

\def\vg{{\bm{g}}}

\def\vp{{\bm{p}}}

\def\vu{{\bm{u}}}
\def\vv{{\bm{v}}}

\def\vx{{\bm{x}}}

\def\vz{{\bm{z}}}
\def\vT{{\bm{T}}}

\def\mA{{\bm{A}}}
\def\mB{{\bm{B}}}

\def\mE{{\bm{E}}}

\def\mH{{\bm{H}}}
\def\mI{{\bm{I}}}

\def\mK{{\bm{K}}}

\def\mM{{\bm{M}}}

\def\mP{{\bm{P}}}
\def\mQ{{\bm{Q}}}
\def\mR{{\bm{R}}}
\def\mS{{\bm{S}}}
\def\mT{{\bm{T}}}
\def\mU{{\bm{U}}}
\def\mV{{\bm{V}}}
\def\mW{{\bm{W}}}
\def\mX{{\bm{X}}}
\def\mY{{\bm{Y}}}

\DeclareMathAlphabet{\mathsfit}{\encodingdefault}{\sfdefault}{m}{sl}
\SetMathAlphabet{\mathsfit}{bold}{\encodingdefault}{\sfdefault}{bx}{n}

\def\gL{{\mathcal{L}}}

\newcommand{\bb}{\mathbb}

\newcommand{\N}{{\bb N}}

\newcommand{\R}{\bb R}

\newcommand{\Poincare}{Poincar\'e\xspace}

\begin{document}
%\title[short text for running head]{}
\title[Provable Stochastic Training of Multi-Headed Attention and for L{\relsize{-2}O}RA]{Convergent Stochastic Training of Multi-Headed Attention \\ and Understanding L{\relsize{-2}O}RA}

%    Only \author and \address are required; other information is
%    optional.  Remove any unused author tags.

%    author one information
% \author[short version for running head]{name for top of paper}

\author{Zhengkai Sun$^\dagger$}
\address{Department of Physics. The University of Manchester}
\curraddr{}
\email{zhengkai.sun@student.manchester.ac.uk}
\thanks{}

\author{Dibyakanti Kumar$^\dagger$}
\address{Department of Computer Science. The University of Manchester}
\curraddr{}
\email{dibyakanti.kumar@manchester.ac.uk}
\thanks{$^\dagger$Equal Contribution}

\author{Alejandro F Frangi}
\address{Department of Computer Science. The University of Manchester}
\curraddr{}
\email{alejandro.frangi@manchester.ac.uk}
\thanks{}

\author{Anirbit Mukherjee$^\star$}
\address{Department of Computer Science. The University of Manchester}
\curraddr{}
\email{anirbit.mukherjee@manchester.ac.uk}
\thanks{$^\star$ Corresponding Author}

\author{Mingfei Sun}
\address{Department of Computer Science. The University of Manchester}
\curraddr{}
\email{mingfei.sun@manchester.ac.uk}
% \thanks{}
%    \subjclass is required.
%\subjclass[2010]{Primary }

\date{}

\dedicatory{}

%    "Communicated by" -- provide editor's name; required.
%\commby{}

\begin{abstract}
Transformers have revolutionized machine learning and deploying attention layers in the model is increasingly standard across a myriad of applications. Further, for large models, it is common to implement Low Rank Adaptation (LoRA), whereby a factorized parameterization of them is trained, to achieve a surprisingly beneficial accuracy-size trade-off.  In this work, via a unified framework we rigorously establish trainability of such models under stochastic methods. We prove that for a class of mild regularizations, the empirical regression loss on a attention layer and LoRA on a shallow neural net, both induce \Poincare inequality for the corresponding Gibbs' measure. Crucially, we show that the \Poincare constant is free of the data dimension for LoRA and for multi-head attention it is free of the head dimensions, Then it follows, via invoking recent results, that a certain SDE, which mimics the SGD, minimizes the corresponding losses. In both the cases, our first-of-its-kind results of trainability on attention and nets, do not use any assumptions on the data or the model size. % of the architecture.
\end{abstract}

\maketitle
% \tableofcontents

\section{Introduction}

The remarkable empirical success of attention mechanisms have fundamentally reshaped modern machine learning, most prominently through the transformer architecture -- which is the backbone of Large Language Models (LLMs) \cite{radford2018improving}. By allowing models to dynamically weight interactions between ``tokens''/data fragments, attention layers enable modeling of complicated distributions. Central to the attention mechanism are the query and key matrices, $\mW_Q$ and $\mW_K$ which occur in the model as the product $\mW_Q\mW_K$ and one such pair are trained in each ``attention head'' of which there are many in each of the many attention layers in any commonly used transformer. Despite their central role in the practice of modern AI, our theoretical understanding remains limited of how these matrices evolve during successful training.

While modern attention mechanisms are now the standard, they originally emerged to address the bottleneck of fixed-length representations in early sequence modeling. Early sequence-to-sequence models were limited by fixed-length representations. \cite{bahdanau2016nmtjointly} introduced attention to dynamically aggregate encoder states, allowing the decoder to focus on relevant input positions. Building on this, \cite{luong2015effective} proposed alternative attention variants, including global attention over all positions and local attention over a subset. 

These developments ultimately led to the transformer architecture \cite{vaswani2017attention}, which removes recurrence entirely and instead models sequence interactions purely through stacked self-attention layers, where given an input $\mX \in \R^{t \times d}$ each layer computes $\text{RowSoftMax}_\beta\left(\frac{\mX \mW_Q \mW_K^\top \mX^\top}{\sqrt{d}}\right)\mX \mW_V$, where $\mW_Q, \mW_K$ and $\mW_V$ are learned weight matrices where the RowSoftMax operator is defined as $[\text{RowSoftMax}_\beta(\mM)]_{ij} \coloneqq \frac{e^{\beta M_{ij}}}{\sum_{k=1}^t e^{\beta M_{ik}}}$.

The transformer architecture \cite{vaswani2017attention} has become the dominant paradigm for sequence modeling, replacing recurrence with stacked self-attention and feed-forward layers. Numerous variants have since been proposed, namely Sparse Transformer \cite{child2019sparse}, Longformer \cite{beltagy2020longformer}, Linformer \cite{wang2020linformer}, Transformer-XL \cite{dai2019transformerxl}, ALBERT \cite{lan2020albert}, and Vision Transformer \cite{dosovitskiy2021vit}, among many others. A foundational reason explaining these successes is given in works like \cite{yun2020are} that have show that transformers are universal approximators of sequence-to-sequence matrix functions.

{\bf The Rising Importance of Doing Regression on Transformers} The success of transformers has led to their widespread adoption in scientific machine learning, where many tasks can be naturally formulated as regression problems over high-dimensional discretizations of function spaces. In particular, applications in fluid dynamics and weather prediction --- ranging from operator learning for PDEs to data-driven forecasting --- often reduce to learning mappings between input and output fields by doing regression using a attention-based architecture, FourCastNet \cite{pathak2022fourcastnet}, GraphCast \cite{lam2023graphcast}, Pangu-Weather \cite{bi2023pangu}, Poseidon \cite{herde2024poseidon} and GenCFD \cite{molinaro2024gencfd}. This perspective motivates studying attention not only as a representation mechanism, but as a regression operator whose properties govern generalization and efficiency in continuous domains.

% \note{Location for Q1?}

\begin{center}
    \textbf{Q1: }\emph{With no assumptions on data or architecture, can a attention layer be trained by a stochastic algorithm?}
\end{center}

On the other hand, the rise of large-scale pretraining has motivated parameter-efficient fine-tuning methods such as Low-Rank Adaptation (LoRA), which constrains updates to lie in low-dimensional subspaces while preserving the pretrained backbone. More precisely, the idea is to freeze the pre-trained weight matrix $\mW_{\rm pre} \in \mathbb{R}^{d_{\text{out}} \times d_{\text{in}}}$ and only train a small update. This update is represented as the product of two small matrices, $\mA$ and $\mB$. The modified weights $\mW'$ are parameterized as, $\mW' = \mW_{\rm pre} + \frac{\alpha}{r} \mB \mA$
where $\mA \in \mathbb{R}^{r \times d_{\text{in}}}$ and $\mB \in \mathbb{R}^{d_{\text{out}} \times r}$ are the trainable factors and $\alpha$ being a scaling constant. By training only $\mA$ and $\mB$, we significantly reduce the number of parameters to update. 

As showed in \cite{shuttleworth2025lora}, LoRA works best when applied to all weight matrices and small-to-medium datasets, and its optimal learning rate is largely independent of rank due to the $\frac{1}{r}$ scaling. Overall, LoRA performs similar to full-finetuning in the “low-regret regime” making it a parameter-efficient alternative for post-training adaptation. However, the product $\mB \mA$ introduces a specific scaling problem : we can multiply $\mA$ by a constant and divide $\mB$ by the same constant without changing the final output. This redundancy makes the training landscape ``flat" in certain directions, which is seemingly a natural obstacle for gradient based algorithms to succeed. And yet LoRA has proven strikingly effective in practice, dramatically reducing memory and computational overhead while retaining performance. Thus we posit that the  optimization dynamics of neural models under LoRA have remained unclear from a theoretical standpoint.

\begin{center}
    \textbf{Q2: }\emph{With no assumptions on data or width, can a implementation of LoRA on a net be trained by a stochastic algorithm?}
\end{center}

In  this work, {\em firstly} we make progress towards uncovering hitherto unknown mathematical properties about the attention map and thus uncover a first-of-its-kind provably convergent training mechanism for the query and key matrices. {\em Secondly,} this work also initiates a theoretical study of training dynamics for standard neural networks under LoRA parameterization --- which evidently shares a mathematical similarity of training a factorized weight parameterization as while training the key and query matrices in the attention, as introduced above.

Whether for training the key and query matrices of an attention head or for training standard nets under LoRA, we consider training through stochastic differential equations (SDEs), capturing the continuous-time limit of stochastic gradient methods commonly used in practice. {\em In both cases --- for any number of parameters and for any data --- we prove convergence of a risk function for the models under a SDE flow.}

% \note{need here text motivating the SGD training figure - emulate Weijie for it}

In Figure~\ref{fig:stats1}, we present motivational experimental evidence, that when training  our multi-head attention regressor for the task of time evolving solutions of the Darcy flow, via SGD, scaling down the step-length at scheduled epochs (dotted vertical lines) consistently improves the training loss across all three regularization regimes (unregularized, Frobenius, log-amplified Frobenius) --- at the cost of additional run-time, exactly the trade-off characterized by our main theorem..

% \note{the paragraph above is not good! so much of classical deep-learning reference is not needed and reference to Weijei is also not needed - we are just trying to say a very simple thing that there is experimental evidence that for regression tasks on attention, scaling down the step-length proportionately improves the training loss at the cost of run-time, exactly the characteristics as what the final theorem we show has - which we foreshadow here -- this should be barely 2-3 lines!}

\begin{figure}[h!]
    \centering
    \includegraphics[width=0.8\linewidth]{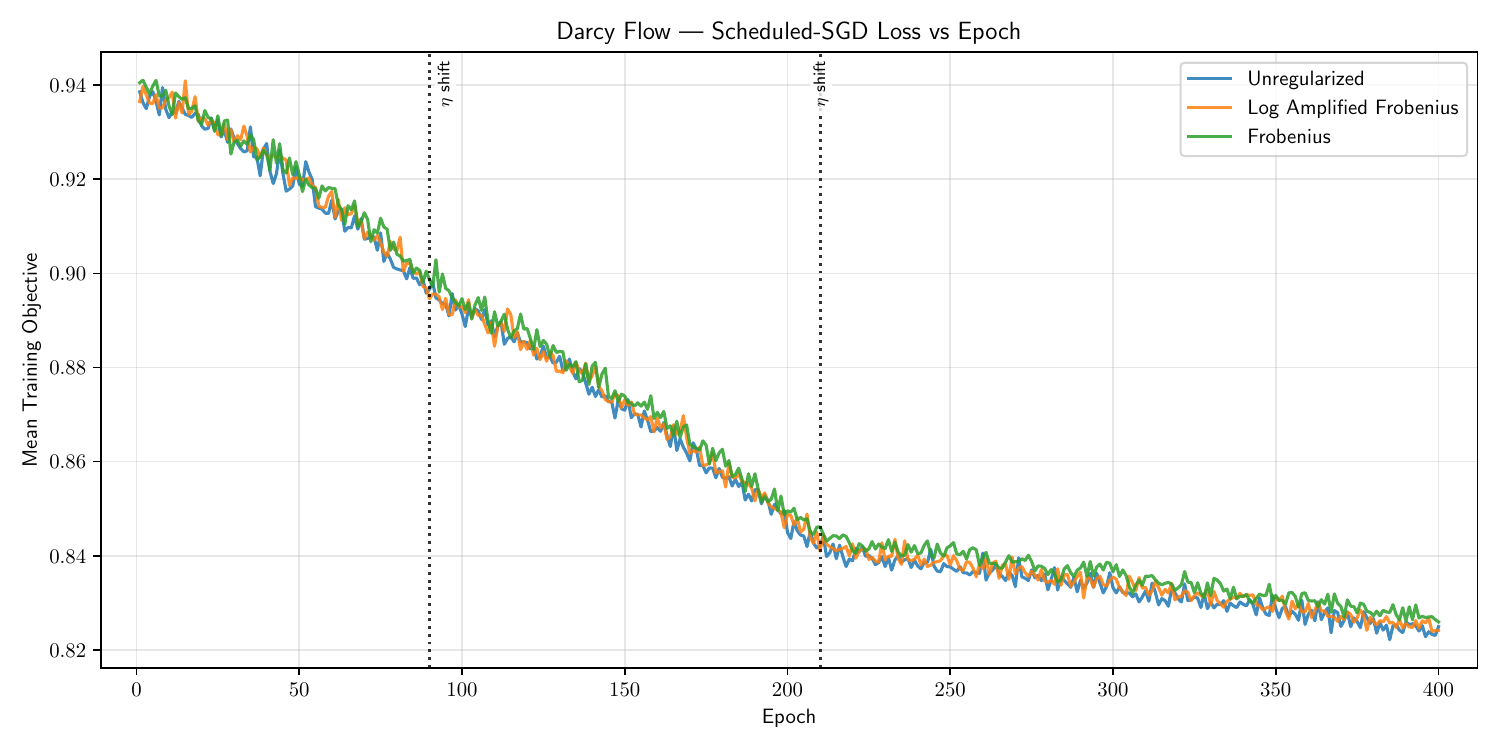}
    \caption{The figure shows how the  regression training error of a multi-head attention model (with 4 heads) facing the 2-D Darcy Flow dataset progresses when training algorithm is SGD with a mini-batch size of 32. The learning rate is scheduled as: $s=0.006$ for the first 90 epochs, $s=0.003$ for the subsequent 120 epochs, and $s=0.0005$ for the final 190 epochs. We see that the training error continues to decrease after each learning-rate reduction. At the same time, smaller learning rates lead to slower optimization dynamics, requiring more epochs for SGD to approach a plateau. In Section \ref{sec:exp} we return to this setup for further study.}
    \label{fig:stats1}
\end{figure}

\subsection{Summary of Results}

In this work, we establish that for both attention and depth-2 neural networks under LoRA, a mildly regularized regression loss (say $\tilde{V}$) is a Villani function (which will be precisely defined in the next section) --- which in turn implies that the corresponding Gibbs' measure ($\mu \sim e^{-\gamma \tilde{V}}$) satisfies the Poincare inequality i.e. for any $h \in H^1(\mu_s)$, $\Var_{\mu}(h) \leq C_{PI} \E_{\mu}[\norm{\grad h}^2]$. Then invoking recent results on isoperimetry based Stochastic Differential Equation (SDE) convergence \cite{shi2023villani} we can establish convergence in both settings for the following continuous-time stochastic gradient dynamics for the weights $\vT$, given by the SDE,  
\begin{align}
    \dd{\vT_t} = -\nabla \tilde{V}(\vT_t)\dd{t} + \sqrt{s} \dd{B_t}, \label{eq:sde}
\end{align}
where $\tilde{V}(\vT)$ denotes the regularized loss function, $s > 0$ is a temperature parameter and $(B_t)_{t \ge 0}$ is the Brownian motion.\footnote{The subscript $t$ denotes continuous time.} We provide an informal restatement of our main results below.

% \note{need to define Poincare constant} 

% \begin{theorem}[{\bf Informal Statement of Provable Learning for Attention-Based Regression}]\label{thm:informalattn}
% Consider a single attention layer with key and query matrices $\mW_K$ and $\mW_Q$, being trained using the $\ell_2$-loss function with either a logarithmically amplified $2-$norm regularization or a super-quadratic polynomial regularization. Then, for any arbitrarily low regularization, for any data and size of architecture, the loss function satisfies the Villani condition.

% As a consequence, for any $\varepsilon > 0$, there exists an appropriate step size $s = k \varepsilon$ for some $k > 0$ and a corresponding $\lambda_\varepsilon > 0$, inversely proportional to the \Poincare constant of the loss function, such that the SDE for $\mT = (\mW_Q, \mW_K)$ converges, in expectation, in ${\gO}\left (\frac{1}{\lambda_\varepsilon} \log \frac{1}{\varepsilon} \right)$ to within $\varepsilon$ of the global minimum of the training loss.
% \end{theorem}

\begin{theorem}[{\bf Informal Statement of Convergence for Attention-Based Regression}]
Consider a multi-head attention model with key and query matrices $\mW^{(j)}_K$ and $\mW^{(j)}_Q$ for $j \in [H]$ where $H$ is the number of heads, being trained using the $\ell_2$-loss with a certain class of radial regularizers. Then, for any number of heads, tokens, and query-key factor dimension, the loss function satisfies the Villani conditions, and thus the corresponding Gibbs measure $\mu_s$ satisfies the \Poincare inequality for some constant $C_{PI}$. And further, for a log-amplified-Frobenius norm regularization, the constant $C_{PI}$ has an explicit upperbound which is free of the head dimensions. 

%, but scales with the number of heads and the sequence length.
% Crucially, we show that the \Poincare constant is dimension-free for LoRA and for multi-head attention it is free of the embedding and head dimensions, but scales with the number of heads and the sequence length

\qquad Consequentially, for any $\varepsilon>0$, there exists a sufficiently small step size $s = O(\epsilon)$ such that the associated SDE for $\mT$ being the vector of weights for all the key and the query matrices, converges to within $\varepsilon$ of the global minimum of the regularized attention loss, in expectation, in\\ $O\left( \frac{1}{\epsilon} e^{\frac{1}{\epsilon}} \cdot \max\left( \log(\frac{2}{\epsilon}), \log(\frac{C(\epsilon)}{\epsilon}), \log(\frac{\norm{p_0 - \mu_\epsilon}_{L^2((\mu_\epsilon)^{-1})}}{\epsilon}) \right)\right)$ , where $p_0$ is the initial distribution of $\mT$ and $C(\epsilon)$ is some function of $\epsilon$.
\end{theorem}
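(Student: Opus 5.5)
The plan has three stages: verify the Villani conditions for the regularized loss $\tilde V(\mT) = \Ldata(\mT) + \lambda R(\norm{\mT})$, then bound the \Poincare constant of the Gibbs measure, and finally invoke the SDE convergence theorem of \cite{shi2023villani}.

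\textbf{Villani conditions.} These require that $\tilde V$ is smooth, tends to infinity as $\norm{\mT}\to\infty$, that $e^{-2\tilde V/s}$ is integrable for every $s>0$, and that
\[
\frac{\norm{\nabla \tilde V}^2}{s} - \Delta \tilde V \to \infty
\]
for every $s>0$.
\begin{itemize}
\item \emph{Smoothness.} The multi-head attention output $\sum_{j=1}^H \text{RowSoftMax}_\beta(\mX \mW_Q^{(j)} \mW_K^{(j)\top}\mX^\top)\mX\mW_V^{(j)}$ is smooth in $\mT$, so the squared loss $\Ldata$ is smooth.
\item \emph{Key structural fact.} Softmax outputs are bounded entries in $[0,1]$. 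Hence the model output is bounded uniformly in $\mT$, and $\Ldata$ is bounded.
\item \emph{Derivative bounds.} The derivative of softmax with respect to its logits is bounded. The logits depend bilinearly on $(\mW_Q^{(j)},\mW_K^{(j)})$, so $\norm{\nabla \Ldata(\mT)} = O(\norm{\mT})$ and $\norm{\nabla^2 \Ldata(\mT)} = O(1+\norm{\mT}^2)$. The constants depend on the data, $\beta$, and $\mW_V$, but the growth orders do not.
\end{itemize}

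\textbf{Choice of radial regularizer.} For a radial $R$ with $r = \norm{\mT}$ and $D$ the total parameter count, one has $\norm{\nabla R} = |R'(r)|$ and $\Delta R = R''(r) + \frac{D-1}{r}R'(r)$. I would choose $R$ to dominate the $O(r)$ gradient and $O(r^2)$ Laplacian of $\Ldata$. Two choices work:
\begin{itemize}
\item a super-quadratic polynomial $r^{2+\delta}$, where $|R'|^2 \sim r^{2+2\delta}$ beats $\Delta R \sim r^{\delta}$;
\item a log-amplified Frobenius term $r^2\log(1+r^2)$, where $|R'|^2 \sim r^2\log^2 r$ beats the cross terms $\langle \nabla\Ldata, \nabla R\rangle = O(r^2\log r)$ only by a $\log$ factor.
\end{itemize}
The expansion is $\norm{\nabla\tilde V}^2 \ge \lambda^2|R'|^2 - 2\lambda|R'|\,\norm{\nabla\Ldata}$. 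This shows the Villani expression diverges for any $\lambda>0$ and $s>0$, and integrability follows immediately. I expect this cross-term bookkeeping, especially the Hessian bound on the softmax composition, to be the main technical obstacle. I would first try a crude Lipschitz bound on the logits, $\norm{\mX}^2\norm{\mW_Q}\norm{\mW_K}$.

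\textbf{\Poincare constant.} Villani conditions give a \Poincare inequality for $\mu_s \propto e^{-2\tilde V/s}$ by \cite{shi2023villani}. For an explicit head-dimension-free $C_{PI}$, I would use a Lyapunov-function argument (Bakry--Barthe--Cattiaux--Guillin). Take $W = e^{\tilde V/s}$, so that $\mathcal L W/W = \frac{1}{s}\left(\frac{\Delta\tilde V}{1}\cdot\frac{s}{2} - \frac{\norm{\nabla\tilde V}^2}{2}\right)$ up to constants. This is $\le -\theta$ outside a ball of radius $r_0$. Inside the ball, I would use bounded perturbation (Holley--Stroock) of the uniform measure, whose \Poincare constant is $\propto r_0^2$.

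For the log-Frobenius choice, I would try to take $r_0$ as a threshold determined by $\lambda$, $s$, and the data-derived constants. The oscillation of $\Ldata$ on the ball is bounded independently of the head dimension because softmax is bounded. The hope is therefore that only $H$, $t$, and $\norm{\mX}$ enter the bound. I am unsure whether the $D/r$ term of the Laplacian leaks the dimension. If it does, I would handle it by making the regularizer scale absorb it, since $|R'|^2 \sim r^2\log^2 r$ eventually dominates $D\log r$ at a radius that grows only logarithmically with $D$.

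\textbf{Convergence rate.} Finally, I would quote the convergence theorem of \cite{shi2023villani} with $s=O(\epsilon)$. It bounds $\E\tilde V(\mT_t) - \min\tilde V$ by an $O(s\log(1/s))$ Gibbs bias plus an $L^2(\mu^{-1})$ decay term $e^{-t/C_{PI}}\norm{p_0-\mu_\epsilon}$. The worst-case $C_{PI} \lesssim \frac{1}{\epsilon}e^{1/\epsilon}$ then yields the stated time bound, with each of the three logarithmic terms making its respective contribution smaller than $\epsilon$.
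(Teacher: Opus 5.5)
Your Villani verification and the final appeal to \cite{shi2023villani} match the paper's argument in substance; regularizing the global radius instead of each head's $\norm{\vT_j}$ is harmless there. The genuine gap is the explicit, head-dimension-free bound on $C_{PI}$, which your Lyapunov/BBCG route cannot deliver.

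First, there is an arithmetic slip: $r^2\log^2 r\ge D\log r$ means $r^2\log r\ge D$. So the radius $r_0$ beyond which $\frac1s\norm{\nabla\tilde V}^2-\Delta\tilde V$ is bounded below needs $r_0^2\log(1+r_0^2)\gtrsim Ds/\lambda$. It therefore grows like $\sqrt D$ up to logarithms, not like $\log D$.

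Second, and decisively, Holley--Stroock against the uniform measure on $B(r_0)$ costs $\exp\bigl(\frac2s\operatorname{osc}_{B(r_0)}\tilde V\bigr)$. That oscillation contains the regularizer, not just $\Ldata$. It is at least $\frac{\lambda}{s}r_0^2\log(1+r_0^2)-\frac2s\operatorname{osc}(\Ldata)\gtrsim D$, so the local constant you obtain is $e^{\Omega(D)}$ with $D=2Hdr$. The drift constant $b$, governed by $\sup_{B(r_0)}W$ for $W=e^{\tilde V/s}$, is exponentially large as well. Letting $\lambda$ scale with $D$ changes the potential and still leaves $\frac{\lambda}{s}r_0^2\log(1+r_0^2)$ of order $D$.

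The obstruction is intrinsic. The virial identity shows that, under the regularizer's own Gibbs measure, the mean of $\frac{\lambda}{s}\norm{\mT}^2\bigl(\log(1+\norm{\mT}^2)+\frac{\norm{\mT}^2}{1+\norm{\mT}^2}\bigr)$ is exactly $D/2$. Hence comparing with a flat measure on any ball carrying the bulk of the mass costs $e^{\Theta(D)}$.

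The missing idea is to perturb the right reference measure, and to do so globally. You already isolated that $\Ldata$ is bounded everywhere, with $\operatorname{osc}(\Ldata)\le\frac12(B_y+H\sqrt t\,B_xB_w)^2$. So compare $\mu_s$ directly with the regularizer-only measure $\nu_s\propto\exp\bigl(-\frac{\lambda}{s}\norm{\mT}^2\log(1+\norm{\mT}^2)\bigr)$. For the paper's head-wise regularizer this is a product over heads of such measures on $\R^{2dr}$, with $\lambda$ replaced by $\lambda/H$, handled by tensorization.

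The measure $\nu_s$ is spherically symmetric and log-concave, so \cite{bonnefont2016spectral} gives $C_{PI}(\nu_s)\le\E_{\nu_s}\norm{\mT}^2/(D-1)$. The same virial identity gives $\E_{\nu_s}\norm{\mT}^2\le 1+\frac{Ds}{2\lambda\log 2}$, so the factor $D$ cancels. Holley--Stroock then yields $C_{PI}(\mu_s)\le e^{2\operatorname{osc}(\Ldata)/s}\bigl(1+\frac{s}{\lambda\log 2}\bigr)$, with no Lyapunov function needed.

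A minor point: with a Gibbs bias of order $s\log(1/s)$ you would need $s=O(\epsilon/\log(1/\epsilon))$. The paper uses the linear bias $\varepsilon(s)\le As$ from Proposition~5 of \cite{shi2023villani}, which is what gives $s=O(\epsilon)$.
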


% \note{no need to define T explicitly in the informal theorems}

The above informal restatement combines Theorem~\ref{thm:mh-general-profile-villani} with the relevant part of Theorem~\ref{thm:bpc-attention-poincare-bound}.

% \begin{theorem}[{\bf Informal Statement of Provable Learning for Depth-2 Neural Net Based Regression under LoRA}]\label{thm:informalnn}
% Consider a depth-2 neural network with weight matrix $\mW$ factorized as $\mU\mV$, being trained using the $\ell_2$-loss function with either a logarithmically amplified $2-$norm regularization or a super-quadratic polynomial regularization. Then, for any arbitrarily low regularization, for any data and size of architecture, the loss function satisfies the Villani condition.

% As a consequence, for any $\varepsilon > 0$, there exists an appropriate step size $s$ and a corresponding $\lambda_\varepsilon > 0$and a corresponding $\lambda_\varepsilon > 0$, inversely proportional to the \Poincare constant of the loss function, such that the SDE for $\mT = (\mU, \mV)$ converges, in expectation, in ${\gO}\left (\frac{1}{\lambda_\varepsilon} \log \frac{1}{\varepsilon} \right)$ to within $\varepsilon$ of the global minimum of the training loss.
% \end{theorem}

\begin{theorem}[{\bf Informal Statement of Convergence for Depth-2 Neural Net Based Regression under LoRA}]
Consider a depth-2 neural network trained under the LoRA parameterization and $\ell_2$-loss with a certain class of radial regularizers. Then, for any fixed base matrix, scaling parameter, and an arbitrarily small regularization, the loss function satisfies the Villani conditions, and thus the corresponding Gibbs measure $\mu_s$ satisfies the \Poincare inequality for some constant $C_{PI}$. And further, for a log-amplified-Frobenius norm regularization, the constant $C_{PI}$ has an explicit upperbound which is free of the data dimension.

\qquad Consequentially, for any $\varepsilon>0$, there exists a sufficiently small step size $s = O(\epsilon)$ such that the associated SDE for $\mT=(\mU,\mV)$ converges to within $\varepsilon$ of the global minimum of the regularized training loss in expectation, in $O\left( \frac{1}{\epsilon} e^{\frac{1}{\epsilon}} \cdot \max\left( \log(\frac{2}{\epsilon}), \log(\frac{C(\epsilon)}{\epsilon}), \log(\frac{\norm{p_0 - \mu_\epsilon}_{L^2((\mu_\epsilon)^{-1})}}{\epsilon}) \right)\right)$, where $p_0$ is the initial distribution of $\mT$ and $C(\epsilon)$ is some function of $\epsilon$.
\end{theorem}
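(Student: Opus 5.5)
We prove the statement in three stages: establish the Villani conditions for the regularized LoRA loss, extract an explicit \Poincare constant for the log-amplified Frobenius regularizer, and then invoke the SDE convergence theorem of \cite{shi2023villani}.

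\textbf{Setup.} Write the regularized objective as $\tilde V(\mT)=\Ldata(\mU,\mV)+\lambda\,\mathcal{R}(\norm{\mT}^2)$. Here $\mT=(\mU,\mV)$ and $\Ldata$ is the empirical $\ell_2$ loss of the depth-2 net $x\mapsto \va^\top\sigma((\mW_{\rm pre}+\tfrac{\alpha}{r}\mU\mV)x)$. The regularizer $\mathcal{R}$ is radial, e.g.\ $\mathcal{R}(u)=u\log(1+u)$ or a super-quadratic polynomial.

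\textbf{Step 1: Villani conditions.} Recall that $\tilde V$ is Villani if it is smooth, $e^{-2\tilde V/s}$ is integrable for every $s>0$, $\tilde V\to\infty$ and $\norm{\nabla\tilde V}\to\infty$ at infinity, and $\norm{\nabla \tilde V}^2/s-\Delta\tilde V\to\infty$ at infinity. We check these using a few elementary estimates, assuming $\sigma$ is smooth with bounded $\sigma,\sigma',\sigma''$.
\begin{itemize}
\item Smoothness is immediate.
\item Since $\Ldata\ge 0$, coercivity and integrability follow from the regularizer alone.
\item The loss gradient satisfies $\norm{\nabla_\mU\Ldata}\lesssim \norm{\mV}$ and $\norm{\nabla_\mV\Ldata}\lesssim\norm{\mU}$, with constants depending on the data, $\va$, $\alpha/r$ and $\mW_{\rm pre}$. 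Hence $\norm{\nabla\Ldata}\le c_1\norm{\mT}$.
\item The regularizer gradient $2\lambda\mathcal{R}'(\norm{\mT}^2)\mT$ grows strictly faster than linearly, of order $\norm{\mT}\log\norm{\mT}$.
\item By the triangle inequality, $\norm{\nabla\tilde V}\ge \lambda\norm{\mT}\log\norm{\mT}-c_1\norm{\mT}\to\infty$, however small $\lambda$ is.
\item For the Laplacian, $\Delta\Ldata$ is bounded by $c_2\norm{\mT}^2$, since the second derivatives of the composition pick up the factors $\mU,\mV$ twice. The regularizer contributes $\Delta\mathcal{R}=O(D\log\norm{\mT})$, where $D$ is the parameter count.
\item Therefore $\norm{\nabla\tilde V}^2/s-\Delta\tilde V\gtrsim \lambda^2\norm{\mT}^2\log^2\norm{\mT}/s - c_2\norm{\mT}^2-\dots\to\infty$.
\end{itemize}

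\textbf{Step 2: Poincar\'e inequality and an explicit constant.} The Villani conditions give the \Poincare inequality for $\mu_s\propto e^{-2\tilde V/s}$ by the results quoted earlier. For the explicit constant we use a Lyapunov-function argument of Bakry--Barthe--Cattiaux--Guillin type.
\begin{itemize}
\item Take $W=e^{\tilde V/s}$, or a radial function. Then $LW/W=\norm{\nabla\tilde V}^2/s^2\cdot(\ldots)-\Delta\tilde V/s\le -\theta$ outside a ball $B_R$.
\item Combine this with a bound on the \Poincare constant of $\mu_s$ restricted to $B_R$, obtained by Holley--Stroock from the oscillation of $\tilde V$ on $B_R$.
\end{itemize}
The key point for dimension-freeness is that the data enter only through $\mU\mV x$. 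Hence the constants $c_1,c_2$ involve data norms, which can be normalized, rather than the ambient dimension $d_{\rm in}$. The radius $R$ needed for the drift condition then depends on $\lambda$, $s$ and these constants.

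\textbf{Main obstacle.} The hardest step is Step 2: making $R$, and hence the Holley--Stroock oscillation bound, independent of $d_{\rm in}$. The difficulty is that the Laplacian of the regularizer scales with the parameter count $D=r(d_{\rm in}+d_{\rm out})$. I would first try to absorb this linear-in-$D$ term using the log amplification, which gives $\norm{\nabla\tilde V}^2\sim\norm{\mT}^2\log^2\norm{\mT}$. This should push the threshold to $R$ with $\log R$ of order $\log D$. I would then argue that this dependence enters the final bound only through quantities that are free of the data dimension.

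\textbf{Step 3: SDE convergence.} Finally, plug the Villani property and $C_{PI}$ into the main theorem of \cite{shi2023villani}. Choosing $s=O(\epsilon)$ makes $\E_{\mu_s}\tilde V$ lie within $\epsilon/2$ of $\min\tilde V$ by a Laplace-type estimate. The exponential $L^2$ convergence at rate $1/C_{PI}$, with $C_{PI}\lesssim \frac1\epsilon e^{1/\epsilon}$, then yields the stated time bound with the three logarithmic terms.
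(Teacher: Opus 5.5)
Your Steps 1 and 3 match the paper. The data term satisfies $\norm{\nabla\Ldata}\le c_1\norm{\mT}$ and $|\Delta\Ldata|\le c_2\norm{\mT}^2$, the superlinear radial regularizer dominates both, and the run time then follows from Propositions 4--5 and Corollary 6 of \cite{shi2023villani}. (There the rate is $\lambda_s=s/(2C_{PI})$, so the $1/\epsilon$ prefactor comes from $s$ and $C_{PI}$ itself is $O(e^{c/\epsilon})$.) The genuine gap is Step 2, the explicit Poincar\'e constant free of the data dimension, and your route cannot deliver it.

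Whether you take $W=e^{\tilde V/s}$ or a radial $W$, the drift condition outside $B_R$ requires the regularizer's squared gradient, of order $\lambda^2R^2\log^2(1+R^2)/s$, to beat its Laplacian, of order $\lambda(D-1)\log(1+R^2)$. That is, you need $\lambda R^2\log(1+R^2)\gtrsim (D-1)s$. The data contributions only push $R$ further out, and this threshold is essentially forced, being the radius where the Gibbs measure concentrates. But then $\operatorname{osc}_{B_R}\tilde V\ge\frac{\lambda}{2}R^2\log(1+R^2)-\frac12E^2\gtrsim (D-1)s/2-\frac12E^2$. So the Holley--Stroock factor $\exp(2\operatorname{osc}_{B_R}\tilde V/s)$ is $e^{\Omega(D)}$ with $D=(p+d)r$, i.e.\ exponential in the data dimension. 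Having $\log R$ of order $\log D$ does not help, because $R^2$ itself enters the oscillation. Your stated mechanism (constants $c_1,c_2$, data entering through $\mU\mV^\top\vx$) plays no role in the paper's bound.

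The missing idea is to perturb only the data term, never the regularizer, and to use that the regularizer-only measure is radial and log-concave.
\begin{itemize}
\item Because $\sigma$ is bounded, $0\le\Ldata\le\frac12E^2$ uniformly in $\mT$, where $E=B_y+B_aB_\sigma\sqrt p$ is independent of $d$, $\mW_0$ and $\alpha$.
\item Take the reference measure $\nu_s\propto e^{-U}$ with $U=\frac{\lambda}{s}\norm{\mT}^2\log(1+\norm{\mT}^2)$. Its Hessian is PSD, so Theorem 1.2 of \cite{bonnefont2016spectral} gives $C_{PI}(\nu_s)\le\E_{\nu_s}\norm{\mT}^2/(D-1)$.
\item Integrating $\nabla\cdot(\mT\,e^{-U})$ over $\R^D$ gives the virial identity $D=\frac{2\lambda}{s}\E_{\nu_s}\bigl[\norm{\mT}^2\bigl(\log(1+\norm{\mT}^2)+\frac{\norm{\mT}^2}{1+\norm{\mT}^2}\bigr)\bigr]$. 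Hence $\E_{\nu_s}\norm{\mT}^2\le 1+\frac{Ds}{2\lambda\log 2}$.
\item The factor $D$ now cancels, giving $C_{PI}(\nu_s)\le 1+\frac{s}{\lambda\log 2}$.
\item A global Holley--Stroock step with $\operatorname{osc}(\Ldata)\le\frac12E^2$ then yields $C_{PI}(\mu_s)\le e^{E^2/s}\bigl(1+\frac{s}{\lambda\log 2}\bigr)$.
\end{itemize}
The dimension-freeness comes from the linear-in-$D$ second moment being exactly compensated by the $D-1$ in the radial log-concave spectral gap. Nothing in your plan plays that role.
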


The above informal restatement combines Theorems~\ref{thm:lora-general-profile-villani} with the relevant part of Theorem~\ref{cor:bpc-lora-poincare-bound}.

\subsection{Literature Review}

A theoretical analysis of transformers in the infinite-width limit by deriving their Neural Network Gaussian Process (NNGP) and Neural Tangent Kernel (NTK) equivalents was provided in \cite{hron2020inf}. They showed that self-attention layers admit well-defined kernel limit and in this regime, gradient descent training of a Transformer is equivalent to kernel regression with the corresponding NTK.

A global convergence framework for transformers by analyzing training dynamics in the mean-field limit was established in \cite{gao2024global}. By treating model width and depth as approaching infinity, the authors demonstrate that discrete gradient descent converges to a Wasserstein gradient flow on the distribution of parameters. Albeit the use of an infinite-width limit it strictly requires the inclusion of a weight decay parameter $\lambda>0$, for the convergence proof to work.

A formal proof that transformers can converge to the functional behavior of near-optimal Reinforcement Learning (RL) algorithms through the optimization of a log-likelihood objective was given by \cite{lin2023transformers}. By viewing the attention mechanism as an iterative optimizer, they demonstrate that supervised pre-training on offline trajectories allows the model to implement algorithms like LinUCB and Thompson Sampling directly. However, these convergence guarantees rely on non-standard architectures, most notably the use of ReLU-based attention to facilitate exact linear algebraic operations. Furthermore, the authors assume model realizability, implying that the Transformer’s capacity must be sufficient to encapsulate the expert's decision-making logic.

A multi-layer transformer trained on $n$-gram data, where each token depends on the preceding $n$ tokens, is analyzed in \cite{chen2024inductionheads}, and it is shown that gradient flow converges to a model exhibiting induction head behavior. Specifically, induction here refers to the phenomenon whereby, if a token at position $i$ matches a previous occurrence at position $j$, the model attends to the token following position $j$ to predict the next token at position $i$. Their results provide a rigorous characterization of how attention layers, feed-forward networks, and normalization interact to learn features from context. This work advances prior studies that focused on linear or single-layer models by handling richer architectures and more realistic data distributions.

%\cite{sitan2025provably} moves beyond the traditional assumption where it is assumed that every coordinate of $X$ is sampled from a Gaussian.

Fundamental algorithmic limits of Multi-Head Attention (MHA) restricted to a discrete Boolean input distribution $X \in \{\pm 1\}^{k \times d}$ was established in \cite{sitan2025provably}. Theorem 1.2 in \cite{sitan2025provably} establishes that, under a non-degeneracy condition on the attention and projection matrices and realizability assumption of the samples, there exists an algorithm that for $m-$headed attention estimates the parameters in $(kd)^{O(m^3)}$ time, using $(kd)^{\Theta(m)}$ samples, and achieves predictions that are $(kd)^{-\Omega(m)}$ close to the true values in expectation. This result identifies the number of heads $m$ as the dominant factor in computational scaling.

\subsubsection{Review of Existing Attempts at Provable Training of LoRA on Neural Networks --- with Weight Regularization}

LoRA was first introduced by \cite{hu2022lora} where it was asserted that the weight updates for task-specific adaptation in attention based models reside in a manifold of low intrinsic dimension \cite{aghajanyan2021intrinsic}. By reparameterizing the update matrix $\Delta \mW$ as the product of two low-rank matrices $\mA$ and $\mB$, \cite{hu2022lora} demonstrated that optimization can converge to high-performance solutions with significantly fewer trainable parameters. Crucially, their initialization strategy --- setting one matrix to zero --- ensures a stable starting point at the pre-trained state, effectively bridging the gap between training efficiency and the convergence stability typically observed in full fine-tuning.

A rigorous analysis of LoRA on neural networks in the generic non-linear regime was given by \cite{kim2025lora}, establishing a formal dichotomy between global convergence and parameter divergence. By characterizing the optimization as governed by a global Restricted Strong Convexity, they prove that the combination of zero-initialization and weight decay induces an implicit bias toward a low-rank global minimum. Crucially, they demonstrate that while the non-linear landscape may harbor spurious local minima, these points are spectrally isolated in high-rank regions and do not intersect with the stable optimization trajectory.

In the Neural Tangent Kernel (NTK) framework LoRA was analyzed by \cite{jang2024lora}, where the neural network's optimization can be treated as a linearized system. In this regime, the training dynamics are governed by a quadratic objective subject to a low-rank structural constraint. Alongside a non-standard regularizer --- derived from the Rademacher complexity of the low-rank bottleneck --- the authors prove that the non-convex BA reparameterization does not introduce spurious local minima. \cite{jang2024lora} further demonstrates, if the rank is above a certain threshold, gradient-based methods like stochastic gradient descent (SGD) converge to a low-rank global minimizer.

{\bf Limitations of LoRA} It was demonstrated by \cite{shuttleworth2025lora} that the perceived equivalence between LoRA and full fine-tuning is an ``illusion" maintained by surface-level metrics. Despite its parameter efficiency, LoRA can lead to catastrophic forgetting, thereby degrading performance in settings that require continual learning.

% \note{https://arxiv.org/abs/2410.21228 --- counter-argument to LoRA} 

% \note{
% {\bf LATER!}
% \paragraph{\bf Review \cite{zeng2024expressive}}  studied the expressive power of LoRA, asking if the $BA$ update is strong enough to mimic any desired target model.}

\subsubsection{Organization}

Section~\ref{sec:mathsetup} introduces the analytic framework, including the underlying conditions, neural architecture, loss functions, and the assumptions required for the subsequent sections. Section~\ref{sec:mainresults} presents the main results of the paper, namely Theorem~\ref{thm:mh-general-profile-villani} and \ref{thm:lora-general-profile-villani} where we prove that multi-head attention and depth-2 neural nets with LoRA regression losses satisfy the Villani conditions for a class of regularizers. Next, we state in Theorems~\ref{thm:bpc-attention-poincare-bound} and \ref{cor:bpc-lora-poincare-bound} the SDE convergence of certain log-amplified-Frobenius norm regularized losses for both cases --- in terms of the dimension independent bounds that we obtain on the corresponding \Poincare constants, from our Lemma~\ref{lem:bpc-general-poincare-bound}. The proof of Theorem~\ref{thm:mh-general-profile-villani}, along with the requisite auxiliary lemmas, is provided in Section~\ref{sec:proofattnreg}. Section~\ref{sec:exp} presents experimental results on solving the 2D Darcy flow problem using the regularized loss functions introduced. 

Appendix~\ref{app:conv_proof} presents the proof of Theorems~\ref{thm:bpc-attention-poincare-bound} \& \ref{cor:bpc-lora-poincare-bound} and Lemma~\ref{lem:bpc-general-poincare-bound} is proved in Appendix~\ref{sec:BPC}. Appendix~\ref{app:multihead-general-profile} has the deferred proofs from Section~\ref{sec:proofattnreg}.
The proof Theorem~\ref{thm:lora-general-profile-villani} is contained in Appendix~\ref{app:lora-villani}.
% \note{lots of broken links above}

%with the corresponding supporting lemmas established in Appendix~\ref{app:subsec:loralemm}
\section{Mathematical Setup}\label{sec:mathsetup}

In this section, we define the analytic conditions, architectures, and loss functions, alongside the core assumptions that underpin our subsequent analysis.

The Villani condition was introduced in \cite{villani2009hypocoercivity} to guarantee that, when a function satisfies this condition, the associated Gibbs measure satisfies the \Poincare inequality. We recall that a distribution $\pi$ is said to satisfy the \Poincare inequality for some constant $C_{PI}$, if for all smooth functions $h : \R^d \to \R$, $\Var_\pi(h) \leq C_{PI} \E_\pi[\norm{\grad h}^2]$. \cite{shi2023villani} leverage the \Poincare inequality induced by the Villani condition to establish convergence results for certain stochastic differential equations (SDEs). We now proceed to formally define the corresponding analytic conditions.

\begin{definition}[{\bf Villani Condition}]\label{def:villani}
A function $f$ is said to satisfy the Villani condition if it satisfies the following conditions,
\begin{inparaenum}
    \item $f \in C^2$,
    \item $\lim_{\|\vx\|\to+\infty} f(\vx) = +\infty$,
    \item ${\displaystyle \int_{\R^d}} e^{-\frac{2 f(\vx)}{s}}\dd{\vx} < \infty ~\forall s >0$, and
    \item $\forall s>0$,   $\frac{ \|\nabla f(\vx)\|^2}{s} - \Delta f(\vx) \rightarrow \infty$ as $\|\vx\| \to \infty$.
\end{inparaenum}

\end{definition}

In recent works \cite{gopalani2024globallog, gopalani2025global}, it was shown that the Villani condition holds for depth-2 neural networks of arbitrary width and for both squared and logistic losses. Building on this, \cite{kumar2025lmclearn} proved that Langevin Monte Carlo consequently achieves population risk minimization. {In contrast to \cite{gopalani2024globallog, gopalani2025global, kumar2025lmclearn}, our convergence results here hold for any value of the regularization parameter.}

Next we formally define the attention model and its regression loss that we choose to train. 

\begin{definition}[{\bf Multi-Head Key-Query Attention Model}]\label{def:mh-kq-attn}
    Let $H \in \N$ denote the number of attention heads. For each head $j \in \{1,\dots,H\}$, let $\mW_Q^{(j)}, \mW_K^{(j)} \in \R^{d \times r}$.
    Then, for the training data $\{(\mX_i,\mY_i) \in \R^{t \times d} \times \R^{t \times d} \}_{i=1}^n$, the attention model is defined by,
    \begin{align}
    \mM_i^{(j)}(\mW_Q, \mW_K)
    &\coloneqq
    \frac{1}{\sqrt d}
    \mX_i\mW_Q^{(j)}(\mW_K^{(j)})^\top\mX_i^\top,
    \label{eq:mh-score-matrix} \in \R^{t \times t}\\
    \mS_i^{(j)}(\mW_Q, \mW_K)
    &\coloneqq
    \operatorname{RowSoftMax}_\beta\!\left(\mM_i^{(j)}\right) \in \R^{t \times t},
    \label{eq:mh-softmax-matrix}\\
    \R^{t \times d} \ni \mX_i \mapsto \widehat{\mY}_i(\mW_Q, \mW_K)
    &\coloneqq
    \left(\sum_{j=1}^H\mS_i^{(j)}\right)\mX_i\mW_v \in \R^{t \times d}.
    \label{eq:mh-output}
    \end{align}
    where $\mW_v \in \R^{t \times t}$ is the fixed value matrix with $\norm{\mW_v}_F \leq B_w$.
\end{definition}

\begin{definition}[{\bf Multi-Head Attention Loss with General Head-Wise Radial Regularization}]\label{def:mh-general-profile-potential}
Let $\widehat{\mY}_i$ be as defined in Definition~\ref{def:mh-kq-attn}. Then, for the training data $\{(\mX_i,\mY_i) \in \R^{t \times d} \times \R^{t \times d} \}_{i=1}^n$, the non-regularized loss $\widehat R_{A,H}$ is defined as,
\begin{align}
\ell_i(\mW_Q, \mW_K)
&\coloneqq
\frac12\lVert\mY_i-\widehat{\mY}_i(\mW_Q, \mW_K)\rVert_F^2,
\qquad
\widehat R_{A,H}(\mW_Q, \mW_K)
\coloneqq
\frac1n\sum_{i=1}^n\ell_i(\mW_Q, \mW_K).
\label{eq:mh-data-loss}
\end{align}
For a radial potential $\phi:[0,\infty)\rightarrow\mathbb R$ and $\lambda>0$, we define
\begin{equation}
\widetilde V_{H,\mathrm{ATT}}(\mW_Q, \mW_K)
\coloneqq
\widehat R_{A,H}(\mW_Q, \mW_K)
+
R_{H}(\mW_Q, \mW_K),
\quad
R_{H}(\mW_Q, \mW_K)
\coloneqq
\frac{\lambda}{2H}\sum_{j=1}^H\phi\!\left(\Bigl(\norm{\mW_Q^{(j)}}_F^2 + \norm{\mW_K^{(j)}}_F^2\Bigr)^{1/2}\right),
\label{eq:mh-general-potential}
\end{equation}
where $\phi$ is an arbitrary radial regularization potential --- specific choices of the form $\phi(r)=r^2h(r)$, including the logarithmic and polynomial regularizers are considered below.
\end{definition}

For further intuition, we provide Figures~\ref{fig:head} and \ref{fig:mhead} in Appendix~\ref{sec:diag}, illustrating the single-head and multi-head attention mechanisms of above, respectively.

% \note{refer to Appendix \ref{sec:diag} here}

A notable application of this framework is the Vision Transformer (ViT) \cite{dosovitskiy2021vit}. In ViT, the input image is first divided into non-overlapping patches, commonly of size \(16 \times 16\) pixels. Each patch is flattened and projected into a \(d\)-dimensional embedding, producing a sequence of \(t\) token embeddings that serve as input to the attention layers described above. This allows the model to capture long-range dependencies across the image while leveraging the same attention mechanism as in general Transformer architectures. The row-wise softmax scaling parameter \(\beta\) is typically set to \(1\).

For the case of shallow neural networks, we assume a training in a space of weight matrices with a rank bound i.e we assume the trainable weight to be factorizable as, $\mW = \mU\mV^\top$, with $\mU \in \R^{p \times r}$ and $\mV \in \R^{d \times r}$. This factorization implements low-rank adaptation (LoRA) approach for depth-2 nets --- and we recall that this has been previously studied in \cite{jang2024lora, kim2025lora} as a theoretical sandbox for the LoRA technique. As in the attention-based setting, regularization of the factor matrices is introduced to ensure well-behaved potentials in the weight space. Similar to the attention training setup above, we define two types of factor-regularized loss functions as follows,
%: a non-polynomial logarithmic regularization and a polynomial regularization with exponent $2+\epsilon$. The following definitions formalize these regularized potentials and their associated mean square losses.

% \begin{definition}[{\bf Rank-Restricted Mean Square Loss on Shallow Nets with Non-Polynomial Factor-Regularization}] \label{def:v_log}
% We define the potential $\tilde{V}(\vT)$ in the factor space $\R^D$ for $\vT = (\mU, \mV)$ and $D = (p+d)r$ as,  
% \begin{equation}
% \tilde{V}(\vT) \coloneqq \Ldata(\vT) + \frac{\lambda}{2} (\|\mU\|_F^2 + \|\mV\|_F^2)\log \left ( 1 + \|\mU\|_F^2 + \|\mV\|_F^2 \right )
% \end{equation}
% where $\gL(\mT) \coloneqq \frac{1}{n} \sum_{i=1}^n \ell_i(\mU\mV^\top)$ and $\ell_i(\mW) \coloneqq \frac{1}{2}(y_i - \va^\top \sigma(\mW \vx_i))^2$.
% \end{definition}

% \begin{definition}[{\bf Rank-Restricted Mean Square Loss on Shallow Nets with Polynomial Factor-Regularization}] \label{def:v_2+eps}
% We define the potential $\Vcorr(\vT)$ in the factor space $\R^D$ for $\vT = (\mU, \mV)$ and $D = (p+d)r$ as,  
% \begin{equation}
% \Vcorr(\vT) = \Ldata(\vT) + \frac{\lambda}{2} (\|\mU\|_F^{2+\epsilon} + \|\mV\|_F^{2+\epsilon})
% \end{equation}
% where $\gL(\mT) \coloneqq \frac{1}{n} \sum_{i=1}^n \ell_i(\mU\mV^\top)$ and $\ell_i(\mW) \coloneqq \frac{1}{2}(y_i -\va^\top \sigma( \mW \vx_i))^2$.
% \end{definition}

We consider the following affine LoRA parameterization of the trainable weight matrix in the shallow-net setting. 

% The base matrix $\mW_0\in\R^{p\times d}$ and the scaling parameter $\alpha\in\R$ are fixed, whereas the factor matrices $\mU\in\R^{p\times r}$ and $\mV\in\R^{d\times r}$ are trainable.

\begin{definition}[{\bf Shallow-Net LoRA Parameterization}] \label{def:shallow-lora-param}
    The base matrix $\mW_0\in\R^{p\times d}$ and the scaling parameter $\alpha\in\R$ are fixed. Corresponding to trainable factor matrices $\mU\in\R^{p\times r}$ and $\mV\in\R^{d\times r}$ we define
    the LoRA-adapted weight matrix as, 
    \begin{equation} 
        \mW(\mU, \mV) \coloneqq \mW_0+\frac{\alpha}{r}\mU\mV^\top. \label{eq:shallow-lora-weight} 
    \end{equation} 
    For the training data $\{(\vx_i,y_i)\in \R^d \times \R \}_{i=1}^n$, the corresponding shallow-network prediction is 
    \begin{equation} 
        \R^d \ni \vx_i \mapsto \widehat y_i(\mU, \mV) \coloneqq \va^\top\sigma\!\left(\mW(\mU, \mV)\vx_i\right) \in \R, \label{eq:shallow-lora-prediction} 
    \end{equation}
    where $\norm{\va} \le B_a$ and $\norm{\mW_0}_F\le B_0$.
\end{definition}

\begin{definition}[{\bf Mean Square Loss on Shallow Nets Under LoRA with General Radial Factor-Regularization}] \label{def:lora-general-radial-potential} 
Let $\widehat y_i(\mU, \mV)$ be as defined in Definition~\ref{def:shallow-lora-param}. Then the individual and empirical data losses are defined by, 
\begin{equation} 
\ell_i(\mU, \mV) \coloneqq \frac12\left(y_i-\widehat y_i(\mU, \mV)\right)^2, \qquad \Ldata(\mU, \mV) \coloneqq \frac1n\sum_{i=1}^n\ell_i(\mU, \mV). \label{eq:lora-general-data-loss} 
\end{equation} 
For a radial potential $\phi:[0,\infty)\to\R$ and $\lambda>0$, we define 
\begin{equation} 
\widetilde V_{\mathrm{LoRA}}(\mU, \mV) \coloneqq \Ldata(\mU, \mV)+R_{\mathrm{LoRA}}(\mU, \mV), \qquad R_{\mathrm{LoRA}}(\mU, \mV) \coloneqq \frac{\lambda}{2}\phi \left (\left( \norm{\mU}_F^2 + \norm{\mV}_F^2 \right)^{1/2} \right ), \label{eq:lora-general-radial-potential} 
\end{equation}
where $\phi$ is an arbitrary radial regularization potential --- specific choices of the form $\phi(r) = r^2 h(r)$, including the logarithmic and polynomial regularizers are considered below.
\end{definition}

To establish convergence results for the factorized, regularized losses defined above, we impose a few standard assumptions on the network and define certain properties of the training data. 

% These assumptions ensure that the gradients and higher-order derivatives remain controlled and that the training data lies in a bounded domain.

% \note{give a definition of the training data and the wight bounded. (They are not assumption)}

\begin{definition}[{\bf Training Data Bounds}] \label{as2}
The training data is bounded as follows:
\begin{enumerate}
    \item For Attention : Each training example $(\mX_i, \mY_i) \in \mathbb{R}^{t \times d} \times \mathbb{R}^{t \times d}$ satisfies $\|\mX_i\|_F\leq B_x,\norm{\mY_i}_F\leq B_y, i=1,\ldots,n$. 
    \item For Neural Network : Each training example $(\boldsymbol{x}_i, y_i) \in \mathbb{R}^d \times \mathbb{R}$ satisfies $\|\boldsymbol{x}_i\|_2\leq B_x,|y_i|\leq B_y, i=1,\ldots,n$. 
\end{enumerate}
Depending on the architecture, the bounds $B_x$ and $B_y$ are interpreted according to the corresponding definitions given above.
\end{definition}

\begin{assumption}[{\bf Activation Function Bounds}] \label{as1}
We assume that $\sigma$, $\sigma'$ and $\sigma''$ are bounded as $|\sigma(u)|\le B_\sigma$, $|\sigma'(u)|\le L_\sigma$ and $|\sigma''(u)|\le L_\sigma'$ for all $u\in\R$, respectively. 
\end{assumption}

\begin{remark}
We note that the following standard activations satisfy Assumption \ref{as1}:  \begin{inparaenum}
    \item Sigmoid,
    \item Tanh,
    \item $\arctan$,
    \item $\sin$ and $\cos$.
\end{inparaenum} --- the explicit constants $B_\sigma, L_\sigma, L_\sigma'$ given in Table~\ref{tab:actlora} in Appendix~\ref{app:subsec:actlora}. And these bounds hold for all $u \in \R$ so the constants $B_\sigma, L_\sigma, L_\sigma'$ can be taken uniformly over $u \in \R$.
\end{remark}

\begin{assumption}[{\bf Villani Admissible Radial Profile}]
\label{ass:lora-admissible-profile}\label{ass:mh-admissible-profile}
For the $\phi$ defined in Definition~\ref{def:lora-general-radial-potential}. Assume:
\begin{enumerate}
\item[(P1)] \hspace{6.00em} $\phi\in C^2([0,\infty)), \qquad \phi'(0)=0.$

\item[(P2)] \hspace{6.00em} $\lim_{a\to\infty}\frac{\phi'(a)}{a}=+\infty.$

\item[(P3)] \hspace{6.00em} $\lim_{a\to\infty} \frac{[\phi''(a)]_+}{\phi'(a)^2}=0, \qquad [b]_+\coloneqq\max\{b,0\}.$
\end{enumerate}
\end{assumption}

\begin{remark}
\label{rem:mh-admissible-examples}
Assumption~\ref{ass:mh-admissible-profile} is satisfied by several simple
regularization profiles of direct interest. In particular, the logarithmic profile $\phi_{\log}(a)=a^2\log(1+a^2)$, and, for every $\varepsilon>0$, the polynomial profile $\phi_{\varepsilon}(a)=a^{2+\varepsilon}$, are both admissible. These two examples are verified explicitly in Corollaries~\ref{cor:mh-log-profile-standalone} and~\ref{cor:mh-joint-polynomial}, respectively.
\end{remark}

Now we have assembled all the components to state the global convergence of the SDE \eqref{eq:sde} for the regularized loss function of depth-2 nets under LoRA constraints and the attention-based model. The SDE is modeled over the parameter space $\vT$, which denotes the trainable parameters of the corresponding neural network or attention model. The corresponding invariant Gibbs measure $\mu_s(\dd{\vT})$ is defined as
\begin{equation} \label{eq:gibbs}
\mu_s(d\vT) \coloneqq \frac{1}{Z_s} \exp\left(-\frac{2}{s}\tilde{V}(\vT)\right) d\vT,
\end{equation}
where $\tilde{V}(\vT)$ represents any of the factor-regularized potentials defined above, $s > 0$ acts as the temperature parameter (proportional to the learning rate), and $Z_s$ is the normalization constant.

\section{Main Results}\label{sec:mainresults}

Given the formal setup in the previous section, we first state our key result showing that the regression loss function associated with the softmax-attention layer, defined in Definition~\ref{def:mh-general-profile-potential}, satisfy the Villani condition.

\begin{theorem}[{\bf Attention-Based Regression Loss is a Villani Function}]
\label{thm:mh-general-profile-villani}
Suppose the multi-head attention training data satisfy Definition~\ref{as2}, the value matrix $\mW_v$ is bounded as in Definition~\ref{def:mh-kq-attn}, and suppose $\phi$ satisfies Assumption~\ref{ass:mh-admissible-profile}. Then, for every fixed $H\in\mathbb N$ and every $\lambda>0$, the potential $\widetilde V_{ H,\mathrm{ATT}}$ in
Definition~\ref{def:mh-general-profile-potential} satisfies the Villani condition in
Definition~\ref{def:villani}.
\end{theorem}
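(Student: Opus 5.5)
The plan is to split $\widetilde V_{H,\mathrm{ATT}}=\widehat R_{A,H}+R_H$ and show that the data loss is a bounded, $C^2$ perturbation with gradient and Hessian of at most polynomial growth, while the radial regularizer dominates through (P2) and (P3). Write $\vT=(\mW_Q^{(j)},\mW_K^{(j)})_{j\le H}$ and $\rho_j=(\norm{\mW_Q^{(j)}}_F^2+\norm{\mW_K^{(j)}}_F^2)^{1/2}$, so that $\norm{\vT}^2=\sum_j\rho_j^2$.

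\emph{Step 1: bounds on the data loss.} Smoothness of $\widehat R_{A,H}$ is clear, since the scores $\mM_i^{(j)}$ are bilinear in the weights and RowSoftMax is smooth. Row-stochasticity gives $\norm{\mS_i^{(j)}}_F\le\sqrt t$. Hence $\norm{\widehat{\mY}_i}_F\le H\sqrt t\,B_xB_w$ and $0\le\widehat R_{A,H}\le C_0$ uniformly in $\vT$.

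\emph{Step 2: derivatives of the data loss.} The Jacobian of the softmax is bounded by $\beta$, and $\partial\mM_i^{(j)}/\partial\mW_Q^{(j)}$ is linear in $\mW_K^{(j)}$ with norm at most $B_x^2\norm{\mW_K^{(j)}}_F/\sqrt d$. This gives $\norm{\nabla_{(j)}\widehat R_{A,H}}\le C_1\rho_j$. For the Hessian, two derivatives produce terms of size at most $C_2(1+\rho_j^2)$ in block $j$, using the softmax second derivative bound $\sim\beta^2$ together with the cross term between $\mW_Q$ and $\mW_K$. Cross-head blocks come only from the $\ell_2$ coupling and are bounded by $C(1+\rho_j)(1+\rho_k)$. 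Consequently
$\abs{\Delta\widehat R_{A,H}}\le C_2'(1+\norm{\vT}^2)$.
All constants depend on $n,t,d,r,H,\beta,B_x,B_y,B_w$, which is allowed here.

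\emph{Step 3: the regularizer.} For each head, a radial function $g(\rho)=\phi(\rho)$ on $\R^{m}$ with $m=2dr$ satisfies
$\nabla g=\phi'(\rho)\,\hat x$ and $\Delta g=\phi''(\rho)+(m-1)\phi'(\rho)/\rho$.
By (P1), $\phi'(0)=0$ and $\phi\in C^2$, so $g\in C^2$ at the origin with $\phi'(\rho)/\rho\to\phi''(0)$. By (P2), $\phi'(a)\ge Ka$ and $\phi(a)\ge Ka^2/2-C$ eventually. This yields coercivity and $\int e^{-2\widetilde V/s}<\infty$, because $\widehat R_{A,H}\ge0$ and $\sum_j\phi(\rho_j)\ge\sum_j(K\rho_j^2/2-C)$ is superquadratic in $\norm{\vT}$.

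\emph{Step 4 (main obstacle): condition (4).} The difficulty is that $\norm{\vT}\to\infty$ does not force every $\rho_j\to\infty$; some heads may stay bounded. Write $c=\lambda/(2H)$ and compute
\[
\frac{\norm{\nabla\widetilde V}^2}{s}-\Delta\widetilde V\ \ge\ \sum_j\Bigl[\frac{c^2\phi'(\rho_j)^2}{2s}-c[\phi''(\rho_j)]_+-c(m-1)\frac{\phi'(\rho_j)}{\rho_j}\Bigr]-\frac{\norm{\nabla\widehat R_{A,H}}^2}{s}-C_2'(1+\norm{\vT}^2),
\]
where the inequality uses $\norm{a+b}^2\ge\norm a^2/2-\norm b^2$.

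For heads with $\rho_j\le R_0$, the bracket is bounded below by a constant. For large $\rho_j$, the bracket is handled using (P3) and (P2), which give $[\phi'']_+=o(\phi'^2)$ and $\phi'/\rho=o(\phi'^2)$. It is therefore at least $\frac{c^2}{4s}\phi'(\rho_j)^2$, and this is $\ge M\rho_j^2$ for arbitrary $M$ once $\rho_j$ is large, again by (P2).

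The error terms are at most $C(1+\sum_j\rho_j^2)$. The contribution of the bounded heads to these errors is a constant, so the error is absorbed by $\sum_{\rho_j>R_0}M\rho_j^2$ with $M>2C$. The right-hand side then tends to $\infty$ as $\norm{\vT}\to\infty$, since at least one $\rho_j\to\infty$. The key point to verify carefully is that the Hessian of the data loss grows at most quadratically and is not driven by exponentials; this holds because softmax derivatives are uniformly bounded regardless of the score magnitude.
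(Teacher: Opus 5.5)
Your proposal is correct and follows essentially the same route as the paper. Both proofs split off the data term, whose gradient is $O(\|\vT\|)$ and whose Laplacian is $O(\|\vT\|^2)$ because the softmax derivatives are uniformly bounded; both use the radial gradient/Laplacian formulas with (P1)--(P3) and the inequality $\|\va+\vb\|^2\ge\frac12\|\vb\|^2-\|\va\|^2$. The only difference is cosmetic: you handle heads that stay bounded by a per-head bracket split, whereas the paper aggregates into $G(\vT)=\sum_j\phi'(\|\vT_j\|)^2$ and $K(\vT)$ and proves $G/\|\vT\|^2\to\infty$ and $K/G\to0$, using the set of large heads in the same way you do.
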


We give the proof of the above in Section~\ref{sec:proofattnreg}.

We next present our second key result, establishing that the regression loss function for a depth-2 neural network with LoRA, as defined in Definitions~\ref{def:lora-general-radial-potential}, satisfy the Villani condition.

\begin{theorem}[{\bf Depth-2 Neural Net Based Regression Loss under LoRA is a Villani Function}]
\label{thm:lora-general-profile-villani}
Suppose the shallow-net training data satisfy Definition~\ref{as2}, the base matrix $\mW_0$ and scaling $\alpha$ are fixed with $\mW_0$ and $\va$ being bounded as in Definition~\ref{def:shallow-lora-param}, and suppose $\phi$ satisfies Assumption~\ref{ass:lora-admissible-profile}. Then, for every $\lambda>0$, the potential $\widetilde V_{\mathrm{LoRA}}$ in
Definition~\ref{def:lora-general-radial-potential} satisfies the Villani condition in
Definition~\ref{def:villani}.
\end{theorem}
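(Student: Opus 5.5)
We will check the four items of Definition~\ref{def:villani} for $\widetilde V_{\mathrm{LoRA}}=\Ldata+R_{\mathrm{LoRA}}$. Throughout write $\vT=(\mU,\mV)\in\R^{D}$, $D=(p+d)r$, and $a=\|\vT\|=(\|\mU\|_F^2+\|\mV\|_F^2)^{1/2}$. The idea is that the data loss is a bounded function whose derivatives grow only polynomially in $a$ (at most linearly for the gradient, at most quadratically for the Hessian, because $\mW$ is bilinear in $(\mU,\mV)$). The radial term, by contrast, grows super-quadratically by (P2).

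\textbf{Step 1: regularity and coercivity.} Since $\sigma\in C^2$, $\mW(\mU,\mV)$ is polynomial in $\vT$ and $\phi\in C^2$, the loss is $C^2$ away from $a=0$. At the origin we use $\phi'(0)=0$ from (P1): the map $\vT\mapsto\phi(\|\vT\|)$ is $C^2$ there, since its gradient is $\phi'(a)\vT/a$ and $\phi'(a)/a\to\phi''(0)$. Next, $0\le\Ldata\le\frac12(B_y+B_aB_\sigma)^2$ is bounded, and (P2) gives $\phi(a)\ge c a^2$ for large $a$. Hence $\widetilde V_{\mathrm{LoRA}}\to\infty$, which is item 2. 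Moreover $e^{-2\widetilde V/s}\le C e^{-\lambda\phi(a)/s}$. Since $\phi(a)/a^2\to\infty$, this is integrable for every $s>0$, which is item 3.

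\textbf{Step 2: derivative bounds on the data term.} Write $\ell_i=\frac12(y_i-\va^\top\sigma(\mW\vx_i))^2$. The residual is bounded by $B_y+B_aB_\sigma$. The derivative of $\mW$ is
\[
\mathrm{d}\mW=\tfrac{\alpha}{r}(\mathrm{d}\mU\,\mV^\top+\mU\,\mathrm{d}\mV^\top),
\]
so $\|\nabla_{\vT}\mW\|\lesssim|\alpha|a/r$, while the second derivative of $\mW$ is constant, of size $|\alpha|/r$. By the chain rule and Assumption~\ref{as1} we get
\[
\|\nabla\Ldata\|\le C_1 a,\qquad |\Delta\Ldata|\le\sqrt D\,\|\nabla^2\Ldata\|_{\mathrm{op}}\le C_2(1+a^2),
\]
with $C_1,C_2$ depending on $B_a,B_x,B_y,B_\sigma,L_\sigma,L'_\sigma,\alpha,r,D$. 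Constants depending on $D$ are harmless here, because only a limit is needed.

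\textbf{Step 3: the key limit.} For the radial term we have
\[
\nabla R=\tfrac{\lambda}{2}\phi'(a)\vT/a,\qquad \Delta R=\tfrac{\lambda}{2}\Bigl(\phi''(a)+(D-1)\phi'(a)/a\Bigr).
\]
By (P2), $\phi'(a)\ge 2C_1 a/\lambda$ eventually, so
\[
\|\nabla\widetilde V\|\ge\tfrac{\lambda}{2}\phi'(a)-C_1a\ge\tfrac{\lambda}{4}\phi'(a).
\]
It then suffices to show
\[
\frac{\lambda^2\phi'(a)^2}{16s}-\frac{\lambda}{2}[\phi''(a)]_+-\frac{\lambda(D-1)}{2}\frac{\phi'(a)}{a}-C_2(1+a^2)\to\infty .
\]
Divide by $\phi'(a)^2$. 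The $\phi''$ term vanishes by (P3). The term $\phi'(a)/(a\phi'(a)^2)=1/(a\phi'(a))\to0$. The term $a^2/\phi'(a)^2\to0$ by (P2). So the expression is $\phi'(a)^2(\lambda^2/(16s)-o(1))\to\infty$, which is item 4 for every $s>0$.

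\textbf{Main obstacle.} The step needing most care is the Hessian bound in Step 2. There are two contributions: the term from $\sigma''$, where $(\nabla\mW\,\vx)^{\otimes2}$ grows like $a^2$, and the term with the constant second derivative of $\mW$. We must verify that nothing grows faster than $a^2$. Only then is the comparison against $\phi'(a)^2$, which is $\gg a^2$, actually justified.
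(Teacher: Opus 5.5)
Your proposal is correct and follows essentially the same route as the paper: it uses a bounded data term with $\|\nabla\Ldata\|=O(a)$ and $|\Delta\Ldata|=O(a^2)$ (thanks to bilinearity of $\mU\mV^\top$), plays these against $\phi'(a)^2$ via (P2) and (P3) after dividing by $\phi'(a)^2$, and handles $C^2$-regularity at the origin through $\phi'(a)/a\to\phi''(0)$. Only harmless constant slips need fixing, none of which affects the limit: the trace bound should read $|\Delta\Ldata|\le D\,\|\nabla^2\Ldata\|_{\mathrm{op}}$ (not $\sqrt{D}$); the residual bound is $B_y+B_aB_\sigma\sqrt{p}$; and you need $\phi'(a)\ge 4C_1a/\lambda$, not $2C_1a/\lambda$, to get $\|\nabla\widetilde V_{\mathrm{LoRA}}\|\ge\frac{\lambda}{4}\phi'(a)$.
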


We give the proof of the above in Appendix~\ref{app:lora-villani}.
% \note{need here the location of the proof of above}

% \note{$\log^\star$ is a valid $\phi$.}

% \note{why is the admissibility being proven twice? That proof is about $\phi$ and has nothing to do with LoRA or attention, right?}

By Theorems~\ref{thm:mh-general-profile-villani} and \ref{thm:lora-general-profile-villani}, all considered loss functions with their associated neural architectures satisfy the Villani condition. Consequently, we may invoke Theorem~1 of \cite{shi2023villani} to obtain the convergence result for the SDE~\eqref{eq:sde}, as stated in Theorem~\ref{theorem3} of Appendix~\ref{app:conv_proof}.

% \note{Towards proving our two results ... Now we focus on ... with an explicit upperbound on the \Poincare constant to give explicit rates ... }

The convergence rate of the SDE~\eqref{eq:sde} is governed by the \Poincare constant $C_{PI}$ of the corresponding Gibbs measure. Towards obtaining more explicit convergence guarantees for our models of interest, it is essential to derive quantitative bounds on $C_{PI}$. To this end, Lemma~\ref{lem:bpc-general-poincare-bound} establishes a general upper bound on the \Poincare constant for a broad class of logarithmically regularized loss functions. Consequently, Theorems~\ref{thm:bpc-attention-poincare-bound} and ~\ref{cor:bpc-lora-poincare-bound} yield convergence rates for the SDE~\eqref{eq:sde} with nearly dimension free explicit bounds on $C_{PI}$ for these models.

%We then show that the logarithmically regularized losses arising from multi-head attention and depth-$2$ neural networks under LoRA constraints satisfy the assumptions of this lemma
% \note{start with motivating why the above theorems are needed and whats the difference between them!}

\begin{definition}[{\bf \Poincare Inequality}] \label{def:poincare}
    For a probability measure $\pi$ on $\R^D$, define its optimal
    \Poincare constant by
    \begin{equation}
    \label{eq:bpc-optimal-poincare-constant}
    C_{\mathrm{PI}}^\star(\pi) \coloneqq \inf\left\{ C>0: \Var_\pi(h) \leq C\int_{\R^D}\|\nabla h(\vT)\|^2\,\pi(d\vT) \text{ for every }h\in H^1(\pi) \right\}.
    \end{equation}
    
    Following the normalization used in \cite{shi2023villani}, the corresponding \Poincare inequality is
    \begin{equation}
    \label{eq:bpc-spectral-gap-poincare}
    \operatorname{Var}_{\mu_s}(h) \leq C_{\mathrm{PI}}^\star(\mu_s) \int_{\mathbb{R}^D} \|\nabla h(\vT)\|^2\,\mu_s(d\vT), \qquad h\in H^1(\mu_s).
    \end{equation}
\end{definition}

\begin{lemma}[{\bf Upper bound on the \Poincare Constant for The Logarithmic Profile}]
\label{lem:bpc-general-poincare-bound}
Let $D\geq 2$, $\lambda>0$, and $s>0$. Let
$\widehat R:\R^D\rightarrow\R$ be a bounded measurable function with $\operatorname{osc}(\widehat R) \coloneqq \sup_{\vT\in\R^D}\widehat R(\vT) - \inf_{\vT\in\R^D}\widehat R(\vT) \leq M_{\widehat R} < \infty$. Consider the corresponding potential,
% \note{single headed or multi headed attention here?(also in appendix}
\[ 
\tilde V_{\widehat R}(\vT) = \widehat R(\vT) + \frac{\lambda}{2} \|\vT\|^2\log\bigl(1+\|\vT\|^2\bigr), \qquad \vT\in\R^D, 
\]
and its Gibbs measure, $\mu_s = \frac{1}{Z_s} \exp\left( -\frac{2}{s}\tilde V_{\widehat R}(\vT) \right)$. Then $\mu_s$ is well-defined and satisfies the \Poincare inequality \eqref{eq:bpc-spectral-gap-poincare} for every $h\in H^1(\mu_s)$, where
\begin{equation}
C_{\mathrm{PI}}^\star(\mu_s)
\le
\exp\!\left(\frac{2M_{\widehat R}}{s}\right)
\left(
1+\frac{s}{\lambda\log 2}
\right)
<\infty.
\label{eq:G5}
\end{equation}

In particular, for every fixed $s>0$ and $\lambda>0$, the \Poincare constant of $\mu_s$ is finite.
\end{lemma}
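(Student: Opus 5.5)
The plan is a Holley–Stroock perturbation that reduces everything to the pure regularizer. Write
\[
\mu_s \propto e^{-\frac{2}{s}\widehat R}\,\nu_s, \qquad \nu_s(d\vT)\propto \exp\!\Big(-\tfrac{\lambda}{s}\|\vT\|^2\log(1+\|\vT\|^2)\Big)\,d\vT .
\]
First I check well-definedness. Since $\widehat R$ is bounded, $e^{-2\widehat R/s}\le e^{-2\inf\widehat R/s}$. The regularizer grows faster than $\|\vT\|^2$, so $\nu_s$ has finite mass and hence $Z_s<\infty$.

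The Holley–Stroock lemma says the following. If $\nu$ satisfies Poincaré with constant $C_\nu$, and $d\mu=e^{-W}d\nu/Z$ with $\operatorname{osc}(W)<\infty$, then $C_{\mathrm{PI}}^\star(\mu)\le e^{\operatorname{osc}(W)}C_\nu$. A sharper form, using $\Var_\mu(h)\le\int(h-c)^2d\mu$ for all constants $c$, gives exactly this factor. Here $W=\frac{2}{s}\widehat R$, so $\operatorname{osc}(W)\le 2M_{\widehat R}/s$. This produces the factor $\exp(2M_{\widehat R}/s)$ in \eqref{eq:G5}, and the lemma reduces to showing
\[
C_{\mathrm{PI}}^\star(\nu_s)\le 1+\frac{s}{\lambda\log 2}.
\]

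Next I study $U_0(\vT)=\frac{\lambda}{s}g(\|\vT\|^2)$ with $g(u)=u\log(1+u)$. We have $g'(u)=\log(1+u)+\frac{u}{1+u}\ge 0$ and $g''>0$. Hence
\[
\nabla^2U_0=\tfrac{\lambda}{s}\bigl(2g'(\|\vT\|^2)I+4g''(\|\vT\|^2)\vT\vT^\top\bigr)\succeq \tfrac{2\lambda}{s}g'(\|\vT\|^2)I .
\]
So $U_0$ is convex everywhere, and on $\{\|\vT\|\ge 1\}$ it is uniformly convex with modulus $\kappa\ge \frac{2\lambda}{s}\log 2$. It fails to be strongly convex only near the origin, where $g'(0)=0$.

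To get the Poincaré constant of $\nu_s$ I would use a Lyapunov / local-Poincaré scheme in the style of Bakry–Barthe–Cattiaux–Guillin. Take $L=\Delta-\nabla U_0\cdot\nabla$ and the Lyapunov function $\Phi=e^{U_0/2}$, for which
\[
\frac{L\Phi}{\Phi}=\frac12\Delta U_0-\frac14\|\nabla U_0\|^2 .
\]
Outside the unit ball, $\|\nabla U_0\|\ge \frac{2\lambda}{s}g'(1)\|\vT\|$, which should give the drift bound $L\Phi\le-\theta\Phi$ with $\theta$ of order $\frac{\lambda\log2}{s}$. Inside the ball, convexity of $U_0$ gives a local Poincaré inequality, since a log-concave measure on a convex set has constant of order $\mathrm{diam}^2$. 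Combining the two yields $C\le\theta^{-1}(1+b\,C_{\mathrm{loc}})$, and the aim is to tune the pieces so that this collapses to $1+\frac{s}{\lambda\log 2}$.

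As a fallback I would compare $\nu_s$ with a strongly log-concave measure. One option is Bakry–Émery for the potential $U_0+\frac{\lambda\log 2}{s}(\|\vT\|^2-1)_+$ smoothed, treating the mismatch as a bounded perturbation on the ball. The other is Brascamp–Lieb: $\Var_{\nu}(h)\le\int\langle(\nabla^2U_0)^{-1}\nabla h,\nabla h\rangle d\nu$, with the degeneracy at $0$ handled separately.

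The main obstacle is this last step, namely obtaining exactly $1+\frac{s}{\lambda\log 2}$ uniformly in $D$. Dimension-free control is plausible only because $U_0$ is convex. The degenerate curvature near the origin forces a careful gluing argument, in which the "$1$" should come from the inner region and $\frac{s}{\lambda\log 2}$ from the curvature bound $\frac{2\lambda\log 2}{s}$ outside the ball. I expect the constants to need some slack, and the first thing I would try is the Lyapunov route with $\Phi=e^{U_0/2}$.
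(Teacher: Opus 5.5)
Your Holley--Stroock reduction is correct and is exactly Step~3 of the paper. Writing $\mu_s\propto e^{-\frac{2}{s}\widehat R}\,\nu_s$ and comparing variances and Dirichlet forms through $\sup/\inf$ of $d\mu_s/d\nu_s$ gives the factor $\exp(2M_{\widehat R}/s)$. The gap is the remaining estimate $C_{\mathrm{PI}}^\star(\nu_s)\le 1+\frac{s}{\lambda\log 2}$. You do not prove it, and the route you would try first cannot give a dimension-free constant.

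With $U_0=\frac{\lambda}{s}g(\|\vT\|^2)$ one has $\frac12\Delta U_0=\frac{\lambda}{s}\bigl(D\,g'(\|\vT\|^2)+2\|\vT\|^2g''(\|\vT\|^2)\bigr)$ and $\frac14\|\nabla U_0\|^2=\frac{\lambda^2}{s^2}\|\vT\|^2g'(\|\vT\|^2)^2$. So for $\Phi=e^{U_0/2}$ the drift condition $L\Phi\le-\theta\Phi$ fails outside the unit ball; already at $\|\vT\|=1$ it fails once $D$ exceeds a constant multiple of $\lambda/s$. It holds only where $\frac{\lambda}{s}\|\vT\|^2g'(\|\vT\|^2)\gtrsim D$, and that is exactly where $\nu_s$ puts its mass: integrating $\nabla\cdot(\vT e^{-U_0})=0$ gives $D=\frac{2\lambda}{s}\E_{\nu_s}\bigl[\|\vT\|^2g'(\|\vT\|^2)\bigr]$. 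Hence the inner region has radius growing roughly like $\sqrt{Ds/\lambda}$, up to logarithms. A diameter-type local \Poincare constant there is of order $Ds/\lambda$, so the gluing reproduces the dimension dependence you want to avoid.

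The fallbacks do not close the gap either:
\begin{itemize}
\item $U_0+\frac{\lambda\log 2}{s}(\|\vT\|^2-1)_+$ still has zero curvature at the origin and differs from $U_0$ by an unbounded function. So neither Bakry--\'Emery nor Holley--Stroock applies as stated.
\item A genuine convexification near the origin with modulus of order $\lambda/s$ costs an oscillation of order $\lambda/s$. That gives a factor $e^{c\lambda/s}$, which is incompatible with the target as $\lambda/s\to\infty$.
\item The Brascamp--Lieb weight $\frac{s}{2\lambda g'(\|\vT\|^2)}$ blows up at the origin.
\end{itemize}
Convexity alone, in KLS fashion, only buys a constant of order $\E_{\nu_s}\|\vT\|^2\sim Ds/\lambda$.

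The missing idea is to exploit spherical symmetry together with log-concavity. Since $\nu_s$ is radial and log-concave, Theorem~1.2 of \cite{bonnefont2016spectral} gives $C_{\mathrm{PI}}^\star(\nu_s)\le \E_{\nu_s}\|\vT\|^2/(D-1)$. The second moment then comes from the virial identity above, in three steps:
\begin{itemize}
\item Dropping the nonnegative contribution $\frac{\|\vT\|^4}{1+\|\vT\|^2}$ gives $\E_{\nu_s}\bigl[\|\vT\|^2\log(1+\|\vT\|^2)\bigr]\le\frac{Ds}{2\lambda}$.
\item Splitting at $\|\vT\|=1$ gives $\E_{\nu_s}\|\vT\|^2\le 1+\frac{Ds}{2\lambda\log 2}$.
\item Dividing by $D-1$ and using $\frac{D}{D-1}\le 2$ for $D\ge2$ yields exactly $1+\frac{s}{\lambda\log 2}$.
\end{itemize}
The ``$1$'' is the unit-ball contribution to the second moment. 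Dimension-freeness comes from the $D$ in the virial identity cancelling the $D-1$ in the radial spectral-gap bound, not from an inner/outer gluing.
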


We give the proof of the above in Appendix~\ref{sec:BPC}.

\begin{theorem}[{\bf Convergence of SDE for Attention-Based Regression}]
\label{thm:bpc-attention-poincare-bound}
Recalling the regularized Q-K multi-head loss $\tilde V_{H,\mathrm{ATT}}(\cdot)$ given in Definition~\ref{def:mh-general-profile-potential} for $\phi(r) = r^2 \log(1+r^2)$ and data as given in Definition~\ref{as2}, we define $\vT_H = \left( \mW_Q^1,\mW_K^1,\ldots,\mW_Q^H,\mW_K^H \right) \in\R^D, D=2Hdr$, where $\mW_Q^j, \mW_K^j$ is as in Definition~\ref{def:mh-kq-attn}. 
Consider the corresponding Gibbs measure $\mu_s(d\vT_H) = \frac{1}{Z_s} \exp\left( -\frac{2}{s}\tilde V_{H,\mathrm{ATT}}(\vT_H) \right)d\vT_H, s>0$. Recall the bounds $\|\mX_i\|_F\leq B_x, \|\mY_i\|_F\leq B_y, \|\mW_v\|_F\leq B_w,$ as in Definitions~\ref{as2} and~\ref{def:mh-kq-attn}. If $D=2Hdr\geq2$, then $\mu_s$ satisfies the \Poincare inequality (Defintion~\ref{def:poincare}), where
\begin{equation}
\label{eq:bpc-attention-poincare-bound}
C_{\mathrm{PI}}^\star(\mu_s) \leq \exp\left( \frac{\bigl(H\sqrt{t}B_xB_w+B_y\bigr)^2}{s} \right) \left( 1+\frac{Hs}{\lambda\log 2} \right)<\infty.
\end{equation}
Further, suppose the initial probability density is $p_0 \in L^2((\mu_s)^{-1})$ of the SDE \eqref{eq:sde} where $\mu_s$ is the corresponding Gibbs measure \eqref{eq:gibbs}. Then there exists constant $S > 0$, such that if we further choose the learning rate $s \leq \min\left\{\frac{\epsilon}{2A}, S\right\},$ and the time $t$ satisfies, $t \geq \frac{1}{\lambda_s} \log \left( \frac{2 D(s, p_0)}{\epsilon} \right),$ where $D(s, p_0) = C(s) \cdot \|p_0 - \mu_s\|_{L^2((\mu_s)^{-1})}$, $\lambda_s = \frac{s}{2 C_{PI}^\star(\mu_s)}$ and $C(s)$ is a positive constant, then,
\begin{align}
    \mathbb{E}[\tilde V_{H,\mathrm{ATT}}(\vT_H)] - \tilde V_{H,\mathrm{ATT}}^{\star} \leq \epsilon,
\end{align}
where  $\tilde V_{H,\mathrm{ATT}}^{\star} = \inf~\tilde V_{H,\mathrm{ATT}}(\vT)$ is the global minimum of the respective loss.
\end{theorem}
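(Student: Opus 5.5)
The plan is to derive \eqref{eq:bpc-attention-poincare-bound} by combining three standard ingredients: a uniform bound on the data loss, a product (tensorization) structure in the regularizer, and the Holley--Stroock bounded-perturbation principle. The convergence statement then follows from the Villani property (Theorem~\ref{thm:mh-general-profile-villani}) together with Theorem~1 of \cite{shi2023villani}.

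\textbf{Step 1 (oscillation of the data loss).} Each row of $\mS_i^{(j)}$ is a probability vector, so its Euclidean norm is at most $1$ and $\|\mS_i^{(j)}\|_F\le\sqrt t$. By the triangle inequality and submultiplicativity of the Frobenius norm,
\[
\|\widehat\mY_i\|_F\le \sum_{j=1}^H\|\mS_i^{(j)}\|_F\|\mX_i\|_F\|\mW_v\|_F\le H\sqrt t B_xB_w .
\]
Hence $0\le \ell_i\le \frac12(H\sqrt tB_xB_w+B_y)^2$ for every weight configuration. The same bounds hold for the average $\widehat R_{A,H}$, so
\[
\operatorname{osc}(\widehat R_{A,H})\le M\coloneqq\tfrac12\bigl(H\sqrt tB_xB_w+B_y\bigr)^2 ,
\]
uniformly in the weights, $d$ and $r$.

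\textbf{Step 2 (the reference measure is a product).} A direct application of Lemma~\ref{lem:bpc-general-poincare-bound} is not possible. That lemma uses the regularizer $\|\vT\|^2\log(1+\|\vT\|^2)$ of the full vector, whereas $R_H$ is a sum over heads of this profile applied to each head block $a_j=(\|\mW_Q^{(j)}\|_F^2+\|\mW_K^{(j)}\|_F^2)^{1/2}$. I will instead exploit this head-wise structure. Define
\[
\nu\propto\exp\bigl(-\tfrac2sR_H\bigr)=\bigotimes_{j=1}^H\nu_j,\qquad \nu_j\propto\exp\Bigl(-\tfrac2s\cdot\tfrac{\lambda}{2H}\,a_j^2\log(1+a_j^2)\Bigr)\ \text{on }\R^{2dr}.
\]
Each $\nu_j$ is exactly the Gibbs measure of Lemma~\ref{lem:bpc-general-poincare-bound}, with $\widehat R\equiv0$ (so the oscillation is $0$), regularization parameter $\lambda/H$, and dimension $2dr$. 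If $2dr\ge2$, the lemma gives $C^\star_{\mathrm{PI}}(\nu_j)\le 1+\frac{Hs}{\lambda\log2}$. The degenerate case $d=r=1$, where $2dr=2$, is already covered.

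By the tensorization property of the Poincaré inequality, the constant of a product measure is at most the maximum of the factor constants. Therefore $C^\star_{\mathrm{PI}}(\nu)\le 1+\frac{Hs}{\lambda\log2}$, which is independent of $d$ and $r$. I expect this to be the main conceptual step, since it is where the head-wise regularizer is reconciled with the global lemma. If tensorization turned out to be awkward with the normalization in \eqref{eq:bpc-spectral-gap-poincare}, I would instead re-run the proof of Lemma~\ref{lem:bpc-general-poincare-bound} blockwise.

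\textbf{Step 3 (bounded perturbation).} Since $\mu_s\propto e^{-\frac2s\widehat R_{A,H}}\nu$ and $\frac2s\widehat R_{A,H}$ has oscillation at most $2M/s$, the Holley--Stroock lemma gives
\[
C^\star_{\mathrm{PI}}(\mu_s)\le e^{2M/s}\,C^\star_{\mathrm{PI}}(\nu)\le \exp\Bigl(\frac{(H\sqrt tB_xB_w+B_y)^2}{s}\Bigr)\Bigl(1+\frac{Hs}{\lambda\log2}\Bigr),
\]
which is \eqref{eq:bpc-attention-poincare-bound}. This is the same perturbation step used in proving Lemma~\ref{lem:bpc-general-poincare-bound}. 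Well-definedness of $\mu_s$ follows from integrability of $\nu$ together with boundedness of $\widehat R_{A,H}$.

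\textbf{Step 4 (SDE convergence).} By Theorem~\ref{thm:mh-general-profile-villani}, with $\phi(r)=r^2\log(1+r^2)$ admissible by Remark~\ref{rem:mh-admissible-examples}, the potential $\tilde V_{H,\mathrm{ATT}}$ is a Villani function. Theorem~1 of \cite{shi2023villani}, restated as Theorem~\ref{theorem3} in Appendix~\ref{app:conv_proof}, then applies. It gives $L^2((\mu_s)^{-1})$ decay of $p_t$ to $\mu_s$ at rate $\lambda_s=s/(2C^\star_{\mathrm{PI}}(\mu_s))$, with prefactor $D(s,p_0)$. It also gives the Gibbs-measure estimate $\E_{\mu_s}\tilde V-\tilde V^\star\le As$ for all $s\le S$.

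Choosing $s\le\min\{\epsilon/(2A),S\}$ makes the stationary error at most $\epsilon/2$. Choosing $t\ge\lambda_s^{-1}\log(2D(s,p_0)/\epsilon)$ makes the transient error at most $\epsilon/2$. Summing the two contributions yields $\E[\tilde V_{H,\mathrm{ATT}}(\vT_H)]-\tilde V^\star_{H,\mathrm{ATT}}\le\epsilon$.
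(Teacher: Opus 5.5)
Your proposal is correct, and it differs from the paper's argument in Step 2. That difference is in your favour.

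The paper's proof bounds $\operatorname{osc}(\hat R_A)$ exactly as in your Step 1. It then simply ``applies Lemma~\ref{lem:bpc-general-poincare-bound} with $\widehat R=\hat R_A$'' and writes the factor $1+\frac{Hs}{\lambda\log 2}$ without comment. But that lemma is stated for the global regularizer $\frac{\lambda}{2}\|\vT\|^2\log(1+\|\vT\|^2)$. Its reference measure is spherically symmetric on all of $\R^D$, which is what the Bonnefont--Joulin--Ma spectral-gap bound in its proof needs. The head-wise $R_H$ is radial only within each block, so the lemma does not literally apply. Read literally, it would also produce $1+\frac{s}{\lambda\log 2}$ rather than the $H$-dependent factor.

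Your route supplies the missing justification and shows exactly where the $H$ comes from:
\begin{itemize}
\item factor $e^{-2R_H/s}$ as $\bigotimes_j \nu_j$;
\item apply the lemma to each $\nu_j$ on $\R^{2dr}$ with $\widehat R\equiv 0$ and parameter $\lambda/H$;
\item tensorize, using $\Var_\nu\le\sum_j \E_\nu[\Var_{\nu_j}]$;
\item finish with the Holley--Stroock perturbation, which is the same computation as Step~3 of the lemma's proof.
\end{itemize}
The resulting bound stays free of $d$ and $r$, since each block has dimension $2dr\ge 2$ automatically.

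Your Step~4 coincides with the paper's. It splits the excess risk into a stationary error $\le As$ and a transient term $\le D(s,p_0)e^{-\lambda_s t}$, using Propositions~4 and~5 and Corollary~6 of \cite{shi2023villani}, with the Villani property supplied by Theorem~\ref{thm:mh-general-profile-villani}.
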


\begin{theorem}[{\bf Convergence of SDE for Depth-2 Neural Net Based Regression under LoRA}]
\label{cor:bpc-lora-poincare-bound}
Recalling the regularized LoRA loss on a depth-2 net  $\widetilde V_{\mathrm{LoRA}}(\cdot)$ as given in Definition~\ref{def:lora-general-radial-potential} for $\phi(r) = r^2 \log(1+r^2)$, data as given in Definition~\ref{as2} and suppose the activation $\sigma$ satisfies Assumption~\ref{as1}, we define $\vT=(\mU,\mV)\in\R^D, D=(p+d)r,$ where $\mU, \mV$ is as given in Definition~\ref{def:shallow-lora-param} and the Gibbs measure of $\widetilde V_{\mathrm{LoRA}}(\vT)$ is $\mu_s = \frac{1}{Z_s} \exp\left( -\frac{2}{s}\widetilde V_{\mathrm{LoRA}}(\vT) \right)$. Recall the bounds $\|y_i\|_F\leq B_y, \norm{\va} \leq B_a, \abs{\sigma}\leq B_\sigma,$ as in Definitions~\ref{as2}, \ref{def:shallow-lora-param} and Assumption~\ref{as1}. If $D=(p+d)r\geq2$, then $\mu_s$ satisfies the \Poincare inequality (Definition~\ref{def:poincare}), where
\begin{equation}
\label{eq:bpc-lora-poincare-bound}
C_{\mathrm{PI}}^\star(\mu_s) \leq \exp\left( \frac{(B_y+B_aB_\sigma\sqrt p)^2}{s} \right) \left( 1+\frac{s}{\lambda\log 2} \right) <\infty.
\end{equation}
% where $B_y+B_aB_\sigma\sqrt p$ is defined in Definition~\ref{def:shallow-lora-param} and Definition~\ref{as2}.
% \note{maybe the constant should be $(B_y+B_aB_\sigma\sqrt p)$ instead of $B_0$ (needs check) }
Further, suppose the initial probability density is $p_0 \in L^2((\mu_s)^{-1})$ of the SDE \eqref{eq:sde} where $\mu_s$ is the corresponding Gibbs measure \eqref{eq:gibbs}. Then there exists constant $S > 0$, such that if we further choose the learning rate $s \leq \min\left\{\frac{\epsilon}{2A}, S\right\},$ and the time $t$ satisfies, $t \geq \frac{1}{\lambda_s} \log \left( \frac{2 D(s, p_0)}{\epsilon} \right),$ where $D(s, p_0) = C(s) \cdot \|p_0 - \mu_s\|_{L^2((\mu_s)^{-1})}$, $\lambda_s = \frac{s}{2 C_{PI}^\star(\mu_s)}$ and $C(s)$ is a positive constant, then,
\begin{align}
    \mathbb{E}[\widetilde V_{\mathrm{LoRA}}(\vT)] - \widetilde V_{\mathrm{LoRA}}^\star \leq \epsilon,
\end{align}
where $\widetilde V_{\mathrm{LoRA}}^\star = \inf~\widetilde V_{\mathrm{LoRA}}(\vT)$ is the global minimum of the respective loss.
\end{theorem}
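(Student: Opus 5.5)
The plan is to reduce the theorem to Lemma~\ref{lem:bpc-general-poincare-bound} for the Poincaré bound, and to Theorem~1 of \cite{shi2023villani} (through Theorem~\ref{thm:lora-general-profile-villani}) for the convergence statement. Set $\vT=(\mU,\mV)\in\R^D$ and $\widehat R = \Ldata$. With $\phi(r)=r^2\log(1+r^2)$ the regularizer is
\[
R_{\mathrm{LoRA}}(\vT)=\tfrac{\lambda}{2}\|\vT\|^2\log(1+\|\vT\|^2),
\]
since $\|\vT\|^2=\|\mU\|_F^2+\|\mV\|_F^2$. So $\widetilde V_{\mathrm{LoRA}}$ has exactly the form $\tilde V_{\widehat R}$ of the lemma. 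It remains to check that $\Ldata$ is bounded and to bound its oscillation.

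\textbf{Oscillation bound.} For every $\vT$ and every $i$, Assumption~\ref{as1} gives $\|\sigma(\mW(\mU,\mV)\vx_i)\|_2\le \sqrt p\,B_\sigma$, because each of the $p$ coordinates is at most $B_\sigma$ in absolute value. By Cauchy--Schwarz, $|\widehat y_i|\le B_aB_\sigma\sqrt p$. Hence
\[
0\le \ell_i\le \tfrac12\bigl(B_y+B_aB_\sigma\sqrt p\bigr)^2,
\]
and averaging over $i$ gives $0\le\Ldata\le\tfrac12(B_y+B_aB_\sigma\sqrt p)^2$. Therefore $\operatorname{osc}(\Ldata)\le M_{\widehat R}:=\tfrac12(B_y+B_aB_\sigma\sqrt p)^2$. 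This bound is uniform in $\mW_0$, $\alpha$, $d$ and $r$, which is the source of the dimension-freeness. $\Ldata$ is continuous, hence measurable.

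\textbf{Poincaré bound.} Since $D\ge2$, Lemma~\ref{lem:bpc-general-poincare-bound} applies directly. Substituting $2M_{\widehat R}/s=(B_y+B_aB_\sigma\sqrt p)^2/s$ yields \eqref{eq:bpc-lora-poincare-bound}.

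\textbf{Convergence statement.} By Theorem~\ref{thm:lora-general-profile-villani}, $\widetilde V_{\mathrm{LoRA}}$ is a Villani function; the log profile is admissible by Remark~\ref{rem:mh-admissible-examples}. So Theorem~1 of \cite{shi2023villani} (restated as Theorem~\ref{theorem3}) applies, and the argument runs in three steps.
\begin{enumerate}
\item Split
\[
\E[\widetilde V(\vT_t)]-\widetilde V^\star=\bigl(\E_{p_t}\widetilde V-\E_{\mu_s}\widetilde V\bigr)+\bigl(\E_{\mu_s}\widetilde V-\widetilde V^\star\bigr).
\]
\item Bound the second term by $As$ for small temperature ($s\le S$), using the Laplace-type estimate from \cite{shi2023villani}. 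The choice $s\le\epsilon/(2A)$ then makes it at most $\epsilon/2$.
\item Bound the first term by $C(s)\|p_t-\mu_s\|_{L^2(\mu_s^{-1})}$, via Cauchy--Schwarz together with finiteness of $\E_{\mu_s}\widetilde V^2$. The Poincaré inequality gives exponential decay of this $L^2(\mu_s^{-1})$ distance at rate $\lambda_s=s/(2C_{PI}^\star)$, with the factor $s/2$ coming from the diffusion coefficient of the generator. Choosing $t\ge\lambda_s^{-1}\log(2D(s,p_0)/\epsilon)$ makes this term at most $\epsilon/2$.
\end{enumerate}

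The main obstacle is bookkeeping rather than mathematics. One must match the normalization of the Gibbs density $e^{-2\tilde V/s}$ and the SDE \eqref{eq:sde} with the generator conventions of \cite{shi2023villani}, so that $\lambda_s$ comes out exactly as stated. One must also confirm that the constants $A$, $S$ and $C(s)$ supplied there depend only on the Villani potential, which Theorem~\ref{thm:lora-general-profile-villani} furnishes. The oscillation estimate itself is elementary.
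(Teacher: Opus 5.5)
Your proposal is correct and is essentially the paper's own proof. It uses the same uniform bound $0\le\Ldata\le\tfrac12(B_y+B_aB_\sigma\sqrt p)^2$, fed into Lemma~\ref{lem:bpc-general-poincare-bound}, and then the same excess-risk decomposition via Theorem~\ref{thm:lora-general-profile-villani} and Propositions~4--5 and Corollary~6 of \cite{shi2023villani}. Your choice $M_{\widehat R}=\tfrac12(B_y+B_aB_\sigma\sqrt p)^2$ is the right one; the paper's text writes $B_0^2/2$ at that step, which is a slip, although its final bound uses the same constant as yours.
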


The proof of the above two theorems are given in Section \ref{sec:poincspec}.

We note that since $\lambda$ and $\epsilon$ can be set to be arbitrarily small positive numbers for the above convergence, it follows that such a mild regularizer would have negligible effect at small/finite weight values w.r.t unregularized loss and that the regularization only appreciably affects the shape of the loss at infinity. 

\begin{remark}
    Theorems~\ref{thm:bpc-attention-poincare-bound} and \ref{cor:bpc-lora-poincare-bound} requires choosing $s = O(\varepsilon)$. Consequently, the \Poincare constant is of the order $O\left((1+\varepsilon)\exp(\frac{1}{\varepsilon})\right)$ in both the cases. Thus, the loss converges in expectation at a rate of $O\left( \frac{1+\epsilon}{\epsilon} e^{\frac{1}{\epsilon}} \cdot  \log(\frac{2 C(\epsilon) \norm{p_0 - \mu_\epsilon}_{L^2((\mu_\epsilon)^{-1})}}{\epsilon})\right)$ = $O\left( \frac{1}{\epsilon} e^{\frac{1}{\epsilon}} \cdot \max\left( \log(\frac{2}{\epsilon}), \log(\frac{C(\epsilon)}{\epsilon}), \log(\frac{\norm{p_0 - \mu_\epsilon}_{L^2((\mu_\epsilon)^{-1})}}{\epsilon}) \right)\right)$.
\end{remark}

% \note{track the percolation of $\epsilon$ to the run-time, as much as possible}

\begin{remark}[\bf Necessity of Factor Regularization]
We note that the factorized loss function 
$
\gL(\mT) = \frac{1}{n} \sum_{i=1}^n \ell_i(\mU\mV^\top),
$
where $\mT = (\mU,\mV)$, exhibits a scaling invariance under the transformation, 
$
g_\mA(\mU, \mV) = (\mU\mA, \mV\mA^{-T}), \quad \mA \in \mathbb{R}^{r \times r}, \ \det(\mA) \neq 0,
$
since, 
\[
\gL(\mU\mA, \mV\mA^{-T}) = \gL(\mU, \mV).
\]

% \note{should the $\mA$ parameterization be different?}

So the potential $\gL(\vT)$ is constant along the non-compact orbits generated by the general linear group.

As a consequence, the Gibbs' measure
$
\mu_\gamma \propto e^{-\gamma \gL(\vT)}
$
is non-normalizable. Specifically, for any fixed rank-$r$ matrix $\mW_0 \neq 0$, $\gL(\vT)$ is constant along the orbit 
\(\mathcal{O} = \{ (\mU\mA, \mV\mA^{-T}) \mid \det(\mA) \neq 0 \}\), which extends infinitely far from the origin. Hence, the partition function
\[
Z = \int_{\R^D} e^{-\gamma \gL(\vT)} d\vT
\]
diverges because it includes an integral of a non-zero constant density over an infinite-volume set. Consequently, the confining condition is violated, and the Poincaré Inequality cannot hold for the unregularized factorized loss.
\end{remark}

\section{Proof of Villani Conditions for Regression on Attention}\label{sec:proofattnreg}

% Throughout this section, 

Towards stating the proofs we note the following notations,
we consider the multi-head attention loss defined in Definition~\ref{def:mh-general-profile-potential}. We recall from Definition~\ref{def:mh-kq-attn}, $H$ is the number of heads and for each head $j\in\{1,\ldots,H\}$, $\mW_Q^{(j)}$ and $\mW_K^{(j)}$ denote the query and key matrices of the corresponding attention layer, respectively. Here, we will define the head-wise parameter vector as,
\begin{equation}
\vT_j \coloneqq
\begin{pmatrix}
\operatorname{vec}(\mW_Q^{(j)})\\[1mm]
\operatorname{vec}(\mW_K^{(j)})
\end{pmatrix}
\in\mathbb R^{D_0},
\qquad
D_0\coloneqq 2dr.
\label{eq:mh-head-vector}
\end{equation}
The full vector of trainable query and key weights is
\begin{equation}
\vT \coloneqq
\begin{pmatrix}
\vT_1\\
\vdots\\
\vT_H
\end{pmatrix}
\in\mathbb R^D,
\qquad
D\coloneqq HD_0=2Hdr.
\label{eq:mh-full-vector}
\end{equation}
The head-wise and total parameter norms satisfy
\begin{equation}
\lVert\vT_j\rVert = \Bigl( \lVert\mW_Q^{(j)}\rVert_F^2 + \lVert\mW_K^{(j)}\rVert_F^2 \Bigr)^{1/2}, \qquad \lVert\vT\rVert = \Bigl(\sum_{j=1}^H\lVert\vT_j\rVert^2\Bigr)^{1/2}.
\label{eq:mh-radii}
\end{equation}

\begin{lemma}[Row-Softmax Bounds]
\label{lem:mh-rowsoftmax-bounds}
For $\beta>0$ and $\mM\in\mathbb R^{t\times t}$, let
\begin{equation}
\mS \coloneqq \operatorname{RowSoftMax}_\beta(\mM), \qquad \mS_{pq} \coloneqq \frac{\exp(\beta\mM_{pq})} {\sum_{k=1}^t\exp(\beta\mM_{pk})}.
\label{eq:mh-softmax-entry-definition}
\end{equation}
Then
\begin{equation}
\lVert\mS\rVert_F\le\sqrt t.
\label{eq:mh-softmax-output-bound}
\end{equation}
Moreover, for every perturbation $\dd\mM$, the induced first and second differentials
satisfy
\begin{align}
\lVert\dd\mS\rVert_F &\le \beta B_{s'}\lVert\dd\mM\rVert_F, \qquad B_{s'}\coloneqq2, \label{eq:mh-softmax-jacobian-bound}\\
\lVert\dd^2\mS\rVert_F &\le \beta^2B_{s''}\lVert\dd\mM\rVert_F^2, \qquad B_{s''}\coloneqq6t^2,
\label{eq:mh-softmax-hessian-bound}
\end{align}
where
$\dd\mS=D\operatorname{RowSoftMax}_\beta(\mM)[\dd\mM]$ and
$\dd^2\mS=D^2\operatorname{RowSoftMax}_\beta(\mM)[\dd\mM,\dd\mM]$.\footnote{
Let $\mathcal{M}:=\mathbb{R}^{t\times d}$ and let $S:\mathcal{M}\to\mathcal{M}$ be
twice Fr\'echet differentiable at $\mM\in\mathcal{M}$. The first derivative of
$S$ at $\mM$ is the \emph{linear} map $\dd S(\mM):\mathcal{M}\to\mathcal{M}$ whose
value in the direction $\mH\in\mathcal{M}$ is written with square brackets,
\begin{equation}
\dd S(\mM)[\mH]\;:=\;\left.\frac{d}{d\tau}\right|_{\tau=0}S(\mM+\tau\mH)\;\in\;\mathcal{M},
\label{eq:first-derivative-bracket}
\end{equation}
and the second derivative at $\mM$ is the \emph{symmetric bilinear} map
$\dd^2S(\mM):\mathcal{M}\times\mathcal{M}\to\mathcal{M}$,
\begin{equation}
\dd^2S(\mM)[\mH,\mK]\;:=\;
\left.\frac{\partial^2}{\partial\tau\,\partial\sigma}\right|_{\tau=\sigma=0}
S\!\left(\mM+\tau\mH+\sigma\mK\right)
\;=\;
\left.\frac{d}{d\sigma}\right|_{\sigma=0} \dd S(\mM+\sigma\mK)[\mH].
\label{eq:second-derivative-bracket}
\end{equation}
Additional details regarding this notation can be found in Remark~\ref{rem:notationlem1} in Appendix~\ref{app:multihead-general-profile}.
% \note{this referencing is wrong}
}

\end{lemma}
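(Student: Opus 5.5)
Since $\operatorname{RowSoftMax}_\beta$ acts independently on each row, the plan is to reduce everything to the one-dimensional softmax map $\sigma:\R^t\to\Delta^{t-1}$, $\sigma(\vz)_q = e^{\beta z_q}/\sum_k e^{\beta z_k}$, applied to the rows $\mM_{p,:}$. The Frobenius norm squared is the sum of the squared row norms. So each bound will follow from the corresponding row-wise bound, summed over the $t$ rows, using $\sum_p\|\dd\mM_{p,:}\|^2=\|\dd\mM\|_F^2$.

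\textbf{Output bound.} Each row $\vp=\sigma(\vz)$ is a probability vector, so $\|\vp\|_2^2\le\|\vp\|_1\|\vp\|_\infty\le 1$. Summing over the $t$ rows gives $\|\mS\|_F^2\le t$, which is \eqref{eq:mh-softmax-output-bound}.

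\textbf{Jacobian bound.} The row Jacobian is $J(\vz)=\beta(\operatorname{diag}(\vp)-\vp\vp^\top)$. For a direction $\vu$ it gives
\[
J\vu=\beta\,\vp\odot(\vu-\langle\vp,\vu\rangle\1).
\]
We then estimate
\[
\|\vp\odot\vu\|_2\le\|\vp\|_\infty\|\vu\|_2\le\|\vu\|_2,
\qquad
\|\vp\,\langle\vp,\vu\rangle\|_2\le\|\vp\|_2\|\vp\|_2\|\vu\|_2\le\|\vu\|_2 .
\]
By the triangle inequality $\|J\vu\|\le 2\beta\|\vu\|$. Squaring and summing over rows yields \eqref{eq:mh-softmax-jacobian-bound} with $B_{s'}=2$.

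\textbf{Hessian bound.} This is the only step requiring care, and the generous constant $6t^2$ means we do not need to be sharp. Differentiate $\dd\vp=\beta\,\vp\odot(\vu-\bar u\1)$ again in the same direction $\vu$, where $\bar u=\langle\vp,\vu\rangle$. This gives
\[
\dd^2\vp=\beta\,\dd\vp\odot(\vu-\bar u\1)-\beta\,\vp\,\dd\bar u,
\qquad
\dd\bar u=\langle\dd\vp,\vu\rangle .
\]
Entrywise this equals $\beta^2 p_q\bigl[(u_q-\bar u)^2-\langle \vp,(\vu-\bar u\1)^{\odot 2}\rangle\bigr]$, i.e.\ $\beta^2 p_q$ times the centered square minus the variance of $\vu$ under $\vp$.

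Next we bound the bracket crudely. Since $|u_q-\bar u|\le 2\|\vu\|_\infty\le 2\|\vu\|_2$, we get $(u_q-\bar u)^2\le 4\|\vu\|^2$. The variance term is at most $\|\vu\|^2$. With $p_q\le1$, each entry is at most $5\beta^2\|\vu\|^2$ in absolute value, so
\[
\|\dd^2\vp\|_2\le 5\sqrt t\,\beta^2\|\vu\|_2^2 .
\]
Finally, sum over rows using $\sum_p\|\vu_p\|^4\le(\sum_p\|\vu_p\|^2)^2$. This gives $\|\dd^2\mS\|_F\le5\sqrt t\,\beta^2\|\dd\mM\|_F^2\le 6t^2\beta^2\|\dd\mM\|_F^2$, since $t\ge1$.

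The main obstacle is bookkeeping the second differential of the normalizer correctly, including the term coming from $\dd\bar u$. The bound itself has ample slack, so any crude triangle-inequality estimate suffices.
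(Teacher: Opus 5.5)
Your proof is correct. The output and Jacobian bounds match the paper essentially verbatim: the rows are probability vectors, and the row Jacobian $\beta(\operatorname{diag}(\vp)-\vp\vp^\top)$ has operator norm at most $2\beta$, summed over rows.

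The Hessian bound is where you take a genuinely different route. The paper writes out all within-row second partials $\partial^2\mS_{pq}/\partial\mM_{pk}\partial\mM_{p\ell}$ and bounds each entry by $6\beta^2$. It then counts the $t^4$ nonzero entries to get a tensor Frobenius norm of at most $6\beta^2t^2$, and contracts against $\dd\mM\otimes\dd\mM$ by Cauchy--Schwarz. The $t^2$ in $B_{s''}$ comes purely from that counting step.

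You instead use that the statement only concerns the diagonal $[\dd\mM,\dd\mM]$, and compute the second directional derivative in closed form as $\beta^2p_q[(u_q-\bar u)^2-\Var_{\vp}(\vu)]$. This is right, since $\dd\bar u=\langle\dd\vp,\vu\rangle=\beta\Var_{\vp}(\vu)$. Your route avoids the six-index array altogether and gives a sharper constant, $5\sqrt t$ instead of $6t^2$.

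In fact, keep the weight $p_q$ rather than bounding it by $1$, and use $\|\vp\|_2\le 1$ together with $|(u_q-\bar u)^2-\Var_{\vp}(\vu)|\le 4\|\vu\|_\infty^2$ (both terms are nonnegative). Then you get the $t$-free estimate $\|\dd^2\mS\|_F\le 4\beta^2\|\dd\mM\|_F^2$, which would also remove the $t^2$ factor from $C_{\Delta,H}$.

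The paper's entrywise computation is more mechanical and fits the Hessian-array viewpoint of its notational remark. It pays for that with an unnecessarily dimension-dependent constant. Since the Villani argument only needs finiteness, either route suffices.
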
 

\begin{lemma}[Gradient and Laplacian Bounds for the Multi-Head Attention Data Term]
\label{lem:mh-data-bounds}
Recall from Definition~\ref{as2}, that for every $i=1,\dots,n$, $\norm{\mX_i}_F\le B_x, \norm{\mY_i}_F\le B_y,$ and from Definition~\ref{def:mh-kq-attn} that $\norm{\mW_v}_F\le B_w$. For the empirical data term $\widehat R_{A,H}$ in Definition~\ref{def:mh-general-profile-potential},
\begin{equation}
\left\lVert\nabla_{\vT}\widehat R_{A,H}(\vT)\right\rVert
\le
C_{\nabla,H}\lVert\vT\rVert,
\label{eq:mh-data-gradient-bound}
\end{equation}
and
\begin{equation}
\left|\Delta_{\vT}\widehat R_{A,H}(\vT)\right|
\le
C_{\Delta,H}\lVert\vT\rVert^2,
\label{eq:mh-data-laplacian-bound}
\end{equation}
where
\begin{equation}
C_{\nabla,H}
\coloneqq
\left(H\sqrt t\,B_xB_w+B_y\right)
B_wB_x\beta B_{s'}
\frac{B_x^2}{\sqrt d},
\label{eq:mh-C-gradient}
\end{equation}
and
\begin{equation}
C_{\Delta,H}
\coloneqq
\frac{B_x^4}{d}
\left[
(\beta B_{s'}B_xB_w)^2
+
\left(H\sqrt t\,B_xB_w+B_y\right)
B_xB_w\beta^2B_{s''}
\right].
\label{eq:mh-C-laplacian}
\end{equation}
\end{lemma}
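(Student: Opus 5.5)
The plan is to differentiate each sample loss $\ell_i$ directly in matrix calculus, bound everything through the residual, and sum over $i$ at the end. By convexity of the bounds, the average over $i$ only needs the per-sample estimates. The proof rests on three ingredients: a uniform residual bound, a linear-in-$\lVert\vT\rVert$ bound on the first differential of the score matrices, and a Hilbert–Schmidt trace computation for the Laplacian, which is what keeps $d$, $r$ and $D$ out of the constants.

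\emph{Step 1 (residual).} Set $\mE_i \coloneqq \widehat{\mY}_i-\mY_i$. Lemma~\ref{lem:mh-rowsoftmax-bounds} gives $\lVert\mS_i^{(j)}\rVert_F\le\sqrt t$, and submultiplicativity gives $\lVert\mX_i\mW_v\rVert_F\le B_xB_w$. Hence, uniformly in $\vT$,
\[
\lVert\mE_i\rVert_F\le H\sqrt t\,B_xB_w+B_y\eqqcolon R_H .
\]

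\emph{Step 2 (gradient).} We have $\dd\ell_i=\langle \mE_i,\sum_j \dd\mS_i^{(j)}\,\mX_i\mW_v\rangle$, where
\[
\dd\mM_i^{(j)}=\tfrac{1}{\sqrt d}\,\mX_i\bigl(\dd\mW_Q^{(j)}(\mW_K^{(j)})^\top+\mW_Q^{(j)}(\dd\mW_K^{(j)})^\top\bigr)\mX_i^\top .
\]
This gives $\lVert\dd\mM_i^{(j)}\rVert_F\le \tfrac{B_x^2}{\sqrt d}\lVert\vT_j\rVert\,\lVert\dd\vT_j\rVert$, using Cauchy–Schwarz on the two summands. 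Combining with \eqref{eq:mh-softmax-jacobian-bound}, we get
\[
|\dd\ell_i|\le R_H B_xB_w\beta B_{s'}\tfrac{B_x^2}{\sqrt d}\sum_j\lVert\vT_j\rVert\lVert\dd\vT_j\rVert\le C_{\nabla,H}\lVert\vT\rVert\,\lVert\dd\vT\rVert .
\]
The last step uses Cauchy–Schwarz over the heads. Taking the supremum over unit $\dd\vT$ gives \eqref{eq:mh-data-gradient-bound}.

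\emph{Step 3 (Laplacian).} Write $\Delta\ell_i=\sum_k \dd^2\ell_i[\ve_k,\ve_k]$ for the coordinate basis $\{\ve_k\}$ of $\R^D$. Each $\ve_k$ perturbs exactly one entry of one matrix $\mW_Q^{(j)}$ or $\mW_K^{(j)}$ of a single head $j$. Then
\[
\dd^2\ell_i[\ve_k,\ve_k]=\lVert \dd\widehat{\mY}_i[\ve_k]\rVert_F^2+\langle\mE_i,\dd^2\widehat{\mY}_i[\ve_k,\ve_k]\rangle .
\]
Three structural facts handle these terms.
\begin{enumerate}[(i)]
\item Since $\ve_k$ lives in one head, $\dd\widehat{\mY}_i[\ve_k]=\dd\mS_i^{(j)}[\dd\mM_i^{(j)}[\ve_k]]\,\mX_i\mW_v$, with no cross-head terms.
\item Since $\mM^{(j)}$ is bilinear in $(\mW_Q^{(j)},\mW_K^{(j)})$ and $\ve_k$ moves only one factor, $\dd^2\mM_i^{(j)}[\ve_k,\ve_k]=0$. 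Thus $\dd^2\mS$ reduces to $D^2\operatorname{RowSoftMax}_\beta[\dd\mM,\dd\mM]$, which is bounded by \eqref{eq:mh-softmax-hessian-bound}.
\item Consequently both terms are controlled by $\sum_k\lVert\dd\mM_i^{(j)}[\ve_k]\rVert_F^2$, the squared Hilbert–Schmidt norm of the linear map $\dd\vT_j\mapsto\dd\mM_i^{(j)}$. For the $\mW_Q$-part this equals $\tfrac1d\lVert\mX_i\rVert_F^2\lVert\mX_i\mW_K^{(j)}\rVert_F^2\le\tfrac{B_x^4}{d}\lVert\mW_K^{(j)}\rVert_F^2$, and symmetrically for the $\mW_K$-part. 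Summing the two parts gives $\tfrac{B_x^4}{d}\lVert\vT_j\rVert^2$.
\end{enumerate}
Summing over $k$ and then over $j$, the two contributions are $(\beta B_{s'}B_xB_w)^2\tfrac{B_x^4}{d}\lVert\vT\rVert^2$ and $R_HB_xB_w\beta^2B_{s''}\tfrac{B_x^4}{d}\lVert\vT\rVert^2$. Together these give exactly $C_{\Delta,H}$ in \eqref{eq:mh-data-laplacian-bound}.

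The main obstacle is this Laplacian step. A naive bound of the Hessian operator norm times $D$ would introduce a factor $2Hdr$ into the constant. The two points that avoid this are the Hilbert–Schmidt computation in (iii) and the vanishing of the diagonal second differential of the bilinear score in (ii). I would verify carefully that the trace is taken in the coordinate basis, so that each basis direction touches only one factor of one head.
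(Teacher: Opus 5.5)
Your proposal is correct and follows essentially the same route as the paper: a uniform residual bound, the softmax Jacobian bound for the gradient, and for the Laplacian the coordinate-wise identity $\partial_\theta^2\ell_i=\lVert\partial_\theta\widehat{\mY}_i\rVert_F^2+\langle\mE_i,\partial_\theta^2\widehat{\mY}_i\rangle_F$, combined with $\partial_\theta^2\mM_i^{(j)}=0$ and the rank-one sum $\frac1d\lVert\mX_i\rVert_F^2\lVert\mX_i\mW_K^{(j)}\rVert_F^2$. The only difference is cosmetic: you bound the differential $\dd\ell_i$ directly by Cauchy--Schwarz within and across heads, whereas the paper writes out the block gradients $\nabla_{\mW_Q^{(j)}}\ell_i=\frac{1}{\sqrt d}\mX_i^\top(\nabla_{\mM_i^{(j)}}\ell_i)\mX_i\mW_K^{(j)}$ and sums their squares, and both give the same constant $C_{\nabla,H}$.
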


The above two lemmas are proved in Appendix~\ref{multiQK-Lemmas}.

\begin{proof}[Proof of Theorem~\ref{thm:mh-general-profile-villani}]
We verify the four requirements of Definition~\ref{def:villani} as follows. In Step~1, we establish the first requirement. In Step~2, we verify the second and third requirements. Finally, in Steps~3 to 6, we verify the fourth requirement.

\paragraph{\hspace{1.00em}\bf (Step 1: $C^2$-Regularity.)}
The first requirement in Definition~\ref{def:villani} is that the potential be twice
continuously differentiable. We recall from \eqref{eq:mh-general-potential} of Definition~\ref{def:mh-general-profile-potential},
\[
\widetilde V_{ H,\mathrm{ATT}}(\vT)
=
\widehat R_{A,H}(\vT)
+
R_{H}(\vT),
\]
where $R_{H}(\vT) = \frac{\lambda}{2H} \sum_{j=1}^H \phi\!\left(\lVert\vT_j\rVert\right)$. The data term will be shown below to be smooth. Thus, the only point requiring special
care is the regularizer. Although $\phi\in C^2([0,\infty))$ by (P1) of Assumption~\ref{ass:mh-admissible-profile}, the Euclidean norm
is not differentiable at the origin. We therefore verify the regularity of each head-wise
radial map separately at $\vT_j\neq\vzero$ and at $\vT_j=\vzero$.

For a fixed head $j$, consider the map
\begin{equation}
\vT_j
\longmapsto
\phi\!\left(\lVert\vT_j\rVert\right),
\qquad
\vT_j\in\mathbb R^{D_0}.
\label{eq:mh-block-radial-function}
\end{equation}
First suppose that $\vT_j\neq\vzero$. The Euclidean norm is smooth away from the origin,
with
\[
\nabla_{\vT_j}\lVert\vT_j\rVert
=
\frac{\vT_j}{\lVert\vT_j\rVert},
\qquad
\nabla_{\vT_j}^2\lVert\vT_j\rVert
=
\frac{1}{\lVert\vT_j\rVert}
\left(
\mI_{D_0}
-
\frac{\vT_j\vT_j^\top}{\lVert\vT_j\rVert^2}
\right).
\]
Hence, the chain rule gives
\begin{equation}
\nabla_{\vT_j}
\phi\!\left(\lVert\vT_j\rVert\right)
=
\frac{\phi'\!\left(\lVert\vT_j\rVert\right)}
{\lVert\vT_j\rVert}
\vT_j,
\qquad
\vT_j\neq\vzero,
\label{eq:mh-radial-gradient}
\end{equation}
and
\begin{align}
\nabla_{\vT_j}^2
\phi\!\left(\lVert\vT_j\rVert\right)
&=
\phi''\!\left(\lVert\vT_j\rVert\right)
\frac{\vT_j\vT_j^\top}{\lVert\vT_j\rVert^2}
+
\frac{\phi'\!\left(\lVert\vT_j\rVert\right)}
{\lVert\vT_j\rVert}
\left(
\mI_{D_0}
-
\frac{\vT_j\vT_j^\top}{\lVert\vT_j\rVert^2}
\right)
\notag\\
&=
\frac{\phi'\!\left(\lVert\vT_j\rVert\right)}
{\lVert\vT_j\rVert}
\mI_{D_0}
+
\left(
\phi''\!\left(\lVert\vT_j\rVert\right)
-
\frac{\phi'\!\left(\lVert\vT_j\rVert\right)}
{\lVert\vT_j\rVert}
\right)
\frac{\vT_j\vT_j^\top}{\lVert\vT_j\rVert^2}.
\label{eq:mh-radial-hessian}
\end{align}
Here $\frac{\vT_j\vT_j^\top}{\lVert\vT_j\rVert^2}$ is the rank-one orthogonal projector onto $\operatorname{span}\{\vT_j\}$, namely, the
radial direction. It has eigenvalue $1$ in the direction of $\vT_j$ and eigenvalue $0$
on its orthogonal complement. Therefore, $\left\lVert\frac{\vT_j\vT_j^\top}{\lVert\vT_j\rVert^2} \right\rVert_{\mathrm{op}} =1,$ where $\lVert\cdot\rVert_{\mathrm{op}}$ denotes the Euclidean operator norm.

It remains to examine the origin. By (P1) of Assumption~\ref{ass:mh-admissible-profile}, $\phi'(0)=0$, and the fundamental theorem of
calculus gives, for every $\vT_j\neq\vzero$,
\begin{equation}
\frac{\phi'\!\left(\lVert\vT_j\rVert\right)}
{\lVert\vT_j\rVert}
=
\int_0^1
\phi''\!\left(\theta\lVert\vT_j\rVert\right)\,d\theta
\longrightarrow
\phi''(0)
\qquad
\text{as }\vT_j\to\vzero.
\label{eq:mh-phi-prime-origin-limit}
\end{equation}
The limit follows from the continuity of $\phi''$ at the origin.

We first verify differentiability of the head-wise radial map at the origin. Since
$\phi'(0)=0$ and $\phi'$ is continuous,
\[
\frac{
\left|
\phi\!\left(\lVert\vT_j\rVert\right)-\phi(0)
\right|
}{
\lVert\vT_j\rVert
}
=
\left|
\int_0^1
\phi'\!\left(\theta\lVert\vT_j\rVert\right)\,d\theta
\right|
\longrightarrow
0
\qquad
\text{as }\vT_j\to\vzero.
\]
Thus,
\[
\left.
\nabla_{\vT_j}
\phi\!\left(\lVert\vT_j\rVert\right)
\right|_{\vT_j=\vzero}
=
\vzero.
\]
Moreover, using \eqref{eq:mh-radial-gradient} and
\eqref{eq:mh-phi-prime-origin-limit},
\[
\frac{
\left\lVert
\nabla_{\vT_j}\phi\!\left(\lVert\vT_j\rVert\right)
-
\phi''(0)\vT_j
\right\rVert
}{
\lVert\vT_j\rVert
}
=
\left|
\frac{\phi'\!\left(\lVert\vT_j\rVert\right)}
{\lVert\vT_j\rVert}
-
\phi''(0)
\right|
\longrightarrow
0
\qquad
\text{as }\vT_j\to\vzero.
\]
Therefore, the gradient is differentiable at the origin and
\[
\left.
\nabla_{\vT_j}^2
\phi\!\left(\lVert\vT_j\rVert\right)
\right|_{\vT_j=\vzero}
=
\phi''(0)\mI_{D_0}.
\]

Finally, we verify continuity of this Hessian at the origin. For this purpose, we
specifically let $\vT_j\to\vzero$. By continuity of the Euclidean norm,
\[
\vT_j\to\vzero
\quad\Longrightarrow\quad
\lVert\vT_j\rVert\to0.
\]

Thus, for $\vT_j\neq\vzero$, subtracting $\phi''(0)\mI_{D_0}$ from
\eqref{eq:mh-radial-hessian} gives,
\[
\nabla_{\vT_j}^2 \phi\!\left(\lVert\vT_j\rVert\right) - \phi''(0)\mI_{D_0} = \left( \frac{\phi'\!\left(\lVert\vT_j\rVert\right)} {\lVert\vT_j\rVert} - \phi''(0) \right) \mI_{D_0} + \left( \phi''\!\left(\lVert\vT_j\rVert\right) - \frac{\phi'\!\left(\lVert\vT_j\rVert\right)} {\lVert\vT_j\rVert} \right) \frac{\vT_j\vT_j^\top}{\lVert\vT_j\rVert^2}.
\]
Taking operator norms and applying the triangle inequality together with the
homogeneity of $\lVert\cdot\rVert_{\mathrm{op}}$,
\[
\left\lVert \nabla_{\vT_j}^2 \phi\!\left(\lVert\vT_j\rVert\right) - \phi''(0)\mI_{D_0} \right\rVert_{\mathrm{op}}
\le
\left| \frac{\phi'\!\left(\lVert\vT_j\rVert\right)} {\lVert\vT_j\rVert} - \phi''(0) \right| \left\lVert \mI_{D_0}\right\rVert_{\mathrm{op}}
+
\left| \phi''\!\left(\lVert\vT_j\rVert\right) - \frac{\phi'\!\left(\lVert\vT_j\rVert\right)} {\lVert\vT_j\rVert} \right| \left\lVert \frac{\vT_j\vT_j^\top}{\lVert\vT_j\rVert^2} \right\rVert_{\mathrm{op}}.
\]
Since $\left\lVert \mI_{D_0}\right\rVert_{\mathrm{op}}=1$ and, as noted after
\eqref{eq:mh-radial-hessian}, $\left\lVert \frac{\vT_j\vT_j^\top}{\lVert\vT_j\rVert^2}
\right\rVert_{\mathrm{op}}=1$, we conclude
\begin{align}
\left\lVert\nabla_{\vT_j}^2 \phi\!\left(\lVert\vT_j\rVert\right) - \phi''(0)\mI_{D_0} \right\rVert_{\mathrm{op}} 
\le 
\left| \frac{\phi'\!\left(\lVert\vT_j\rVert\right)} {\lVert\vT_j\rVert} - \phi''(0) \right| + \left| \phi''\!\left(\lVert\vT_j\rVert\right) - \frac{\phi'\!\left(\lVert\vT_j\rVert\right)} {\lVert\vT_j\rVert} \right|.
\label{eq:mh-hessian-origin-bound}
\end{align}

Both terms on the right-hand side tend to zero as $\vT_j\to\vzero$, by
\eqref{eq:mh-phi-prime-origin-limit} and the continuity of $\phi''$ at the origin.
Consequently,
\begin{equation}
\left(
\vT_j
\longmapsto
\phi\!\left(\lVert\vT_j\rVert\right)
\right)
\in C^2(\mathbb R^{D_0}),
\qquad
\left.
\nabla_{\vT_j}
\phi\!\left(\lVert\vT_j\rVert\right)
\right|_{\vT_j=\vzero}
=
\vzero,
\qquad
\left.
\nabla_{\vT_j}^2
\phi\!\left(\lVert\vT_j\rVert\right)
\right|_{\vT_j=\vzero}
=
\phi''(0)\mI_{D_0}.
\label{eq:mh-radial-C2}
\end{equation}
Applying \eqref{eq:mh-radial-C2} independently to all head blocks shows that $R_{ H}(\vT)\in C^2(\mathbb R^D).$

For every $i$ and $j$, the map
\begin{equation}
(\mW_Q^{(j)},\mW_K^{(j)})
\longmapsto
\frac{1}{\sqrt d}
\mX_i\mW_Q^{(j)}(\mW_K^{(j)})^\top\mX_i^\top
\label{eq:mh-polynomial-score-map}
\end{equation}
is polynomial in the matrix entries, while the row-softmax map is infinitely
differentiable. Since sums, matrix products, and the squared Frobenius norm preserve
smoothness, it follows that
\begin{equation}
\widehat R_{A,H}\in C^\infty(\mathbb R^D),
\qquad
\widetilde V_{ H,\mathrm{ATT}}\in C^2(\mathbb R^D).
\label{eq:mh-total-C2}
\end{equation}

\paragraph{\hspace{1.00em}\bf (Step 2: Confinement and Integrability.)} 
Fix $M>0$. By (P2) of Assumption~\ref{ass:mh-admissible-profile}, we have $\phi'(a)/a\to+\infty$ as $a\to\infty$. By the definition of divergence to $+\infty$, applied with the threshold $M$, there exists $R_M>0$ such that
\[
\frac{\phi'(a)}{a}\ge M
\qquad
\text{for every }a\ge R_M.
\]
Since $a>0$, multiplying both sides by $a$ preserves the inequality and yields,
\begin{equation}
\phi'(a)\ge Ma
\qquad
\text{for every }a\ge R_M.
\label{eq:mh-slope-lower-bound}
\end{equation}

Integrating \eqref{eq:mh-slope-lower-bound} gives
\begin{equation}
\phi(a)\ge \frac{M}{2}a^2-C_M
\qquad
\text{for every }a\ge0,
\label{eq:mh-global-phi-lower-bound}
\end{equation}
where one may take
\begin{equation}
C_M
\coloneqq
\max_{0\le a\le R_M}
\left(\frac{M}{2}a^2-\phi(a)\right)<\infty.
\label{eq:mh-CM-definition}
\end{equation}
Since $\widehat R_{A,H}\ge0$, Equations~\eqref{eq:mh-general-potential},
\eqref{eq:mh-radii}, and \eqref{eq:mh-global-phi-lower-bound} imply
\begin{align}
\widetilde V_{ H,\mathrm{ATT}}(\vT)
&\ge
\frac{\lambda}{2H}\sum_{j=1}^H\phi\!\left(\lVert\vT_j\rVert\right)
\ge
\frac{\lambda M}{4H}\sum_{j=1}^H\lVert\vT_j\rVert^2
-
\frac{\lambda}{2}C_M 
=
\frac{\lambda M}{4H}\lVert\vT\rVert^2
-
\frac{\lambda}{2}C_M.
\label{eq:mh-confining-lower-bound}
\end{align}
It follows that
\begin{equation}
\lim_{\lVert\vT\rVert\to\infty}
\widetilde V_{ H,\mathrm{ATT}}(\vT)=+\infty.
\label{eq:mh-confinement}
\end{equation}
Furthermore, for every $s>0$, Equation~\eqref{eq:mh-confining-lower-bound} gives a Gaussian
upper bound for
$\exp(-2\widetilde V_{ H,\mathrm{ATT}}/s)$ outside a compact ball. Hence
\begin{equation}
\int_{\mathbb R^D}
\exp\!\left(
-\frac{2}{s}\widetilde V_{ H,\mathrm{ATT}}(\vT)
\right)d\vT
<\infty.
\label{eq:mh-integrability}
\end{equation}

\paragraph{\hspace{1.00em}\bf (Step 3: Gradient and Laplacian of the Regularizer.)}
For $\vT_j\neq\vzero$, Equation~\eqref{eq:mh-radial-gradient} gives
\begin{equation}
\nabla_{\vT_j}R_{ H}(\vT)
=
\frac{\lambda}{2H}
\frac{\phi'\!\left(\lVert\vT_j\rVert\right)}{\lVert\vT_j\rVert}\vT_j.
\label{eq:mh-regularizer-block-gradient}
\end{equation}
At $\vT_j=\vzero$, this gradient equals $\vzero$ by
\eqref{eq:mh-radial-C2}. Consequently,
\begin{equation}
\left\lVert
\nabla_{\vT}R_{ H}(\vT)
\right\rVert^2
=
\frac{\lambda^2}{4H^2}
G(\vT),
\qquad
G(\vT)
\coloneqq
\sum_{j=1}^H\phi'\!\left(\lVert\vT_j\rVert\right)^2.
\label{eq:mh-G-definition}
\end{equation}
% \note{Are you trying to say below that "We define $\gL_\phi$ as the trace of the Hessian in \eqref{eq:mh-radial-hessian}"?}

% The continuous extension at the origin is
% \begin{equation}
% \mathcal L_\phi(0)
% \coloneqq
% D_0\phi''(0),
% \label{eq:mh-radial-laplacian-origin}
% \end{equation}
% which agrees with the limit in \eqref{eq:mh-phi-prime-origin-limit} and the continuity of
% $\phi''$.
We now compute the Laplacian explicitly. For $\vT_j\neq\vzero$, applying
the radial Hessian formula \eqref{eq:mh-radial-hessian} to the $j$-th block and taking
its trace gives
\begin{align}
\Delta_{\vT_j}\phi\!\left(\lVert\vT_j\rVert\right)
&=
\operatorname{tr}\!\left[
\frac{\phi'\!\left(\lVert\vT_j\rVert\right)}{\lVert\vT_j\rVert}
\mI_{D_0}
+
\left(
\phi''\!\left(\lVert\vT_j\rVert\right)
-
\frac{\phi'\!\left(\lVert\vT_j\rVert\right)}{\lVert\vT_j\rVert}
\right)
\frac{\vT_j\vT_j^\top}{\lVert\vT_j\rVert^2}
\right]
\notag\\
&=
D_0\frac{\phi'\!\left(\lVert\vT_j\rVert\right)}{\lVert\vT_j\rVert}
+
\left(
\phi''\!\left(\lVert\vT_j\rVert\right)
-
\frac{\phi'\!\left(\lVert\vT_j\rVert\right)}{\lVert\vT_j\rVert}
\right)
\notag\\
&=
\phi''\!\left(\lVert\vT_j\rVert\right)
+
(D_0-1)
\frac{\phi'\!\left(\lVert\vT_j\rVert\right)}{\lVert\vT_j\rVert}
=
\mathcal L_\phi\!\left(\lVert\vT_j\rVert\right),
\label{eq:mh-block-laplacian-nonzero}
\end{align}
where for $a>0$ define
\begin{equation}
\mathcal L_\phi(a)
\coloneqq
\phi''(a)+(D_0-1)\frac{\phi'(a)}{a}.
\label{eq:mh-radial-laplacian-profile}
\end{equation}
Here we used
\begin{equation}
\operatorname{tr}(\mI_{D_0})=D_0,
\qquad
\operatorname{tr}\!\left(
\frac{\vT_j\vT_j^\top}{\lVert\vT_j\rVert^2}
\right)
=
\frac{\operatorname{tr}(\vT_j\vT_j^\top)}{\lVert\vT_j\rVert^2}
=1.
\label{eq:mh-projector-trace}
\end{equation}
At $\vT_j=\vzero$, Equation~\eqref{eq:mh-radial-C2} gives
\begin{equation}
\left.
\Delta_{\vT_j}\phi\!\left(\lVert\vT_j\rVert\right)
\right|_{\vT_j=\vzero}
=
\operatorname{tr}\!\left(\phi''(0)\mI_{D_0}\right)
=
D_0\phi''(0)
=
\mathcal L_\phi(0).
\label{eq:mh-block-laplacian-origin}
\end{equation}
Thus,
\begin{equation}
\Delta_{\vT_j}\phi\!\left(\lVert\vT_j\rVert\right)
=
\mathcal L_\phi\!\left(\lVert\vT_j\rVert\right)
\qquad
\text{for every }\vT_j\in\mathbb R^{D_0}.
\label{eq:mh-block-laplacian}
\end{equation}
Finally, recall from \eqref{eq:mh-general-potential} that the $j$-th summand of
$R_{ H}$ depends only on the block $\vT_j$. Hence the full Laplacian is the
sum of the block Laplacians:
\begin{align}
\Delta_{\vT}R_{ H}(\vT)
&=
\frac{\lambda}{2H}
\sum_{j=1}^H
\Delta_{\vT_j}\phi\!\left(\lVert\vT_j\rVert\right) = \frac{\lambda}{2H} \sum_{j=1}^H\mathcal L_\phi\!\left(\lVert\vT_j\rVert\right). \label{eq:mh-regularizer-laplacian}
\end{align}
Set $\kappa(a)\coloneqq[\mathcal L_\phi(a)]_+, K(\vT)\coloneqq\sum_{j=1}^H\kappa\!\left(\lVert\vT_j\rVert\right).$ Then
\begin{equation}
\Delta_{\vT}R_{ H}(\vT)
\le
\frac{\lambda}{2H}K(\vT).
\label{eq:mh-regularizer-laplacian-upper}
\end{equation}

\medskip 
\paragraph{\hspace{1.00em}\bf (Step 4: Multi-Head Asymptotic Domination.)}
We first show that the squared regularizer gradient dominates the total quadratic scale:
\begin{equation}
\lim_{\lVert\vT\rVert\to\infty}
\frac{G(\vT)}{\lVert\vT\rVert^2}=+\infty.
\label{eq:mh-G-dominates-rho}
\end{equation}
Indeed, fix $M>0$, and choose $R_M$ as in
\eqref{eq:mh-slope-lower-bound}. Let $I_M(\vT) \coloneqq \{j\in\{1,\ldots,H\}:\lVert\vT_j\rVert\ge R_M\}.$ Then
\begin{align}
G(\vT) \ge \sum_{j\in I_M(\vT)}\phi'\!\left(\lVert\vT_j\rVert\right)^2 \ge\quad M^2\sum_{j\in I_M(\vT)}\lVert\vT_j\rVert^2 \ge\quad M^2\bigl(\lVert\vT\rVert^2-HR_M^2\bigr).
\label{eq:mh-G-lower-bound}
\end{align}
Dividing by $\lVert\vT\rVert^2$ and letting $\lVert\vT\rVert\to\infty$ gives
\begin{equation}
\liminf_{\lVert\vT\rVert\to\infty}
\frac{G(\vT)}{\lVert\vT\rVert^2}\ge M^2.
\label{eq:mh-G-liminf}
\end{equation}
Since $M>0$ is arbitrary, \eqref{eq:mh-G-dominates-rho} follows. In particular,
\begin{equation}
G(\vT)\to\infty
\qquad
\text{as }\lVert\vT\rVert\to\infty.
\label{eq:mh-G-infinity}
\end{equation}

We next show that the positive part of the regularizer Laplacian is negligible compared with
$G$:
\begin{equation}
\lim_{\lVert\vT\rVert\to\infty}\frac{K(\vT)}{G(\vT)}=0.
\label{eq:mh-K-over-G}
\end{equation}
By (P2) of Assumption~\ref{ass:mh-admissible-profile}, $\phi'(a)>0$ for all sufficiently large $a$. Hence, for all sufficiently large $a$,
\begin{align}
\frac{\kappa(a)}{\phi'(a)^2}
&\le
\frac{[\phi''(a)]_+}{\phi'(a)^2}
+
(D_0-1)\frac{1}{a\phi'(a)}.
\label{eq:mh-kappa-pointwise-bound}
\end{align}
The first term tends to zero by (P3) of Assumption~\ref{ass:mh-admissible-profile}. The second term tends to zero because
\begin{equation}
\frac{1}{a\phi'(a)}
=
\frac{1}{a^2}\frac{1}{\phi'(a)/a}
\longrightarrow0
\label{eq:mh-angular-laplacian-limit}
\end{equation}
by (P2) of Assumption~\ref{ass:mh-admissible-profile}. Thus
\begin{equation}
\lim_{a\to\infty}\frac{\kappa(a)}{\phi'(a)^2}=0.
\label{eq:mh-kappa-over-phi-prime}
\end{equation}
Fix $\delta>0$. By \eqref{eq:mh-kappa-over-phi-prime}, there exists $R_\delta>0$ such that
\begin{equation}
\kappa(a)\le\delta\phi'(a)^2
\qquad
\text{for every }a\ge R_\delta.
\label{eq:mh-kappa-epsilon-bound}
\end{equation}
Since $\kappa$ is continuous, the constant $C_\delta \coloneqq  \max_{0\le a\le R_\delta}\kappa(a)$ is finite. Therefore, $K(\vT) \le \delta G(\vT)+HC_\delta.$ Divide by $G(\vT)$, use \eqref{eq:mh-G-infinity}, and then let $\delta\downarrow0$ to obtain
\eqref{eq:mh-K-over-G}.

\medskip 
\paragraph{\hspace{1.00em}\bf (Step 5: Data-Term Bounds.)}
We recall the bounds \eqref{eq:mh-data-gradient-bound} and
\eqref{eq:mh-data-laplacian-bound} from Lemma~\ref{lem:mh-data-bounds},
\begin{equation}
\left\lVert\nabla_{\vT}\widehat R_{A,H}(\vT)\right\rVert
\le
C_{\nabla,H}\lVert\vT\rVert,
\label{eq:mh-data-gradient-bound-prf}
\end{equation}
and
\begin{equation}
\left|\Delta_{\vT}\widehat R_{A,H}(\vT)\right|
\le
C_{\Delta,H}\lVert\vT\rVert^2.
\label{eq:mh-data-laplacian-bound-prf}
\end{equation}
\medskip 
\paragraph{\hspace{1.00em}\bf (Step 6: Villani Limit.)}
For arbitrary vectors $\va,\vb$,
\begin{equation}
\lVert\va+\vb\rVert^2
\ge
\frac12\lVert\vb\rVert^2-\lVert\va\rVert^2.
\label{eq:mh-young-gradient-bound}
\end{equation}
Apply \eqref{eq:mh-young-gradient-bound} with
$\va=\nabla_{\vT}\widehat R_{A,H}$ and
$\vb=\nabla_{\vT}R_{ H}$. Using
\eqref{eq:mh-G-definition} and \eqref{eq:mh-data-gradient-bound-prf}, we get
\begin{equation}
\left\lVert
\nabla_{\vT}\widetilde V_{ H,\mathrm{ATT}}(\vT)
\right\rVert^2
\ge
\frac{\lambda^2}{8H^2}G(\vT)
-
C_{\nabla,H}^2\lVert\vT\rVert^2.
\label{eq:mh-full-gradient-lower}
\end{equation}
Similarly, Equations~\eqref{eq:mh-regularizer-laplacian-upper} and
\eqref{eq:mh-data-laplacian-bound-prf} imply
\begin{equation}
\Delta_{\vT}\widetilde V_{ H,\mathrm{ATT}}(\vT)
\le
C_{\Delta,H}\lVert\vT\rVert^2
+
\frac{\lambda}{2H}K(\vT).
\label{eq:mh-full-laplacian-upper}
\end{equation}
Combining \eqref{eq:mh-full-gradient-lower} and
\eqref{eq:mh-full-laplacian-upper} yields
\begin{align}
\frac1s
\left\lVert
\nabla_{\vT}\widetilde V_{ H,\mathrm{ATT}}(\vT)
\right\rVert^2
-
\Delta_{\vT}\widetilde V_{ H,\mathrm{ATT}}(\vT)
&\ge
\frac{\lambda^2}{8sH^2}G(\vT)
-
\left(\frac{C_{\nabla,H}^2}{s}+C_{\Delta,H}\right)\lVert\vT\rVert^2
-
\frac{\lambda}{2H}K(\vT)
\notag\\
&=
G(\vT)
\left[
\frac{\lambda^2}{8sH^2}
-
\left(\frac{C_{\nabla,H}^2}{s}+C_{\Delta,H}\right)
\frac{\lVert\vT\rVert^2}{G(\vT)}
-
\frac{\lambda}{2H}\frac{K(\vT)}{G(\vT)}
\right].
\label{eq:mh-villani-factorization}
\end{align}
By \eqref{eq:mh-G-dominates-rho} and \eqref{eq:mh-K-over-G}, in the required limit, the expression in brackets
converges to
\begin{equation}
\frac{\lambda^2}{8sH^2}>0.
\label{eq:mh-positive-bracket-limit}
\end{equation}
By \eqref{eq:mh-G-infinity}, the prefactor $G(\vT)$ tends to $+\infty$. Hence
\begin{equation}
\lim_{\lVert\vT\rVert\to\infty}
\left[
\frac1s
\left\lVert
\nabla_{\vT}\widetilde V_{ H,\mathrm{ATT}}(\vT)
\right\rVert^2
-
\Delta_{\vT}\widetilde V_{ H,\mathrm{ATT}}(\vT)
\right]
=+\infty.
\label{eq:mh-villani-limit}
\end{equation}
Since $s>0$ was arbitrary, all four requirements in Definition~\ref{def:villani} hold.
\end{proof}

\section{An Empirical Study of Regularized Learning of Key and Query Matrices}\label{sec:exp}

% \note{
% 
% \begin{align} 
% \mathbb{R}^{t \times d} \ni \mX \mapsto {\rm Attention}(\mX) \coloneqq &{\rm RowSoftMax}_\beta \left(\frac{\mX \mW_Q \mW_K^\top \mX^\top}{\sqrt{d}}\right) \mX \mW_v \in \mathbb{R}^{t \times d} \\
% &{\rm where,}  ~\mW_Q, \mW_K \in \R^{d \times r} ~\text{and} ~\mW_V \in \R^{d \times d} \nonumber\\
% &{\rm and ~for ~any } ~\mM \in \R^{t \times t}, ~[{\rm RowSoftMax}_\beta(\mM)]_{ij} = \frac{e^{\beta \mM_{ij}}}{\sum_{k=1}^t e^{\beta \mM_{ik}}} \nonumber
% \end{align}
% 
% }

Towards demonstrating an use of our regularized attention losses, we study the two-dimensional Darcy Flow PDE --- which is popularly used as a benchmark in scientific-ML,
\begin{equation}
  -\nabla \cdot \bigl(a(\mathbf{x})\,\nabla u(\mathbf{x})\bigr) = f(\vx),
  \quad \mathbf{x} \in (0,1)^2,
  \label{eq:darcy}
\end{equation}
where $a$ is a spatially varying permeability (diffusion) coefficient, 
$u$ is the unknown pressure field and $f$ is a source function. The regression task is to learn a mapping $a \mapsto u$ from discretised input
fields to discretised solution fields.
Following the benchmark introduced by \cite{li2021fourier}, fields are
discretised on a uniform $64 \times 64$ grid (bilinear-downsampled from the
native $421\times 421$ resolution).
We use $N_{\mathrm{train}} = 900$ samples for training and
$N_{\mathrm{test}} = 124$ held-out samples for evaluation.
Both input and output fields are independently standardised using training-set
mean and standard deviation.

\subsection*{Model Architecture}

We use a patch-based single-head attention regressor
defined as follows.
\begin{itemize}
    \item (Tokenisation)
    Each $64\times 64$ input field $a$ is divided into non-overlapping
    $4\times 4$ patches $\vp_i$, yielding $t = (64/4)^2 = 256$ patches/tokens.
    \item (Embedding)
    Patches are projected to a $d$-dimensional
    representation via a two-stage convolutional encode: 
    $\mathbf{x}_i = \text{ConvEncoder}(\mathbf{p}_i$), and then a flattening layer.
    $\mathbf{X}$ is then given by $[\mathbf{x}_0, \mathbf{x}_1, ..., \mathbf{x}_t]^\top\in\R^{t\times d}$. 
    \item (Positional encoding) 
    Learnable 2-D positional bias $\mP$ for conv-token grid is added to the patch embeddings: $\hat{\mX} = \mX + \mP$. 
    \item (Multi-head attention)
    We fix the number of attention heads to $H=4$. For each head $j\in\{1,\dots,4\}$, query and key projections are
    \begin{equation*}
      \mQ^{(j)} = \hat{\mX}\mW_Q^{(j)},\quad
      \mK^{(j)} = \hat{\mX}\mW_K^{(j)},
      \quad \mW_Q^{(j)},\mW_K^{(j)} \in \mathbb{R}^{d\times r},
    \end{equation*}
    and the corresponding head-wise attention map, with temperature fixed to $1$ ($\beta=1$), is
    \begin{equation*}
      \mA^{(j)} = {\rm RowSoftMax}_\beta\!\left(\frac{\mQ^{(j)} (\mK^{(j)})^\top}{\sqrt{d}}\right) \in \mathbb{R}^{t\times t}.
    \end{equation*}
    The four head-wise maps are aggregated by summation and passed through a single shared, fixed value
    transform $\mW_v \in \R^{t\times t}$ (with $\norm{\mW_v}_F \le B_w$), so that the model output is
    \begin{equation*}
      \widehat{\mY} = \left(\sum_{j=1}^4 \mA^{(j)}\right)\mX\mW_v \in \R^{t\times d},
    \end{equation*}
    consistent with Definition~\ref{def:mh-kq-attn}.
    % \item (Single-head attention)
    % Query, key, and value projections are
    % \begin{equation}
    %   \mQ = \hat{\mX}\mW_Q,\quad
    %   \mK = \hat{\mX}\mW_K,\quad
    %   \mV = \hat{\mX}\mW_V,
    %   \quad \mW_Q,\mW_K \in \mathbb{R}^{d\times r},\;
    %   \mW_V \in \mathbb{R}^{d\times d}.
    % \end{equation}
    % The attention map is computed with temperature fixed to $1$ ($\beta=1$),
    %   $\mA = {\rm RowSoftMax}_\beta\!\left(\frac{\mQ \mK^\top}{\sqrt{d}}\right) \in \mathbb{R}^{t\times t}$.
\end{itemize}

% \note{\Large text needs to be updated to account for H =4 and the changed plots!}

The representation $\displaystyle{\rm Attention}(\mX) = \left(\sum_{j=1}^4 \mA^{(j)}\right)\mX\mW_v \in \mathbb{R}^{t\times d}$ is projected back to patch space by a two-layer net and the resulting patches are rearranged into the predicted $64\times 64$ output field $\hat{u}$. All dimensions are set to $r = d = 64$, with $H=4$ heads.

\subsection*{Two-Phase Training Protocol}
\paragraph{\bf Phase~1 (full pretraining): }
All parameters, embedding layers, projections $\mW^{(j)}_Q$, $\mW^{(j)}_K$, $\mW^{(j)}_V$, for all $j \in \{1,\dots,4\}$ and output MLP, are jointly optimized with plain mean-squared error (MSE) for $500$ epochs using Adam~\cite{kingma2014adam} with learning rate $\eta = 10^{-3}$ and batch size $32$. This phase yields the \emph{task-optimal} values for all weight matrices. 
%$\mW_V^*$ and $\mW_{\mathrm{out}}^*$.

\paragraph{\bf Phase~2 (ablation on ways of regularized training of Query and Key Matrices):}
The goal is to isolate the effect of norm regularisation on the query and key
matrices. Apart from the key and the query weight all other parameters are set to their Phase~1 values.
Then $\mW^{(j)}_Q$ and  $\mW^{(j)}_K$ for all $j \in \{1,\dots,4\}$ are \emph{re-initialised} identically across all three runs
to their initial values used in Phase~1.
Three objectives are then compared over $100$ additional epochs,

\begin{enumerate}
  \item \textbf{Unregularised:}
  \begin{equation}
    \mathcal{L}_{0} = \hat{R}_*(\mT),
    \label{eq:loss-none}
  \end{equation}
  \item \textbf{Log Amplified Frobenius regularization:}
  \begin{equation}
    \mathcal{L}_{\log} = \hat{R}_*(\mT) + R_{H}(\mT), \;
    R_{H}(\mT) \coloneqq \frac{\lambda}{2 \cdot 4}\sum_{j=1}^4\phi\!\left(\Bigl(\norm{\mW_Q^{(j)}}_F^2 + \norm{\mW_K^{(j)}}_F^2\Bigr)^{1/2}\right), \;
    \phi(r) = r^2 \log(1+r^2).
    \label{eq:loss-log}
  \end{equation}
  \item \textbf{Frobenius norm:}
  \begin{equation}
    \mathcal{L}_{frob} = = \hat{R}_*(\mT) + R_{H}(\mT), \;
    R_{H}(\mT) \coloneqq \frac{\lambda}{2 \cdot 4}\sum_{j=1}^4\phi\!\left(\Bigl(\norm{\mW_Q^{(j)}}_F^2 + \norm{\mW_K^{(j)}}_F^2\Bigr)^{1/2}\right), \;
    \phi(r) = r^2.
    \label{eq:loss-power}
  \end{equation}
\end{enumerate}
% \note{the third loss above should change}
% \note{the definition of S should change everywhere}

where $\mT$ is as defined in \eqref{eq:mh-full-vector}  and $\hat{R}_*(\mT)$ denotes the MSE loss with embedding layers, $\mW_V$ and output MLP
frozen at their optimal values.

In all Phase~2 runs we use Adam with $\eta = 10^{-3}$, batch size $32$ (as in Phase-1), and hyper-parameters $\lambda = 10^{-5}$ for \textit{log} and $\lambda=10^{-4}$ for \textit{frobenius-norm}.

% \note{are these $\lambda$s and $\eta$s still the same?}

\begin{figure}
    \centering
    \includegraphics[width=1.0\linewidth]{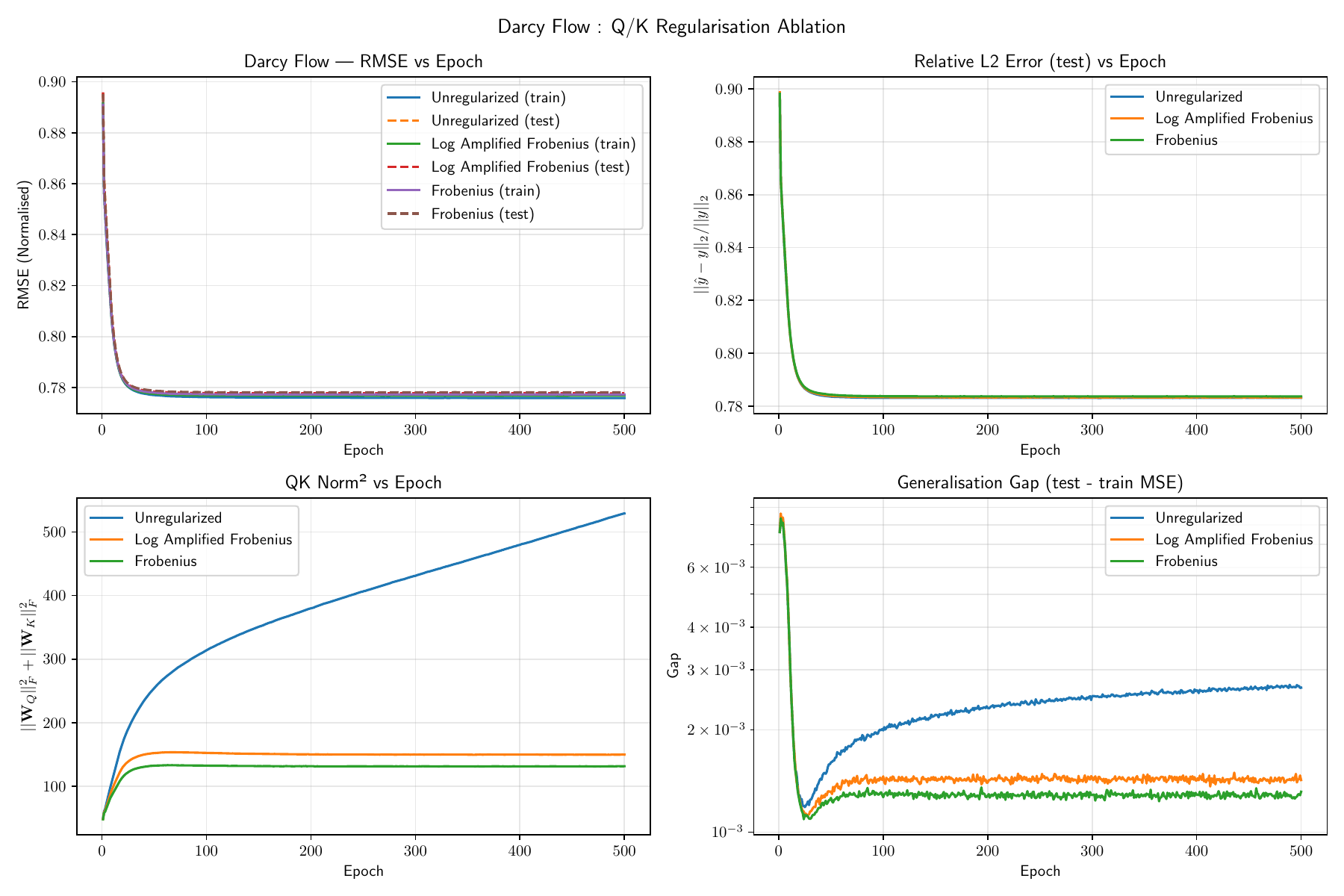}
    \caption{Per-epoch and per-run statistics for different loss functions}
    \label{fig:stats}
\end{figure}

% \begin{figure}
%     \centering
%     \includegraphics[width=0.7\linewidth]{Images/darcy_method_comparison_heatmaps.pdf}
%     \caption{Darcy flow predictions via different loss functions in 2nd Phase. }
%     \label{fig:placeholder}
% \end{figure}

\subsection*{Metrics}
We track the following quantities per epoch and per run.

\begin{itemize}
  \item \textbf{Train/Test RMSE:}
        $\sqrt{\mathbb{E}[\|\hat{u}-u\|^2]}$ in normalised target space.

  \item \textbf{Test Relative L2 Error:}
        %\begin{equation}
          $\varepsilon_{\mathrm{rel}}
          = \frac{\|\hat{u} - u\|_2}{\|u\|_2},$
        %\end{equation}
        the standard PDE-operator-learning benchmark metric.
        
  \item \textbf{Q/K Norm$^2$:}
        $\|\mW_Q\|_F^2 + \|\mW_K\|_F^2$.

  \item \textbf{Generalisation gap:}
        $\Delta = \text{Test MSE} - \text{Train MSE}$.

\end{itemize}

\subsection{Results analysis}
In the very small regularization regime $(\epsilon = 10^{-6}$ and $\lambda=10^{-4}$ or $10^{-5}$), 
all methods, no regularization, ``log'' regularization, and ``frobenius-norm'' regularisation, exhibit nearly identical convergence behaviour in both normalised test/train RMSE and test relative L2 error. 
As shown in Figure~\ref{fig:stats}, all configurations rapidly decay from their initial error and stabilise at approximately the same performance level ($0.78$), indicating that at this scale of regularization, 
there is no meaningful difference in predictive accuracy.

Despite the similarity in error metrics, 
substantial differences emerge in the internal dynamics of the model. 
In particular, the Frobenius norm ($||\mW_Q||^2_F + ||\mW_K||^2_F$) grows steadily in the absence of regularization, reaching significantly larger values over training. 
In contrast, both ``log" and ``frobenius-norm" regularization effectively constrain this growth, with power regularization enforcing the strongest suppression and log regularization yielding a slightly higher but still stable norm plateau. {\em Firstly,} these results demonstrate that even weak regularization of the kind studied here can meaningfully alter the scaling of the attention mechanism without impacting predictive performance.

In the generalization gap plot in Figure~\ref{fig:stats}, all methods initially exhibit a rapid reduction in gap but the unregularized model shows a gradual increase over time, suggesting mild overfitting. Both log and frobenius-norm regularization mitigate this effect, maintaining a consistently lower and more stable gap throughout training, with power again providing the strongest control. {\em Secondly,} this result indicates that despite comparable test error  improved generalization stability is induced by the regularizers being studied.

Overall, these findings suggest that the regularization studied in the presented theory primarily influences model stability. 
In low-$\lambda$ regimes, log regularisation provides a robust default by stabilising the key and query weight norms with minimal intervention, while power regularisation is preferable when stronger suppression of attention weight growth is desired.

\section{Conclusion}\label{sec:conclusion}

This work establishes that the SDE, that mimics the SGD, converges for both attention layers and depth-2 neural networks trained with LoRA, for arbitrary data and network sizes, even under arbitrarily low regularization, and we further provides non-asymptotic convergence rates.

A natural next question is whether the loss function on the attention layer with it's key-query-value matrices using LoRA, satisfy the Villani conditions in the corresponding space of factor matrices. More generally, it remains open to prove convergence guarantees for a full transformer layer in which all three matrices per head ($\mW_Q$, $\mW_K$, $\mW_V$) and the feedforward network weights are trained jointly, with one or more of these components potentially using LoRA.

An interesting direction for future work is to extend our guarantees to more advanced sequence models such as FlashAttention \cite{dao2022flashattention}, Performers \cite{choromanski2021rethinking}, Mamba \cite{gu2023mamba}. In contrast to vanilla attention, which computes softmax attention, these methods modify the computation in different ways. FlashAttention \cite{dao2022flashattention} computes the same softmax attention but using a memory-efficient tiled algorithm, which retains the same mathematical structure as vanilla attention and is thus likely amenable to a similar analysis. Performers \cite{choromanski2021performer} approximates the softmax kernel $\exp(\mQ\mK^\top)$ using positive random feature maps $\phi(\mQ)$ and $\phi(\mK)$, resulting in an approximate attention computation of the form
$\text{Attn}(\mQ,\mK,\mV) \approx \frac{\phi(\mQ)(\phi(\mK)^\top \mV)}{\phi(\mQ)(\phi(\mK)^\top \mathbf{1})},$ which reduces the computational complexity. Mamba \cite{gu2023mamba} replaces attention entirely with an input-dependent state-space model (SSM), yielding a linear-time recurrence. Extending convergence guarantees to these alternative attention mechanisms remains an open problem.

%    Text of article.
%    Bibliographies can be prepared with BibTeX using amsplain,
%    amsalpha, or (for "historical" overviews) natbib style.

%\bibliographystyle{amsplain}
%Insert the bibliography data here.
% \clearpage 
\bibliographystyle{alpha}
\bibliography{reference}
\clearpage  

\appendix
% \input{Sections/new-LoRA(1)}

% \clearpage 
% \input{Sections/LoRA-lemmas}

% \clearpage 
\section{Proofs of SDE convergence}\label{app:conv_proof}

% \note{\large \bf This segment will have the proof of 2 final corollaries giving SDE convergence with explicit CPI bounds}

\begin{theorem}[{\bf Convergence of SDE for Depth-2 Neural Net Based Regression under LoRA and Attention-Based Regression}]\label{theorem3}
Suppose that Assumption~\ref{as1} holds for the activation $\sigma$. Let $\tilde{L}^{(k)}(\vT)$ denote any of the two regularized potentials defined in Definition~\ref{def:mh-general-profile-potential} and  Definition~\ref{def:lora-general-radial-potential}, where $k \in \{1, 2\}$ indexes the specific model and regularization choice. Suppose the initial probability density is $p_0 \in L^2((\mu_s^{(k)})^{-1})$ of the SDE \eqref{eq:sde} where $\mu_s^{(k)}$ is the corresponding Gibbs measure \eqref{eq:gibbs}. For each $\tilde{L}^{(k)}(\vT)$ satisfying the Villani conditions, there exists a positive $\lambda_s^{(k)} > 0$ and a constant $D^{(k)}(s, p_0)$ such that:
\begin{align}
    \mathbb{E}[\tilde{L}^{(k)}(\vT_t)] - \tilde{L}^{(k)\star} \leq \varepsilon^{(k)}(s) + D^{(k)}(s, p_0) e^{-\lambda_s^{(k)} t},
\end{align}
where $\tilde{L}^{(k)\star} = \inf \tilde{L}^{(k)}(\vT)$ is the global minimum of the respective loss, and $\varepsilon^{(k)}(s) = \mathbb{E}_{\mu_s^{(k)}}[\tilde{L}^{(k)}(\vT)] - \tilde{L}^{(k)\star}$.

Then there exist constants $A^{(k)}, S^{(k)} > 0$ such that $\mathbb{E}_{\mu_s^{(k)}}[\tilde{L}^{(k)}(\vT)] - \tilde{L}^{(k)\star} \leq A^{(k)}s$ for all $s \in (0, S^{(k)}]$. If we further choose the learning rate $s$ such that $s \leq \min\left\{\frac{\epsilon}{2A^{(k)}}, S^{(k)}\right\},$ and the time $t$ satisfies, $t \geq \frac{1}{\lambda_s^{(k)}} \log \left( \frac{2 D^{(k)}(s, p_0)}{\epsilon} \right),$ where $D^{(k)}(s, p_0) = C^{(k)}(s) \cdot \|p_0 - \mu_s^{(k)}\|_{L^2((\mu_s^{(k)})^{-1})}$ and $C^{(k)}(s)$ is a positive constant, then,
\begin{align}
    \mathbb{E}[\tilde{L}^{(k)}(\vT_t)] - \tilde{L}^{(k)\star} \leq \epsilon.
\end{align}
\end{theorem}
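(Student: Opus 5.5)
The plan is to reduce Theorem~\ref{theorem3} to Theorem~1 of \cite{shi2023villani}, with Theorems~\ref{thm:mh-general-profile-villani} and \ref{thm:lora-general-profile-villani} supplying its hypothesis. Those theorems show that each $\tilde L^{(k)}$, $k\in\{1,2\}$, is a Villani function in the sense of Definition~\ref{def:villani}. The Villani conditions imply, via \cite{villani2009hypocoercivity}, that $\mu_s^{(k)}$ is a well-defined probability measure satisfying a \Poincare inequality with a finite constant $C^\star_{PI}(\mu_s^{(k)})$. Under this hypothesis, \cite{shi2023villani} gives two facts. First, the law $p_t$ of $\vT_t$ for the SDE \eqref{eq:sde} converges to $\mu_s^{(k)}$ in $L^2((\mu_s^{(k)})^{-1})$ at exponential rate $\lambda_s^{(k)} = s/(2C^\star_{PI})$; this is the time-rescaled spectral gap of the generator $\frac{s}{2}\Delta - \nabla\tilde L\cdot\nabla$. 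Second, one can bound $|\E_{p_t}\tilde L^{(k)} - \E_{\mu_s}\tilde L^{(k)}|$ by $C^{(k)}(s)\,\|p_t-\mu_s\|_{L^2(\mu_s^{-1})}$. To get the second fact I would apply Cauchy--Schwarz, $\int \tilde L\,(p_t-\mu_s) \le \|\tilde L-\tilde L^\star\|_{L^2(\mu_s)}\,\|p_t-\mu_s\|_{L^2(\mu_s^{-1})}$, and set $C(s) = \|\tilde L-\tilde L^\star\|_{L^2(\mu_s)}$. This constant is finite because Step~2 of the Villani proof gives a lower bound of the form $\tilde L \ge c\|\vT\|^2 - C$, so $\tilde L$ has Gaussian-type tails under $\mu_s$.

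Adding and subtracting $\E_{\mu_s}\tilde L^{(k)}$ then gives the first display: the expected excess loss is at most $\varepsilon^{(k)}(s) + D^{(k)}(s,p_0)e^{-\lambda_s^{(k)} t}$, with $D^{(k)} = C^{(k)}(s)\,\|p_0-\mu_s\|$.

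The second part needs the bound $\varepsilon^{(k)}(s)\le A^{(k)}s$ for small $s$. I would take this from the corresponding statement in \cite{shi2023villani}, which relies only on the Villani property together with $C^2$ regularity. It can be checked directly with a Laplace-type argument. The set of global minimizers is compact by confinement and nonempty since $\tilde L$ is continuous and coercive. Near a minimizer, $\tilde L - \tilde L^\star \le \frac{1}{2}\|\nabla^2\tilde L\|\,\mathrm{dist}^2$, and the Gibbs measure at temperature $s/2$ places mass only at scale $\sqrt{s}$, which yields an $O(s)$ excess. This is the main obstacle. A degenerate, non-isolated minimum set could break a naive Laplace estimate, so I would rely on the \cite{shi2023villani} lemma rather than redo it.

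With that bound in hand the conclusion is immediate. Choosing $s\le\min\{\epsilon/(2A^{(k)}),S^{(k)}\}$ makes $\varepsilon^{(k)}(s)\le\epsilon/2$. Choosing $t\ge \frac{1}{\lambda_s}\log(2D/\epsilon)$ makes the exponential term at most $\epsilon/2$, and summing the two gives the claimed bound of $\epsilon$.
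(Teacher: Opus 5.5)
Your proposal follows the paper's proof. It uses the same split of the excess risk into the stationary bias $\varepsilon^{(k)}(s)$ and a transient term, bounds the bias by $A^{(k)}s$ via Proposition~5 of \cite{shi2023villani}, and combines the two pieces as in Corollary~6 there. The only difference is that you derive the transient bound yourself (spectral gap $\lambda_s=s/(2C^\star_{PI})$ plus Cauchy--Schwarz, giving $C(s)=\|\tilde L-\tilde L^\star\|_{L^2(\mu_s)}$), whereas the paper cites Proposition~4 of \cite{shi2023villani}. The cleaner reason $C(s)$ is finite is that $\tilde L^2e^{-2\tilde L/s}\lesssim e^{-\tilde L/s}$, which is integrable; Gaussian tails are not the point.
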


\begin{proof}[Proof of Theorem~\ref{theorem3}]
    Let $\tilde{L}^{(k)}(\vT)$ denote any of the four regularized potentials defined in Definition~\ref{def:mh-general-profile-potential}, and Definition~\ref{def:lora-general-radial-potential}, where $k \in \{1, 2\}$ indexes the specific model and regularization choice.

    For any $k \in \{1, 2\}$, perform the following decomposition of the excess risk:
    \begin{align}
        \mathbb{E}[\tilde{L}^{(k)}(\vT_t)] - \tilde{L}^{(k)\star} = \underbrace{\big(\mathbb{E}[\tilde{L}^{(k)}(X_s^{(k)}(\infty))] - \tilde{L}^{(k)\star}\big)}_{\varepsilon^{(k)}(s)} + \underbrace{\big(\mathbb{E}[\tilde{L}^{(k)}(\vT_t)] - \mathbb{E}[\tilde{L}^{(k)}(X_s^{(k)}(\infty))]\big)}_{ \leq D^{(k)}(s,p_0)e^{-\lambda_s^{(k)} t}}
    \end{align}
    By Proposition 5 of \cite{shi2023villani}, 
    \begin{align}
        \varepsilon^{(k)}(s) \leq A^{(k)}s
    \end{align}
    and Proposition 4 of \cite{shi2023villani},
    \begin{align}
        \mathbb{E}[\tilde{L}^{(k)}(\vT_t)] - \mathbb{E}[\tilde{L}^{(k)}(X_s^{(k)}(\infty))] \leq C^{(k)}(s)\|p_0 - \mu_s^{(k)}\|_{L^2((\mu_s^{(k)})^{-1})} e^{-\lambda_s^{(k)} t},
    \end{align}
    where  $p_0 \in L^2((\mu_s^{(k)})^{-1})$ is the initial probability density of the SDE \eqref{eq:sde} .
    
    Since $s \leq \min \left \{\frac{\epsilon}{2A^{(k)}}, S^{(k)}\right\},$
    and the time horizon $t \geq \frac{1}{\lambda_s^{(k)}} \log \left( \frac{2 D^{(k)}(s, p_0)}{\epsilon} \right),$ by Corollary 6 of \cite{shi2023villani} we have
    \begin{align}
        \mathbb{E}[\tilde{L}^{(k)}(\vT_t)] - \tilde{L}^{(k)\star} \leq \epsilon.
    \end{align}
\end{proof}

% \subsection{Proof of Theorem~\ref{} and \ref{}}

\subsection{Proofs of Theorems~\ref{thm:bpc-attention-poincare-bound} and ~\ref{cor:bpc-lora-poincare-bound}}\label{sec:poincspec}

Throughout this section we define
\begin{equation}
    \lambda_s \coloneqq \frac{s}{2 C_{PI}^\star(\mu_s)}, \label{eq:lmbds}
\end{equation}
where $\mu_s$ is the Gibbs measure and $C^\star_{PI}$ is the \Poincare constant of $\mu_s$.

\begin{proof}[Proof of Theorem~\ref{thm:bpc-attention-poincare-bound}]
By Definition~\ref{def:mh-general-profile-potential}, the individual squared loss satisfies
\[
\ell_i(\vT_H) = \frac12 \left\| \mY_i-\hat{\mY}_i(\vT_H) \right\|_F^2.
\]
By Equation~\ref{eq:mh-softmax-output-bound}, for each of the $h$ heads, the corresponding row-softmax matrix has
Frobenius norm at most $\sqrt t$.

Hence, by
Definitions~\ref{def:mh-kq-attn} and \ref{as2},
\[
\left\|
\hat{\mY}_i(\vT_H)
\right\|_F
\leq
h\sqrt t\,B_xB_w.
\]
Therefore, by the triangle inequality,
\[
\begin{aligned}
\ell_i(\vT_H)
&\leq
\frac12
\left(
\|\mY_i\|_F+
\|\hat{\mY}_i(\vT_H)\|_F
\right)^2
\\
&\leq
\frac12
\left(
B_y+H\sqrt t\,B_xB_w
\right)^2.
\end{aligned}
\]
Since each $\ell_i(\vT_H)$ is nonnegative,
\[
0
\leq
\hat R_A(\vT_H)
=
\frac1n\sum_{i=1}^n\ell_i(\vT_H)
\leq
\frac12
\left(
B_y+H\sqrt t\,B_xB_w
\right)^2.
\]
It follows that
\begin{equation}
\label{eq:bpc-attention-oscillation-bound}
\operatorname{osc}(\hat R_A)
\leq
\frac12
\left(
B_y+H\sqrt t\,B_xB_w
\right)^2.
\end{equation}
Applying Lemma~\ref{lem:bpc-general-poincare-bound} with
$\widehat R=\hat R_A$ and
\[
M_{\widehat R}
=
\frac12
\left(
B_y+H\sqrt t\,B_xB_w
\right)^2
\]
gives the following bound on the \Poincare constant,
\begin{equation}
C_{\mathrm{PI}}^\star(\mu_s) \leq \exp\left( \frac{\bigl(H\sqrt{t}B_xB_w+B_y\bigr)^2}{s} \right) \left(1+\frac{Hs}{\lambda\log 2} \right).
\end{equation}
as in Equation~\ref{eq:bpc-attention-poincare-bound}.

Recall the logarithmically regularized multi-head attention potential as in Definition~\ref{def:mh-general-profile-potential} with $\phi(r) = r^2 \log(1+r)$
\[
\tilde V_{H,\mathrm{ATT}}(\vT_H) = \hat R_A(\vT_H) + \frac{\lambda}{2H} \sum_{j=1}^H \left( \|\mW_Q^j\|_F^2+\|\mW_K^j\|_F^2 \right) \log\left( 1+\|\mW_Q^j\|_F^2+\|\mW_K^j\|_F^2 \right),
\]
where $\vT_H = \left( \mW_Q^1,\mW_K^1,\ldots,\mW_Q^H,\mW_K^H \right) \in\R^D, D=2Hdr$ and the corresponding Gibbs' measure is defined as $\mu_s(d\vT_H) = \frac{1}{Z_s} \exp\left( -\frac{2}{s}\tilde V_{H,\mathrm{ATT}}(\vT_H) \right)d\vT_H, s>0$. Theorem~\ref{thm:mh-general-profile-villani} establishes that $\tilde V_{H,\mathrm{ATT}}(\vT_H)$ satisfies the Villani conditions stated in Definition~\ref{def:villani}.

We decompose the excess risk as follows,
\begin{align}
    \mathbb{E}[\tilde V_{H,\mathrm{ATT}}(\vT_{H,t})] - \tilde V_{H,\mathrm{ATT}}^{\star} = \underbrace{\big(\mathbb{E}[\tilde V_{H,\mathrm{ATT}}(X_s(\infty))] - \tilde V_{H,\mathrm{ATT}}^{\star}\big)}_{\varepsilon(s)} + \underbrace{\big(\mathbb{E}[\tilde V_{H,\mathrm{ATT}}(\vT_H)] - \mathbb{E}[\tilde V_{H,\mathrm{ATT}}(X_s(\infty))]\big)}_{ \leq D(s,p_0)e^{-\lambda_s t}},
\end{align}
where  $\tilde V_{H,\mathrm{ATT}}^{\star} = \inf~\tilde V_{H,\mathrm{ATT}}(\vT)$ is the global minimum of the respective loss and $\lambda_s$ is as in \eqref{eq:lmbds}. By Proposition 5 of \cite{shi2023villani}, 
\begin{align}
    \varepsilon(s) \leq As
\end{align}
and Proposition 4 of \cite{shi2023villani},
\begin{align}
    \mathbb{E}[\tilde V_{H,\mathrm{ATT}}(\vT_{H,t})] - \mathbb{E}[\tilde V_{H,\mathrm{ATT}}(X_s(\infty))] \leq C(s)\|p_0 - \mu_s\|_{L^2((\mu_s)^{-1})} e^{-\lambda_s t},
\end{align}
where  $p_0 \in L^2((\mu_s)^{-1})$ is the initial probability density of the SDE \eqref{eq:sde} .

Since $s \leq \min \left \{\frac{\epsilon}{2A}, S\right\},$
and the time horizon $t \geq \frac{1}{\lambda_s} \log \left( \frac{2 D(s, p_0)}{\epsilon} \right),$ by Corollary 6 of \cite{shi2023villani} we have
\begin{align}
    \mathbb{E}[\tilde V_{H,\mathrm{ATT}}(\vT_{H,t})] - \tilde V_{H,\mathrm{ATT}}^{\star} \leq \epsilon.
\end{align}
\end{proof}

\begin{proof}[Proof of Theorem \ref{cor:bpc-lora-poincare-bound}]
By Definition~\ref{def:shallow-lora-param}, $\widehat y_i(\vT)=\va^\top\sigma(\vz_i(\vT))$ with
$\vz_i(\vT)\coloneqq\mW(\vT)\vx_i\in\R^p$, where $\vT = (\mU,\mV)$. Under Assumption~\ref{as1} the activation satisfies
$|\sigma(u)|\le B_\sigma$ for all $u\in\R$, so $\lVert\sigma(\vz_i(\vT))\rVert\le B_\sigma\sqrt p$
and, by Cauchy--Schwarz with $\lVert\va\rVert\le B_a$,
\begin{equation}
\bigl|\widehat y_i(\vT)\bigr|\le B_aB_\sigma\sqrt p
\qquad\text{for every }\vT\in\R^D .
\label{eq:bpc-lora-prediction-bound}
\end{equation}
Combining \eqref{eq:bpc-lora-prediction-bound} with $|y_i|\le B_y$ from Definition~\ref{as2} and setting
\begin{equation}
E\coloneqq B_y+B_aB_\sigma\sqrt p ,
\label{eq:bpc-lora-E}
\end{equation}
Definition~\ref{def:lora-general-radial-potential} gives, exactly as in
\eqref{eq:lora-data-value-bound} of Lemma~\ref{lem:lora-data-bounds},
\begin{equation}
0\;\le\;\Ldata(\vT)\;=\;\frac1n\sum_{i=1}^n\frac12\bigl(\widehat y_i(\vT)-y_i\bigr)^2
\;\le\;\frac12E^2 .
\label{eq:bpc-lora-data-bound}
\end{equation}
Consequently
\begin{equation}
\operatorname{osc}(\Ldata)\coloneqq\sup_{\R^D}\Ldata-\inf_{\R^D}\Ldata\;\le\;\frac12E^2 .
\label{eq:bpc-lora-oscillation-bound}
\end{equation}
Moreover,
\[
\|\vT\|^2 = \|\mU\|_F^2+\|\mV\|_F^2,
\]
so Equation~\ref{eq:lora-general-radial-potential} has exactly the form of
the potential in Lemma~\ref{lem:bpc-general-poincare-bound}. Applying
that lemma with $\widehat R=\Ldata$ and
$M_{\widehat R}=B_0^2/2$ gives the following bound on the \Poincare constant,
\begin{equation}
C_{\mathrm{PI}}^\star(\mu_s) \leq \exp\left( \frac{E^2}{s} \right) \left( 1+\frac{s}{\lambda\log 2} \right).
\end{equation}
as in Equation~\ref{eq:bpc-lora-poincare-bound}.
% \note{same $B_0$ issue here}
For any fixed $\mW_0\in\R^{p\times d}$, any fixed $\alpha\in\R$, $\lambda>0$, and $s>0$, recall the logarithmically regularized LoRA potential,
\[
\widetilde V_{\mathrm{LoRA}}(\vT) = \Ldata(\vT) + \frac{\lambda}{2} \|\vT\|^2\log\bigl(1+\|\vT\|^2\bigr), \qquad \vT=(\mU,\mV)\in\R^D, \qquad D=(p+d)r,
\]
where $\Ldata(\cdot)$ is as defined in Definition~\ref{def:lora-general-radial-potential}, the activation $\sigma$ satisfies Assumption~\ref{as1}, the training data satisfies Definition~\ref{as2},  and its Gibbs measure $\mu_s = \frac{1}{Z_s} \exp\left( -\frac{2}{s}\widetilde V_{\mathrm{LoRA}}(\vT) \right)$. Theorem~\ref{thm:lora-general-profile-villani} establishes that $\widetilde V_{\mathrm{LoRA}}(\vT)$ satisfies the Villani conditions stated in Definition~\ref{def:villani}.

We decompose the excess risk as follows,
\begin{align}
    \mathbb{E}[\widetilde V_{\mathrm{LoRA}}(\vT_t)] - \widetilde V_{\mathrm{LoRA}}^\star = \underbrace{\big(\mathbb{E}[\widetilde V_{\mathrm{LoRA}}(X_s(\infty))] - \widetilde V_{\mathrm{LoRA}}^\star\big)}_{\varepsilon(s)} + \underbrace{\big(\mathbb{E}[\widetilde V_{\mathrm{LoRA}}(\vT_t)] - \mathbb{E}[\widetilde V_{\mathrm{LoRA}}(X_s(\infty))]\big)}_{ \leq D(s,p_0)e^{-\lambda_s t}},
\end{align}
where  $\widetilde V_{\mathrm{LoRA}}^\star = \inf~\widetilde V_{\mathrm{LoRA}}(\vT)$ is the global minimum of the respective loss and $\lambda_s$ is as in \eqref{eq:lmbds}. By Proposition 5 of \cite{shi2023villani}, 
\begin{align}
    \varepsilon(s) \leq As
\end{align}
and Proposition 4 of \cite{shi2023villani},
\begin{align}
    \mathbb{E}[\widetilde V_{\mathrm{LoRA}}(\vT_t)] - \mathbb{E}[\widetilde V_{\mathrm{LoRA}}(X_s(\infty))] \leq C(s)\|p_0 - \mu_s\|_{L^2((\mu_s)^{-1})} e^{-\lambda_s t},
\end{align}
where  $p_0 \in L^2((\mu_s)^{-1})$ is the initial probability density of the SDE \eqref{eq:sde} .

Since $s \leq \min \left \{\frac{\epsilon}{2A}, S\right\},$
and the time horizon $t \geq \frac{1}{\lambda_s} \log \left( \frac{2 D(s, p_0)}{\epsilon} \right),$ by Corollary 6 of \cite{shi2023villani} we have
\begin{align}
    \mathbb{E}[\widetilde V_{\mathrm{LoRA}}(\vT_t)] - \widetilde V_{\mathrm{LoRA}}^\star \leq \epsilon.
\end{align}
\end{proof}

% \clearpage 
% \input{Sections/new-corollary-6}

% \clearpage 
% \input{Sections/dissipation}

\clearpage

\section{Bounding the Poincar\'e Constant for the Gibbs Measures of Multi-Head Attention and LoRA}\label{sec:BPC}

\begin{proof}[Proof of Lemma \ref{lem:bpc-general-poincare-bound}]
We divide the proof into three steps.

\medskip
\quad \textbf{Step 1: the Gibbs measure generated by the regularizer alone is
radial and log-concave.}

Consider the probability measure
\begin{equation}
\label{eq:bpc-reference-measure}
\nu_s(d\vT)
=
\frac{
\exp\left(
-\dfrac{\lambda}{s}
\|\vT\|^2\log\bigl(1+\|\vT\|^2\bigr)
\right)
}{
\displaystyle
\int_{\R^D}
\exp\left(
-\dfrac{\lambda}{s}
\|\vu\|^2\log\bigl(1+\|\vu\|^2\bigr)
\right)d\vu
}
\,d\vT.
\end{equation}
This is precisely the Gibbs measure generated by the regularization
term in $\tilde V_{\widehat R}$, because
\[
\frac{2}{s}
\left[
\frac{\lambda}{2}
\|\vT\|^2\log\bigl(1+\|\vT\|^2\bigr)
\right]
=
\frac{\lambda}{s}
\|\vT\|^2\log\bigl(1+\|\vT\|^2\bigr).
\]

We first verify that the normalization integral is finite. If
$\|\vT\|\geq 1$, then
\[
\log\bigl(1+\|\vT\|^2\bigr)\geq \log 2,
\]
and hence
\[
\exp\left(
-\frac{\lambda}{s}
\|\vT\|^2\log\bigl(1+\|\vT\|^2\bigr)
\right)
\leq
\exp\left(
-\frac{\lambda\log 2}{s}\|\vT\|^2
\right).
\]
The right-hand side is integrable over $\R^D$. On the unit ball,
the integrand is bounded by $1$. Therefore,
\[
\int_{\R^D}
\exp\left(
-\frac{\lambda}{s}
\|\vT\|^2\log\bigl(1+\|\vT\|^2\bigr)
\right)d\vT
<\infty,
\]
so $\nu_s$ is a well-defined probability measure.

Next, we verify log-concavity. Direct differentiation gives
\begin{equation}
\label{eq:bpc-regularizer-gradient}
\nabla_{\vT}
\left[
\frac{\lambda}{s}
\|\vT\|^2\log\bigl(1+\|\vT\|^2\bigr)
\right]
=
\frac{2\lambda}{s}\vT
\left[
\log\bigl(1+\|\vT\|^2\bigr)
+
\frac{\|\vT\|^2}{1+\|\vT\|^2}
\right].
\end{equation}
Differentiating once more gives
\begin{equation}
\label{eq:bpc-regularizer-hessian}
\begin{aligned}
\nabla_{\vT}^2
\left[
\frac{\lambda}{s}
\|\vT\|^2\log\bigl(1+\|\vT\|^2\bigr)
\right]
={}&
\frac{2\lambda}{s}
\left[
\log\bigl(1+\|\vT\|^2\bigr)
+
\frac{\|\vT\|^2}{1+\|\vT\|^2}
\right]I_D + \frac{4\lambda}{s} \frac{\|\vT\|^2+2} {\bigl(1+\|\vT\|^2\bigr)^2}
\vT\vT^\top.
\end{aligned}
\end{equation}
Both terms on the right-hand side of
\eqref{eq:bpc-regularizer-hessian} are positive semidefinite. Indeed, $\log\bigl(1+\|\vT\|^2\bigr) + \frac{\|\vT\|^2}{1+\|\vT\|^2} \geq 0$ and $\frac{\|\vT\|^2+2}{\bigl(1+\|\vT\|^2\bigr)^2}>0.$ Consequently,
\[
\nabla_{\vT}^2
\left[
\frac{\lambda}{s}
\|\vT\|^2\log\bigl(1+\|\vT\|^2\bigr)
\right]
\succeq 0.
\]
Thus $\nu_s$ is log-concave. Moreover, its density depends on $\vT$ only
through $\|\vT\|$, so it is spherically symmetric.

Since $D\geq 2$, the spectral-gap estimate for spherically
symmetric log-concave probability measures is given by Theorem~1.2 of~\cite{bonnefont2016spectral}:
\begin{equation}
\label{eq:bpc-bjm-poincare-bound}
C_{\mathrm{PI}}^\star(\nu_s)
\leq
\frac{
\displaystyle
\int_{\R^D}\|\vT\|^2\,\nu_s(d\vT)
}{
D-1
}.
\end{equation}
It remains to bound the second moment under $\nu_s$.

\medskip
\quad \textbf{Step 2: explicit bound on the second moment of $\nu_s$.}

Because
\[
\exp\left(
-\frac{\lambda}{s}
\|\vT\|^2\log\bigl(1+\|\vT\|^2\bigr)
\right)
\]
decays faster than a Gaussian at infinity, the boundary term in the
following integration by parts vanishes. Therefore,
\begin{equation}
\label{eq:bpc-integration-by-parts}
\begin{aligned}
0
={}&
\int_{\R^D}
\nabla_{\vT}\cdot
\left[
\vT
\exp\left(
-\frac{\lambda}{s}
\|\vT\|^2\log\bigl(1+\|\vT\|^2\bigr)
\right)
\right]d\vT.
\end{aligned}
\end{equation}
Expanding the divergence in \eqref{eq:bpc-integration-by-parts} gives
\begin{equation}
\label{eq:bpc-expanded-divergence}
\begin{aligned}
0
={}&
\int_{\R^D}
\Bigg[
D
-
\vT^\top
\nabla_{\vT}
\left(
\frac{\lambda}{s}
\|\vT\|^2\log\bigl(1+\|\vT\|^2\bigr)
\right)
\Bigg]
\times
\exp\left(
-\frac{\lambda}{s}
\|\vT\|^2\log\bigl(1+\|\vT\|^2\bigr)
\right)d\vT.
\end{aligned}
\end{equation}
Using \eqref{eq:bpc-regularizer-gradient},
\begin{equation}
\label{eq:bpc-radial-derivative}
\begin{aligned}
\vT^\top
\nabla_{\vT}
\left[
\frac{\lambda}{s}
\|\vT\|^2\log\bigl(1+\|\vT\|^2\bigr)
\right]
=
\frac{2\lambda}{s}
\|\vT\|^2
\left[
\log\bigl(1+\|\vT\|^2\bigr)
+
\frac{\|\vT\|^2}{1+\|\vT\|^2}
\right].
\end{aligned}
\end{equation}
Substituting \eqref{eq:bpc-radial-derivative} into
\eqref{eq:bpc-expanded-divergence} and dividing by the
normalization constant of $\nu_s$, we obtain the exact identity
\begin{equation}
\label{eq:bpc-virial-identity}
D
=
\frac{2\lambda}{s}
\int_{\R^D}
\|\vT\|^2
\left[
\log\bigl(1+\|\vT\|^2\bigr)
+
\frac{\|\vT\|^2}{1+\|\vT\|^2}
\right]
\nu_s(d\vT).
\end{equation}
Since
\[
\frac{\|\vT\|^2}{1+\|\vT\|^2}\geq 0,
\]
it follows from \eqref{eq:bpc-virial-identity} that
\begin{equation}
\label{eq:bpc-logarithmic-moment-bound}
\int_{\R^D}
\|\vT\|^2
\log\bigl(1+\|\vT\|^2\bigr)
\nu_s(d\vT)
\leq
\frac{Ds}{2\lambda}.
\end{equation}

We now split the second moment into the regions $\|\vT\|<1$ and
$\|\vT\|\geq 1$:
\begin{equation}
\label{eq:bpc-second-moment-decomposition}
\begin{aligned}
\int_{\R^D}\|\vT\|^2\,\nu_s(d\vT)
={}&
\int_{\{\|\vT\|<1\}}\|\vT\|^2\,\nu_s(d\vT)
+
\int_{\{\|\vT\|\geq 1\}}\|\vT\|^2\,\nu_s(d\vT).
\end{aligned}
\end{equation}
For $\{\|\vT\|<1\}$, one has $\|\vT\|^2\leq 1$, and therefore
\begin{equation}
\label{eq:bpc-unit-ball-moment-bound}
\int_{\{\|\vT\|<1\}}\|\vT\|^2\,\nu_s(d\vT)
\leq 1.
\end{equation}
For $\{\|\vT\|\geq 1\}$,
\[
\log\bigl(1+\|\vT\|^2\bigr)\geq \log 2,
\]
and hence
\[
\|\vT\|^2
\leq
\frac{
\|\vT\|^2\log\bigl(1+\|\vT\|^2\bigr)
}{
\log 2
}.
\]
Consequently,
\begin{equation}
\label{eq:bpc-outside-ball-moment-bound}
\begin{aligned}
\int_{\{\|\vT\|\geq 1\}}\|\vT\|^2\,\nu_s(d\vT)
&\leq
\frac{1}{\log 2}
\int_{\R^D}
\|\vT\|^2\log\bigl(1+\|\vT\|^2\bigr)
\nu_s(d\vT)
\leq
\frac{Ds}{2\lambda\log 2},
\end{aligned}
\end{equation}
where the last inequality follows from
\eqref{eq:bpc-logarithmic-moment-bound}. 
Equations~\eqref{eq:bpc-second-moment-decomposition},
\eqref{eq:bpc-unit-ball-moment-bound}, and
\eqref{eq:bpc-outside-ball-moment-bound} give
\begin{equation}
\label{eq:bpc-second-moment-bound}
\begin{aligned}
\int_{\R^D}\|\vT\|^2\,\nu_s(d\vT)
=&
\int_{\{\|\vT\|<1\}}
\|\vT\|^2\,\nu_s(d\vT)
+
\int_{\{\|\vT\|\geq1\}}
\|\vT\|^2\,\nu_s(d\vT)
\leq
1+\frac{Ds}{2\lambda\log 2}.
\end{aligned}
\end{equation}
Therefore, Equations~\eqref{eq:bpc-bjm-poincare-bound} and
\eqref{eq:bpc-second-moment-bound} give
\begin{equation}
\label{eq:bpc-reference-poincare-bound}
C_{\mathrm{PI}}^\star(\nu_s)
\leq
\frac{
1+\dfrac{Ds}{2\lambda\log 2}
}{
D-1
}.
\end{equation}

Since $D\ge 2$,
\begin{align}
\frac{1+\frac{Ds}{2\lambda\log 2}}{D-1}
&=
\frac{1}{D-1}
+
\frac{D}{D-1}\frac{s}{2\lambda\log 2}
\le
1+\frac{s}{\lambda\log 2}.
\end{align}
Therefore,
\begin{equation}
C_{\mathrm{PI}}^\star(\nu_s)
\le
\left(
1+\frac{s}{\lambda\log 2}
\right).
\end{equation}

\quad \textbf{Step 3: Transfer of the Poincar\'e inequality from $\nu_s$ to
$\mu_s$.}

We recall
\begin{equation}
\label{eq:bpc-general-oscillation-assumption}
\operatorname{osc}(\widehat R) \coloneqq \sup_{\vT\in\R^D}\widehat R(\vT) - \inf_{\vT\in\R^D}\widehat R(\vT) \leq M_{\widehat R} < \infty.
\end{equation}
From \eqref{eq:bpc-general-oscillation-assumption} we get,
\begin{equation}
\label{eq:bpc-data-oscillation-bound}
\begin{aligned}
\frac{2}{s}
\left(
\sup_{\vT\in\R^D}\widehat R(\vT)
-
\inf_{\vT\in\R^D}\widehat R(\vT)
\right)
&\leq
\frac{2M_{\widehat R}}{s}.
\end{aligned}
\end{equation}
Since $\widehat R$ is bounded,
$\exp(-2\widehat R/s)$ is bounded above and below by positive
constants. Together with the finiteness of the normalization integral
of $\nu_s$, this also shows that $0<Z_s<\infty$.

By the definitions of $\mu_s$, $\nu_s$, and
$\tilde V_{\widehat R}$,
\begin{equation}
\label{eq:bpc-radon-nikodym-derivative}
\frac{d\mu_s}{d\nu_s}(\vT)
=
\frac{
\exp\left(-\dfrac{2}{s}\widehat R(\vT)\right)
}{
\displaystyle
\int_{\R^D}
\exp\left(-\dfrac{2}{s}\widehat R(\vu)\right)
\nu_s(d\vu)
}.
\end{equation}
Note that the denominator in \eqref{eq:bpc-radon-nikodym-derivative} is a normalization integral. The normalization factor in \eqref{eq:bpc-radon-nikodym-derivative} is independent of $\vT$ and therefore cancels in the ratio. Hence,
\begin{equation}
\label{eq:bpc-density-ratio}
\begin{aligned}
\frac{
\displaystyle
\sup_{\vT\in\R^D}
\frac{d\mu_s}{d\nu_s}(\vT)
}{
\displaystyle
\inf_{\vT\in\R^D}
\frac{d\mu_s}{d\nu_s}(\vT)
}
&=
\frac{
\displaystyle
\exp\left(
-\frac{2}{s}
\inf_{\vT\in\R^D}\widehat R(\vT)
\right)
}{
\displaystyle
\exp\left(
-\frac{2}{s}
\sup_{\vT\in\R^D}\widehat R(\vT)
\right)
}
=
\exp\left[
\frac{2}{s}
\left(
\sup_{\vT\in\R^D}\widehat R(\vT)
-
\inf_{\vT\in\R^D}\widehat R(\vT)
\right)
\right]
\\
&\leq
\exp\left(
\frac{2M_{\widehat R}}{s}
\right),
\end{aligned}
\end{equation}
where the last inequality follows from
\eqref{eq:bpc-general-oscillation-assumption}.
Using the variational
characterization of the variance,
\begin{equation}
\label{eq:bpc-variance-variational-characterization}
\Var_{\mu_s}(h)
=
\inf_{c\in\R}
\int_{\R^D}(h(\vT)-c)^2\,\mu_s(d\vT).
\end{equation}
Hence, using
\[
\mu_s(d\vT)
=
\frac{d\mu_s}{d\nu_s}(\vT)\,\nu_s(d\vT),
\]
we obtain
\begin{equation}
\label{eq:bpc-variance-comparison}
\begin{aligned}
\Var_{\mu_s}(h)
&=
\inf_{c\in\R}
\int_{\R^D}
(h(\vT)-c)^2\,\mu_s(d\vT) = \inf_{c\in\R} \int_{\R^D} (h(\vT)-c)^2 \frac{d\mu_s}{d\nu_s}(\vT)\, \nu_s(d\vT)
\\
&\leq
\sup_{\vT\in\R^D}
\frac{d\mu_s}{d\nu_s}(\vT)
\inf_{c\in\R}
\int_{\R^D}
(h(\vT)-c)^2\,\nu_s(d\vT)
=
\sup_{\vT\in\R^D}
\frac{d\mu_s}{d\nu_s}(\vT)
\Var_{\nu_s}(h).
\end{aligned}
\end{equation}
Applying the Poincar\'e inequality for $\nu_s$ to
\eqref{eq:bpc-variance-comparison} gives
\begin{equation}
\label{eq:bpc-apply-reference-poincare}
\Var_{\mu_s}(h)
\leq
\sup_{\vT\in\R^D}
\frac{d\mu_s}{d\nu_s}(\vT)
C_{\mathrm{PI}}^\star(\nu_s)
\int_{\R^D}
\|\nabla h(\vT)\|^2\,\nu_s(d\vT).
\end{equation}
Moreover,
\begin{equation}
\label{eq:bpc-dirichlet-form-comparison}
\begin{aligned}
\int_{\R^D}
\|\nabla h(\vT)\|^2\,\nu_s(d\vT)
&=
\int_{\R^D}
\|\nabla h(\vT)\|^2
\left(
\frac{d\mu_s}{d\nu_s}(\vT)
\right)^{-1}
\mu_s(d\vT)
\leq
\left[
\inf_{\vT\in\R^D}
\frac{d\mu_s}{d\nu_s}(\vT)
\right]^{-1}
\int_{\R^D}
\|\nabla h(\vT)\|^2\,\mu_s(d\vT).
\end{aligned}
\end{equation}
Combining Equations~\eqref{eq:bpc-apply-reference-poincare} and
\eqref{eq:bpc-dirichlet-form-comparison} yields
\begin{equation}
\label{eq:bpc-perturbation-transfer}
\begin{aligned}
\Var_{\mu_s}(h)
\leq{}&
\frac{
\displaystyle
\sup_{\vT\in\R^D}\frac{d\mu_s}{d\nu_s}(\vT)
}{
\displaystyle
\inf_{\vT\in\R^D}\frac{d\mu_s}{d\nu_s}(\vT)
}
C_{\mathrm{PI}}^\star(\nu_s)
\int_{\R^D}
\|\nabla h(\vT)\|^2\,\mu_s(d\vT)
\\
={}&
\exp\left[
\frac{2}{s}
\left(
\sup_{\vT\in\R^D}\widehat R(\vT)
-
\inf_{\vT\in\R^D}\widehat R(\vT)
\right)
\right]
C_{\mathrm{PI}}^\star(\nu_s)
\times
\int_{\R^D}
\|\nabla h(\vT)\|^2\,\mu_s(d\vT).
\end{aligned}
\end{equation}
\eqref{eq:bpc-perturbation-transfer} shows that the transfer from
$\nu_s$ to $\mu_s$ contributes the density-ratio factor in
\eqref{eq:bpc-density-ratio}, while the remaining factor is the
Poincar\'e constant of the regularizer-only measure $\nu_s$. \eqref{eq:bpc-data-oscillation-bound} bounds the former, and
\eqref{eq:bpc-reference-poincare-bound} bounds the latter.
\begin{equation}
\label{eq:bpc-final-variance-bound}
\begin{aligned}
\Var_{\mu_s}(h)
\leq{}&
\exp\left(
\frac{2M_{\widehat R}}{s}
\right)
\left(
1+\frac{s}{\lambda\log 2}
\right)
\times
\int_{\R^D}
\|\nabla h(\vT)\|^2\,\mu_s(d\vT).
\end{aligned}
\end{equation}
By \eqref{eq:bpc-optimal-poincare-constant},
\eqref{eq:bpc-final-variance-bound} implies
\begin{equation}
\label{eq:bpc-proof-poincare-bound}
C_{\mathrm{PI}}^\star(\mu_s)
\leq
\exp\left(
\frac{2M_{\widehat R}}{s}
\right)
\left(
1+\frac{s}{\lambda\log 2}
\right)
<\infty.
\end{equation}

% Similarly, for the polynomial profile $\phi_\epsilon(a)=a^{2+\epsilon}$ with any $\epsilon>0$, we have
% \[
% C_{\mathrm{PI}}^\star(\mu_s)
% \leq
% \exp\!\left(\frac{2M_{\widehat R}}{s}\right)
% \frac{1}{D-1}
% \left(
% \frac{Ds}{\lambda(2+\epsilon)}
% \right)^{\frac{2}{2+\epsilon}}
% <\infty.
% \]

% \eqref{eq:bpc-proof-poincare-bound} is
% \eqref{eq:bpc-main-poincare-bound}, and therefore, by
% \eqref{eq:bpc-rate-definition},
% \begin{equation}
% \label{eq:bpc-proof-rate-bound}
% \lambda_s
% \geq
% \frac{s(D-1)}{2}
% \exp\left(
% -\frac{2M_{\widehat R}}{s}
% \right)
% \left(
% 1+\frac{Ds}{2\lambda\log 2}
% \right)^{-1}.
% \end{equation}
% \eqref{eq:bpc-proof-rate-bound} is
% \eqref{eq:bpc-main-rate-bound}. This proves the result.
\end{proof}

\clearpage

%\section{$r^2h(r)$ for multi-Q-K}

\section{General Head-Wise Radial Regularization for Multi-Head Key--Query Attention}
\label{app:multihead-general-profile}

\subsection{Proof of Intermediate Lemmas Needed for Theorem \ref{thm:mh-general-profile-villani}}\label{multiQK-Lemmas}

\newcommand{\dbracket}[1]{[\![#1]\!]}

\begin{remark}\label{rem:notationlem1}
In Lemma~\ref{lem:mh-rowsoftmax-bounds} we have used the notations $\dd S(\mM)[H]$ and $\dd^2 S(\mM)[\mH,\mK]$, as defined in equations~\ref{eq:second-derivative-bracket}. The slots inside $[\,\cdot\,,\cdot\,]$ carry \emph{directions} of perturbation, not arguments of $S$; the output of \eqref{eq:second-derivative-bracket} is again an element of $\mathcal{M}$. If $S$ is $C^2$ near $\mM$, Schwarz's theorem \cite[Theorem~9.41]{rudin1976principles} gives $\dd^2S(\mM)[\mH,\mK]=\dd^2S(\mM)[\mK,\mH]$, 

% \note{verify what is this Schawarz theorem and if it at all says this and reference it}

and the off-diagonal values are determined by the diagonal ones through polarization, $\dd^2S(\mM)[\mH,\mK]=\tfrac12\Bigl(\dd^2S(\mM)[\mH+\mK,\mH+\mK]-\dd^2S(\mM)[\mH,\mH]-\dd^2S(\mM)[\mK,\mK]\Bigr)$. In differential shorthand one writes $dS=\dd S(\mM)[d\mM]$ and $\dd^2S=\dd^2S(\mM)[d\mM,d\mM]$, i.e.\ the same increment is placed in both slots.

\qquad {\bf Second-Order Taylor Expansion.}
With this notation the expansion of $S$ about $\mM$ reads
\begin{equation}
S(\mM+\mH)\;=\;S(\mM)\;+\;\dd S(\mM)[\mH]\;+\;\tfrac{1}{2}\,\dd^2S(\mM)[\mH,\mH]
\;+\;\mR_2(\mM,\mH),
\qquad
\frac{\lVert \mR_2(\mM,\mH)\rVert}{\lVert \mH\rVert^{2}}\xrightarrow[\mH\to\mzero]{}0 .
\label{eq:taylor-bracket}
\end{equation}
Defining the induced operator norm
$\lVert \dd^2S(\mM)\rVert_{\mathrm{op}}:=\sup_{\lVert \mH\rVert=\lVert \mK\rVert=1}
\lVert \dd^2S(\mM)[\mH,\mK]\rVert$, the remainder obeys the quantitative bound
\begin{equation}
\bigl\lVert S(\mM+\mH)-S(\mM)-\dd S(\mM)[\mH]\bigr\rVert
\;\le\;
\tfrac{1}{2}\sup_{\theta\in[0,1]}\bigl\lVert \dd^2S(\mM+\theta\mH)\bigr\rVert_{\mathrm{op}}
\,\lVert \mH\rVert^{2}.
\label{eq:taylor-remainder-bound}
\end{equation}

\qquad {\bf Relation to the Six-Index Hessian Array.}
Write $S(\mM)=\bigl(S_{ij}(\mM)\bigr)_{i\in[t],j\in[d]}$ and define the array
\begin{equation}
\mathcal{S}_{ij,\,kl,\,pq}(\mM)\;:=\;
\frac{\partial^{2}S_{ij}(\mM)}{\partial \mM_{kl}\,\partial \mM_{pq}},
\qquad i,k,p\in[t],\; j,l,q\in[d],
\label{eq:six-index-array}
\end{equation}
which has $(td)^{3}$ entries and is symmetric under the exchange of the index
pairs $(kl)\leftrightarrow(pq)$. The bracket
\eqref{eq:second-derivative-bracket} is exactly the contraction of
\eqref{eq:six-index-array} against the two direction matrices in the two free
pairs of slots,
\begin{equation}
\bigl(\dd^{2}S(\mM)[\mH,\mK]\bigr)_{ij}
\;=\;
\sum_{k=1}^{t}\sum_{l=1}^{d}\sum_{p=1}^{t}\sum_{q=1}^{d}
\mathcal{S}_{ij,\,kl,\,pq}(\mM)\,\mH_{kl}\,\mK_{pq},
\label{eq:bracket-as-contraction}
\end{equation}
and in particular $\dd^{2}S_{ij}=\sum_{k,l,p,q}\mathcal{S}_{ij,kl,pq}(\mM)\,
(d\mM)_{kl}(d\mM)_{pq}$. Equivalently, after we vectorize using an operator $\vecmap :\mathcal{M}\to\mathbb{R}^{td}$,  each output coordinate carries a genuine Hessian matrix $\mHess_{ij}(\mM)\in\mathbb{R}^{td\times td}$ with entries $\bigl(\mHess_{ij}\bigr)_{(kl),(pq)}=\mathcal{S}_{ij,kl,pq}(\mM)$, so that \eqref{eq:bracket-as-contraction} becomes the quadratic form $$\bigl(\dd^{2}S(\mM)[\mH,\mK]\bigr)_{ij}=\vecmap(\mH)^{\top}\mHess_{ij}(\mM)\vecmap(\mK).$$
The bracket notation is preferred precisely because it avoids ever assembling the $td\times(td)^{2}$ object obtained by stacking the $\mHess_{ij}$.
\end{remark}

\begin{proof}[Proof of Lemma~\ref{lem:mh-rowsoftmax-bounds}]~\label{prooflem:mh-rowsoftmax-bounds}
For each row $p$, the entries $\mS_{pq}$ are nonnegative and satisfy
$\sum_{q=1}^t\mS_{pq}=1$. Therefore,
\begin{equation}
\lVert\mS\rVert_F^2
=
\sum_{p=1}^t\sum_{q=1}^t\mS_{pq}^2
\le
\sum_{p=1}^t
\left(\sum_{q=1}^t\mS_{pq}\right)^2
=t,
\label{eq:mh-softmax-output-proof}
\end{equation}
which proves \eqref{eq:mh-softmax-output-bound}.

Because the row-softmax acts independently on each row, derivatives with respect to
entries in different rows vanish. Within a fixed row $p$, direct differentiation of
\eqref{eq:mh-softmax-entry-definition} gives
\begin{equation}
\frac{\partial\mS_{pq}}{\partial\mM_{pk}}
=
\beta\mS_{pq}\bigl(\delta_{qk}-\mS_{pk}\bigr).
\label{eq:mh-softmax-first-partial}
\end{equation}
Let $\boldsymbol{s}_p=(\mS_{p1},\ldots,\mS_{pt})^\top$. The Jacobian acting on the
$p$-th row is
\begin{equation}
\mathcal J_p
=
\beta\left(
\operatorname{diag}(\boldsymbol{s}_p)
-
\boldsymbol{s}_p\boldsymbol{s}_p^\top
\right).
\label{eq:mh-softmax-row-jacobian}
\end{equation}
Since $\boldsymbol{s}_p$ is a probability vector,
\begin{equation}
\lVert\mathcal J_p\rVert_2
\le
\beta\left(
\lVert\operatorname{diag}(\boldsymbol{s}_p)\rVert_2
+
\lVert\boldsymbol{s}_p\boldsymbol{s}_p^\top\rVert_2
\right)
\le2\beta.
\label{eq:mh-softmax-row-jacobian-bound}
\end{equation}
Writing $\dd\mM_{p,:}$ and $\dd\mS_{p,:}$ for the corresponding row
perturbations and summing \eqref{eq:mh-softmax-row-jacobian-bound} over the rows yields
\begin{equation}
\lVert\dd\mS\rVert_F^2
=
\sum_{p=1}^t\lVert\dd\mS_{p,:}\rVert_2^2
\le
4\beta^2
\sum_{p=1}^t\lVert\dd\mM_{p,:}\rVert_2^2
=
4\beta^2\lVert\dd\mM\rVert_F^2.
\label{eq:mh-softmax-jacobian-proof}
\end{equation}
Taking square roots proves \eqref{eq:mh-softmax-jacobian-bound} with $B_{s'}=2$.

Differentiating \eqref{eq:mh-softmax-first-partial} once more within the same row gives
\begin{align}
\frac{\partial^2\mS_{pq}}
{\partial\mM_{pk}\,\partial\mM_{p\ell}}
&=
\beta^2\mS_{pq}
\bigl(
2\mS_{pk}\mS_{p\ell}
+\delta_{qk}\delta_{q\ell}
-\delta_{k\ell}\mS_{pk}
-\delta_{qk}\mS_{p\ell}
-\delta_{q\ell}\mS_{pk}
\bigr).
\label{eq:mh-softmax-second-partial}
\end{align}
Each probability lies in $[0,1]$ and each Kronecker delta lies in $\{0,1\}$, so every
entry in \eqref{eq:mh-softmax-second-partial} has absolute value at most $6\beta^2$.
All cross-row second derivatives vanish. Hence the full Hessian tensor has at most
$t^4$ nonzero entries and satisfies
\begin{equation}
\left\lVert
\dd^2\operatorname{RowSoftMax}_\beta(\mM)
\right\rVert_F
\le
6\beta^2t^2.
\label{eq:mh-softmax-hessian-tensor-bound}
\end{equation}
Finally, the Frobenius Cauchy--Schwarz inequality for the tensor contraction gives
\begin{align}
\lVert\dd^2\mS\rVert_F
&=
\left\lVert
\dd^2\operatorname{RowSoftMax}_\beta(\mM)
[\dd\mM,\dd\mM]
\right\rVert_F
\le
\left\lVert
\dd^2\operatorname{RowSoftMax}_\beta(\mM)
\right\rVert_F
\lVert\dd\mM\otimes\dd\mM\rVert_F
\notag\\
&\le
6\beta^2t^2\lVert\dd\mM\rVert_F^2.
\label{eq:mh-softmax-hessian-proof}
\end{align}
This proves \eqref{eq:mh-softmax-hessian-bound} with $B_{s''}=6t^2$.
\end{proof}

\begin{proof}[Proof of Lemma~\ref{lem:mh-data-bounds}]~\label{prooflem:mh-data-bounds}
For each sample $i$, define the residual
\begin{equation}
\mE_i
\coloneqq
\widehat{\mY}_i(\vT)-\mY_i.
\label{eq:mh-data-residual}
\end{equation}
By \eqref{eq:mh-output}, Lemma~\ref{lem:mh-rowsoftmax-bounds}, and the assumed norm
bounds,
\begin{align}
\lVert\mE_i\rVert_F
&\le
\left\lVert
\left(\sum_{j=1}^H\mS_i^{(j)}\right)
\mX_i\mW_v
\right\rVert_F
+
\lVert\mY_i\rVert_F
\le
\sum_{j=1}^H
\lVert\mS_i^{(j)}\rVert_F
\lVert\mX_i\rVert_F
\lVert\mW_v\rVert_F
+B_y
\le
H\sqrt t\,B_xB_w+B_y
\eqqcolon E_H.
\label{eq:mh-data-residual-bound}
\end{align}

We first prove the gradient bound. Since
$\ell_i=\frac12\lVert\mE_i\rVert_F^2$, varying only the $j$-th softmax matrix gives
\begin{align}
\dd_{\mS_i^{(j)}}\ell_i
&=
\left\langle
\mE_i,
(\dd\mS_i^{(j)})\mX_i\mW_v
\right\rangle_F
=
\left\langle
\mE_i\mW_v^\top\mX_i^\top,
\dd\mS_i^{(j)}
\right\rangle_F.
\label{eq:mh-data-upstream-differential}
\end{align}
Thus,
\begin{equation}
\nabla_{\mS_i^{(j)}}\ell_i
=
\mE_i\mW_v^\top\mX_i^\top,
\qquad
\left\lVert\nabla_{\mS_i^{(j)}}\ell_i\right\rVert_F
\le
E_HB_wB_x.
\label{eq:mh-data-upstream-gradient}
\end{equation}
Applying the Jacobian bound \eqref{eq:mh-softmax-jacobian-bound} to
\eqref{eq:mh-softmax-matrix} gives
\begin{equation}
\left\lVert\nabla_{\mM_i^{(j)}}\ell_i\right\rVert_F
\le
E_HB_wB_x\beta B_{s'}.
\label{eq:mh-data-intermediate-gradient}
\end{equation}

Differentiating the score matrix \eqref{eq:mh-score-matrix} and using the Frobenius
gradient identity yields
\begin{align}
\nabla_{\mW_Q^{(j)}}\ell_i
=
\frac1{\sqrt d}
\mX_i^\top
\left(\nabla_{\mM_i^{(j)}}\ell_i\right)
\mX_i\mW_K^{(j)},\quad
\nabla_{\mW_K^{(j)}}\ell_i
=
\frac1{\sqrt d}
\mX_i^\top
\left(\nabla_{\mM_i^{(j)}}\ell_i\right)^\top
\mX_i\mW_Q^{(j)}.
\label{eq:mh-data-parameter-gradients}
\end{align}
Combining \eqref{eq:mh-data-intermediate-gradient} and
\eqref{eq:mh-data-parameter-gradients} with $\lVert\mX_i\rVert_F\le B_x$ gives
\begin{align}
\left\lVert\nabla_{\mW_Q^{(j)}}\ell_i\right\rVert_F
&\le
C_{\nabla,H}\lVert\mW_K^{(j)}\rVert_F,
\notag\\
\left\lVert\nabla_{\mW_K^{(j)}}\ell_i\right\rVert_F
&\le
C_{\nabla,H}\lVert\mW_Q^{(j)}\rVert_F.
\label{eq:mh-data-parameter-gradient-bounds}
\end{align}
Therefore, by \eqref{eq:mh-radii},
\begin{align}
\lVert\nabla_{\vT}\ell_i(\vT)\rVert^2
&=
\sum_{j=1}^H
\left(
\left\lVert\nabla_{\mW_Q^{(j)}}\ell_i\right\rVert_F^2
+
\left\lVert\nabla_{\mW_K^{(j)}}\ell_i\right\rVert_F^2
\right)
\le
C_{\nabla,H}^2\lVert\vT\rVert^2.
\label{eq:mh-data-sample-gradient-bound}
\end{align}
Averaging over $i$ in Definition~\ref{def:mh-general-profile-potential} and applying the triangle inequality proves
\eqref{eq:mh-data-gradient-bound}.

It remains to prove the Laplacian bound. For any scalar coordinate $\theta$ of
$\mW_Q^{(j)}$ or $\mW_K^{(j)}$, differentiating the sample loss twice gives
\begin{equation}
\frac{\partial^2\ell_i}{\partial\theta^2}
=
\left\lVert
\frac{\partial\widehat{\mY}_i}{\partial\theta}
\right\rVert_F^2
+
\left\langle
\mE_i,
\frac{\partial^2\widehat{\mY}_i}{\partial\theta^2}
\right\rangle_F.
\label{eq:mh-data-loss-second-derivative}
\end{equation}
For $\theta=(\mW_Q^{(j)})_{ab}$,
\begin{equation}
\frac{\partial\mM_i^{(j)}}{\partial(\mW_Q^{(j)})_{ab}}
=
\frac1{\sqrt d}
\mX_i\ve_a\ve_b^\top
(\mW_K^{(j)})^\top\mX_i^\top.
\label{eq:mh-data-score-coordinate-derivative}
\end{equation}
Using the Frobenius norm of a rank-one matrix and summing over the coordinates gives
\begin{align}
&\sum_{a=1}^d\sum_{b=1}^r
\left\lVert
\frac{\partial\mM_i^{(j)}}{\partial(\mW_Q^{(j)})_{ab}}
\right\rVert_F^2
=
\frac1d
\sum_{a=1}^d\lVert\mX_i\ve_a\rVert^2
\sum_{b=1}^r\lVert\mX_i\mW_K^{(j)}\ve_b\rVert^2
=
\frac1d
\lVert\mX_i\rVert_F^2
\lVert\mX_i\mW_K^{(j)}\rVert_F^2
\le
\frac{B_x^4}{d}\lVert\mW_K^{(j)}\rVert_F^2.
\label{eq:mh-data-score-coordinate-sum}
\end{align}
The score matrix is linear in $\mW_Q^{(j)}$, so its second derivative with respect to
each coordinate is zero. Applying the two differential bounds from
Lemma~\ref{lem:mh-rowsoftmax-bounds} to \eqref{eq:mh-output} and using
\eqref{eq:mh-data-score-coordinate-sum} yields
\begin{align}
&\sum_{a=1}^d\sum_{b=1}^r
\left\lVert
\frac{\partial\widehat{\mY}_i}
{\partial(\mW_Q^{(j)})_{ab}}
\right\rVert_F^2
\le
(\beta B_{s'}B_xB_w)^2
\frac{B_x^4}{d}
\lVert\mW_K^{(j)}\rVert_F^2,
\label{eq:mh-data-output-first-derivative-sum}\\
&\sum_{a=1}^d\sum_{b=1}^r
\left\lVert
\frac{\partial^2\widehat{\mY}_i}
{\partial(\mW_Q^{(j)})_{ab}^2}
\right\rVert_F
\le
B_xB_w\beta^2B_{s''}
\frac{B_x^4}{d}
\lVert\mW_K^{(j)}\rVert_F^2.
\label{eq:mh-data-output-second-derivative-sum}
\end{align}
Equations~\eqref{eq:mh-data-residual-bound},
\eqref{eq:mh-data-loss-second-derivative},
\eqref{eq:mh-data-output-first-derivative-sum}, and
\eqref{eq:mh-data-output-second-derivative-sum} imply
\begin{equation}
\left|\Delta_{\mW_Q^{(j)}}\ell_i\right|
\le
C_{\Delta,H}\lVert\mW_K^{(j)}\rVert_F^2.
\label{eq:mh-data-WQ-laplacian-bound}
\end{equation}
The same calculation with $\mW_Q^{(j)}$ and $\mW_K^{(j)}$ exchanged gives
\begin{equation}
\left|\Delta_{\mW_K^{(j)}}\ell_i\right|
\le
C_{\Delta,H}\lVert\mW_Q^{(j)}\rVert_F^2.
\label{eq:mh-data-WK-laplacian-bound}
\end{equation}
Summing these two bounds over the heads and using \eqref{eq:mh-radii} gives
\begin{equation}
\left|\Delta_{\vT}\ell_i(\vT)\right|
\le
C_{\Delta,H}\lVert\vT\rVert^2.
\label{eq:mh-data-sample-laplacian-bound}
\end{equation}
Finally, averaging over $i$ in Definition~\ref{def:mh-general-profile-potential} and applying the triangle inequality
proves \eqref{eq:mh-data-laplacian-bound}.
\end{proof}

\subsection{Example Classes of Admissible Profiles}
\label{subsec:mh-generated-profiles}

\begin{corollary}[{\bf Logarithmic Multi-Head Regularizer}]
\label{cor:mh-log-profile-standalone}
Let $\phi_{\log}(a) = a^2\log(1+a^2)$ as in Remark~\ref{rem:mh-admissible-examples}. Then $\phi_{\log}$ satisfies Assumption~\ref{ass:mh-admissible-profile}, and the potential
\begin{align}
\widetilde V_{\log,H,\mathrm{ATT}}(\vT)
&=
\widehat R_{A,H}(\vT)
+
\frac{\lambda}{2H}\sum_{j=1}^H \lVert\vT_j\rVert^2\log\!\left(1+\lVert\vT_j\rVert^2\right)
\notag\\
&=
\widehat R_{A,H}(\vT)
+
\frac{\lambda}{2H} \sum_{j=1}^H \Bigl( \lVert\mW_Q^{(j)}\rVert_F^2 + \lVert\mW_K^{(j)}\rVert_F^2 \Bigr)
\log\!\Bigl(1 + \lVert\mW_Q^{(j)}\rVert_F^2 + \lVert\mW_K^{(j)}\rVert_F^2 \Bigr)
\label{eq:mh-log-recovers-standalone}
\end{align}
satisfies the Villani condition of Definition~\ref{def:villani}.
\end{corollary}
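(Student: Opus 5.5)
The plan is to reduce the corollary entirely to Theorem~\ref{thm:mh-general-profile-villani}. That theorem already shows that the head-wise radial potential satisfies the Villani condition for every admissible profile, every $H$ and every $\lambda>0$. So it suffices to check that $\phi_{\log}(a)=a^2\log(1+a^2)$ satisfies (P1)--(P3) of Assumption~\ref{ass:mh-admissible-profile}. The identity \eqref{eq:mh-log-recovers-standalone} itself is just the substitution $\phi=\phi_{\log}$ into \eqref{eq:mh-general-potential}, together with $\lVert\vT_j\rVert^2=\lVert\mW_Q^{(j)}\rVert_F^2+\lVert\mW_K^{(j)}\rVert_F^2$ from \eqref{eq:mh-radii}.

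First I would compute the derivatives explicitly:
\[
\phi_{\log}'(a)=2a\log(1+a^2)+\frac{2a^3}{1+a^2},\qquad
\phi_{\log}''(a)=2\log(1+a^2)+\frac{4a^2}{1+a^2}+\frac{6a^2+2a^4}{(1+a^2)^2}.
\]
These follow from differentiating the product and using $\frac{d}{da}\frac{2a^3}{1+a^2}=\frac{6a^2(1+a^2)-4a^4}{(1+a^2)^2}$. Both expressions are continuous on $[0,\infty)$, and $\phi_{\log}$ is in fact smooth there, being a composition of smooth functions of $a^2$. Substituting $a=0$ gives $\phi_{\log}'(0)=0$, which is (P1).

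For (P2), I would use $\phi_{\log}'(a)/a=2\log(1+a^2)+\frac{2a^2}{1+a^2}$, and the logarithm diverges to $+\infty$. For (P3), note that $\phi_{\log}''(a)=O(\log a)$ as $a\to\infty$, since the rational terms stay bounded by constants (at most $4+2$). On the other hand, $\phi_{\log}'(a)\ge 2a\log(1+a^2)$, so
\[
\frac{[\phi_{\log}''(a)]_+}{\phi_{\log}'(a)^2}\le \frac{2\log(1+a^2)+10}{4a^2\log^2(1+a^2)}\longrightarrow 0 .
\]
With (P1)--(P3) in hand, Theorem~\ref{thm:mh-general-profile-villani} applies directly and gives all four Villani requirements.

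No step is a real obstacle. The only place needing care is getting the second-derivative computation right, or at least bounding it by $C(1+\log(1+a^2))$, which is all (P3) requires. If the exact formula is error-prone, I would instead bound each term of $\phi_{\log}''$ crudely, since only its logarithmic growth matters.
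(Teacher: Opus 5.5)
Your proposal is correct and follows essentially the same route as the paper: you verify (P1)--(P3) for $\phi_{\log}$, invoke Theorem~\ref{thm:mh-general-profile-villani}, and obtain the displayed identity by substituting \eqref{eq:mh-radii}. The only difference is cosmetic and lies in (P3): the paper writes $\phi_{\log}''=g_{\log}+a\,g_{\log}'$ with $g_{\log}=\phi_{\log}'/a$ and uses $a\,g_{\log}'(a)\le 8$ and $g_{\log}(a)\ge 2\log 2$ for $a\ge 1$, whereas you expand $\phi_{\log}''$ explicitly and bound it by $2\log(1+a^2)+10$ against $\phi_{\log}'(a)^2\ge 4a^2\log^2(1+a^2)$, which is the same comparison of logarithmic growth with $a^2\log^2(1+a^2)$.
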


\begin{proof}
Let $\phi_{\log}(a)=a^2\log(1+a^2)$, and define
\begin{equation}
g_{\log}(a)
\coloneqq
\frac{\phi_{\log}'(a)}{a}
=
2\log(1+a^2)+\frac{2a^2}{1+a^2},
\qquad a>0,
\label{eq:mh-log-g-standalone}
\end{equation}
so that $\phi_{\log}(a)=\int_0^a v\,g_{\log}(v)\,dv$ and, correspondingly,
\begin{equation}
\phi_{\log}'(a)=a\,g_{\log}(a),
\qquad
\phi_{\log}''(a)=g_{\log}(a)+a\,g_{\log}'(a),
\qquad
g_{\log}'(a)
=
\frac{4a}{1+a^2}
+
\frac{4a}{(1+a^2)^2}.
\label{eq:mh-log-derivs-standalone}
\end{equation}
Direct computation gives $\phi_{\log}'(a)=2a\log(1+a^2)+\frac{2a^3}{1+a^2}$, which is continuous on $[0,\infty)$ with $\phi_{\log}'(0)=0$, and $\phi_{\log}''$ (obtained from \eqref{eq:mh-log-derivs-standalone}) is likewise continuous on $[0,\infty)$. Hence $\phi_{\log}\in C^2([0,\infty))$ with $\phi_{\log}'(0)=0$, which is (P1) of Assumption~\ref{ass:mh-admissible-profile}.

Since $2\log(1+a^2)\to\infty$ while $\frac{2a^2}{1+a^2}\to2$ remains bounded,
\begin{equation}
\frac{\phi_{\log}'(a)}{a}=g_{\log}(a)\longrightarrow+\infty
\qquad\text{as }a\to\infty,
\label{eq:mh-log-P2-standalone}
\end{equation}
which is exactly (P2) of Assumption~\ref{ass:mh-admissible-profile}.

We first bound the growth of $g_{\log}'$ relative to $g_{\log}$. For $a\ge1$,
\begin{equation}
a\,g_{\log}'(a)
=
\frac{4a^2}{1+a^2}+\frac{4a^2}{(1+a^2)^2}
\le
4+4=8,
\label{eq:mh-log-g-growth-standalone}
\end{equation}
while $g_{\log}(a)\ge2\log2>0$ for $a\ge1$, so choosing $C_g\coloneqq 8/(2\log2)$ gives
\begin{equation}
a\,g_{\log}'(a)\le C_g\,g_{\log}(a)
\qquad\text{for every }a\ge1.
\label{eq:mh-log-g-growth-final-standalone}
\end{equation}
Consequently, for $a\ge1$ (where $g_{\log}(a)>0$),
\begin{equation}
[\phi_{\log}''(a)]_+
=
[g_{\log}(a)+a\,g_{\log}'(a)]_+
\le
(1+C_g)\,g_{\log}(a).
\label{eq:mh-log-curvature-bound-standalone}
\end{equation}
Since $\phi_{\log}'(a)^2=a^2g_{\log}(a)^2$, dividing \eqref{eq:mh-log-curvature-bound-standalone} through gives
\begin{equation}
0\le
\frac{[\phi_{\log}''(a)]_+}{\phi_{\log}'(a)^2}
\le
\frac{(1+C_g)\,g_{\log}(a)}{a^2g_{\log}(a)^2}
=
\frac{1+C_g}{a^2g_{\log}(a)}
\longrightarrow0
\qquad\text{as }a\to\infty,
\label{eq:mh-log-P3-standalone}
\end{equation}
using $g_{\log}(a)\to\infty$ from Step~B. This verifies (P3) of Assumption~\ref{ass:mh-admissible-profile}.

Having verified (P1)–(P3) Assumption~\ref{ass:mh-admissible-profile}, $\phi_{\log}$ satisfies Assumption~\ref{ass:mh-admissible-profile}.

By Theorem~\ref{thm:mh-general-profile-villani} applied with $\phi=\phi_{\log}$, the potential $\widetilde V_{H,\mathrm{ATT}}$ satisfies the Villani condition. Substituting $\phi_{\log}(\lVert\vT_j\rVert)=\lVert\vT_j\rVert^2\log(1+\lVert\vT_j\rVert^2)$ into \eqref{eq:mh-general-potential} and expanding $\lVert\vT_j\rVert^2=\lVert\mW_Q^{(j)}\rVert_F^2+\lVert\mW_K^{(j)}\rVert_F^2$ via \eqref{eq:mh-radii} for each head $j=1,\ldots,H$ gives exactly \eqref{eq:mh-log-recovers-standalone}.
\end{proof}

\begin{corollary}[{\bf Joint-Radius Head-Wise Polynomial Regularization}]
\label{cor:mh-joint-polynomial}
For every $\varepsilon>0$ and $\lambda>0$, let $\phi_\varepsilon(a)=a^{2+\varepsilon}$. Then the potential
\begin{equation}
\widetilde V_{\varepsilon,H,\mathrm{ATT}}(\vT)
\coloneqq
\widehat R_{A,H}(\vT) + \frac{\lambda}{2H} \sum_{j=1}^H \Bigl( \lVert\mW_Q^{(j)}\rVert_F^2 + \lVert\mW_K^{(j)}\rVert_F^2 \Bigr)^{1+\varepsilon/2}
\label{eq:mh-joint-polynomial-potential-cor}
\end{equation}
satisfies the Villani condition as in Definition~\ref{def:villani}.
\end{corollary}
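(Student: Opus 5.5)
The plan is to reduce the claim to Theorem~\ref{thm:mh-general-profile-villani}: it suffices to show that $\phi_\varepsilon(a)=a^{2+\varepsilon}$ satisfies (P1)--(P3) of Assumption~\ref{ass:mh-admissible-profile}. Then I will check that substituting $\phi_\varepsilon$ into \eqref{eq:mh-general-potential} reproduces \eqref{eq:mh-joint-polynomial-potential-cor}. The structure mirrors the proof of Corollary~\ref{cor:mh-log-profile-standalone}, but the computations are simpler because the profile is a pure power.

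\textbf{Step 1 (P1).} For $a\ge0$ we have
\[
\phi_\varepsilon'(a)=(2+\varepsilon)a^{1+\varepsilon},
\qquad
\phi_\varepsilon''(a)=(2+\varepsilon)(1+\varepsilon)a^{\varepsilon}.
\]
Since $\varepsilon>0$, both are continuous on $[0,\infty)$, and $\phi_\varepsilon''(0)=0$. In particular $\phi_\varepsilon\in C^2([0,\infty))$ and $\phi_\varepsilon'(0)=0$. I will check the continuity of $\phi_\varepsilon''$ at $0$ explicitly: it holds because $a^\varepsilon\to0$ as $a\downarrow0$. Note that for $0<\varepsilon<1$ the third derivative blows up at the origin, but only $C^2$ is required.

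\textbf{Step 2 (P2).} We have $\phi_\varepsilon'(a)/a=(2+\varepsilon)a^{\varepsilon}\to+\infty$ as $a\to\infty$, since $\varepsilon>0$.

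\textbf{Step 3 (P3).} Since $\phi_\varepsilon''\ge0$, $[\phi_\varepsilon'']_+=\phi_\varepsilon''$. Therefore
\[
\frac{[\phi_\varepsilon''(a)]_+}{\phi_\varepsilon'(a)^2}
=\frac{(2+\varepsilon)(1+\varepsilon)a^{\varepsilon}}{(2+\varepsilon)^2a^{2+2\varepsilon}}
=\frac{1+\varepsilon}{2+\varepsilon}\,a^{-2-\varepsilon}\longrightarrow0 .
\]

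With (P1)--(P3) established, Theorem~\ref{thm:mh-general-profile-villani} applies for every $H$ and $\lambda>0$. Using \eqref{eq:mh-radii},
\[
\phi_\varepsilon(\lVert\vT_j\rVert)=\bigl(\lVert\mW_Q^{(j)}\rVert_F^2+\lVert\mW_K^{(j)}\rVert_F^2\bigr)^{1+\varepsilon/2},
\]
so the general potential coincides with \eqref{eq:mh-joint-polynomial-potential-cor}. The only step needing any care is regularity at the origin in Step 1 when $\varepsilon<1$. I expect no real obstacle there, because the $C^2$ extension of the radial map across $\vT_j=\vzero$ was already handled in Step 1 of the proof of Theorem~\ref{thm:mh-general-profile-villani}, using only (P1).
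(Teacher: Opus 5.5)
Your proposal is correct and is essentially the paper's proof. Like the paper, you verify (P1)--(P3) for $\phi_\varepsilon(a)=a^{2+\varepsilon}$ with the same derivative computations and the same ratio $\frac{1+\varepsilon}{2+\varepsilon}a^{-2-\varepsilon}$, then invoke Theorem~\ref{thm:mh-general-profile-villani} and identify $\phi_\varepsilon(\lVert\vT_j\rVert)$ with $\bigl(\lVert\mW_Q^{(j)}\rVert_F^2+\lVert\mW_K^{(j)}\rVert_F^2\bigr)^{1+\varepsilon/2}$.
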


\begin{proof}
    Let
    \begin{equation}
    \phi_\varepsilon(a)=a^{2+\varepsilon}.
    \label{eq:mh-polynomial-profile}
    \end{equation}
    Then
    \begin{equation}
    \phi_\varepsilon'(a)=(2+\varepsilon)a^{1+\varepsilon},
    \qquad
    \phi_\varepsilon''(a)=(2+\varepsilon)(1+\varepsilon)a^\varepsilon.
    \label{eq:mh-polynomial-derivatives}
    \end{equation}
    The function $\phi_\varepsilon$ belongs to $C^2([0,\infty))$ and satisfies
    $\phi_\varepsilon'(0)=0$. Furthermore,
    \begin{equation}
    \frac{\phi_\varepsilon'(a)}{a}
    =(2+\varepsilon)a^\varepsilon
    \longrightarrow+\infty,
    \label{eq:mh-polynomial-P2}
    \end{equation}
    and
    \begin{equation}
    \frac{[\phi_\varepsilon''(a)]_+}{\phi_\varepsilon'(a)^2}
    =
    \frac{1+\varepsilon}{2+\varepsilon}
    \,a^{-2-\varepsilon}
    \longrightarrow0.
    \label{eq:mh-polynomial-P3}
    \end{equation}
    Thus $\phi_\varepsilon$ is admissible. The corresponding regularizer is
    \begin{equation}
    R_{\varepsilon,H}^{\mathrm{rad}}(\vT)
    =
    \frac{\lambda}{2H}
    \sum_{j=1}^H
    \Bigl(
    \lVert\mW_Q^{(j)}\rVert_F^2
    +
    \lVert\mW_K^{(j)}\rVert_F^2
    \Bigr)^{1+\varepsilon/2}.
    \label{eq:mh-radial-polynomial-regularizer}
    \end{equation}
    By Theorem~\ref{thm:mh-general-profile-villani} applied with $\phi=\phi_{\varepsilon}$, the potential $\widetilde V_{H,\mathrm{ATT}}$ satisfies the Villani condition. 
    For this profile, Equation~\eqref{eq:mh-general-potential} becomes
    \begin{equation}
    \widetilde V_{\varepsilon,H,\mathrm{ATT}}(\vT)
    \coloneqq
    \widehat R_{A,H}(\vT) + \frac{\lambda}{2H} \sum_{j=1}^H \Bigl( \lVert\mW_Q^{(j)}\rVert_F^2 + \lVert\mW_K^{(j)}\rVert_F^2 \Bigr)^{1+\varepsilon/2}
    \end{equation}
\end{proof}

\clearpage
\section{Rank-Restricted Mean-Square Loss on Shallow Nets}
\label{app:lora-villani}

Throughout this section, we consider the LoRA loss in Defintion~\ref{def:lora-general-radial-potential}. We recall from Definition~\ref{def:shallow-lora-param}, $\mU$ and $\mV$ denote the trainable factor matrices. Here, we will define $\vT \coloneqq (\mU, \mV) \in \R^D$ where $D = (p+d)r$.The parameter norm is 
\begin{equation} 
    \norm{\vT} = \left( \norm{\mU}_F^2 + \norm{\mV}_F^2 \right)^{1/2}. \label{eq:shallow-lora-radius} 
\end{equation} 

\begin{lemma}[Gradient and Laplacian Bounds for the LoRA Data Term]
\label{lem:lora-data-bounds}
Suppose that, for every $i=1,\ldots,n$,
\[
\lVert\vx_i\rVert\le B_x,
\qquad
|y_i|\le B_y,
\]
and suppose that $\lVert\va\rVert\le B_a$, $\lVert\mW_0\rVert_F\le B_0$, and that the
activation $\sigma$ satisfies Assumption~\ref{as1}, i.e.\ $|\sigma(u)|\le B_\sigma$,
$|\sigma'(u)|\le L_\sigma$ and $|\sigma''(u)|\le L_\sigma'$ for all $u\in\R$. Set
\begin{equation}
\kappa\coloneqq\frac{|\alpha|B_x}{r},
\qquad
E\coloneqq B_y+B_aB_\sigma\sqrt p .
\label{eq:lora-data-constants}
\end{equation}
Then, for the empirical data term $\Ldata$ in \eqref{eq:lora-general-data-loss} and every
$\vT=(\mU,\mV)\in\R^D$,
\begin{equation}
0\le\Ldata(\vT)\le\tfrac12E^2,
\label{eq:lora-data-value-bound}
\end{equation}
\begin{equation}
\left\lVert\nabla_{\vT}\Ldata(\vT)\right\rVert
\le
C_{\nabla,\mathrm{LoRA}}\lVert\vT\rVert,
\qquad
C_{\nabla,\mathrm{LoRA}}\coloneqq\kappa B_aL_\sigma E,
\label{eq:lora-data-gradient-bound}
\end{equation}
and
\begin{equation}
\left|\Delta_{\vT}\Ldata(\vT)\right|
\le
C_{\Delta,\mathrm{LoRA}}\lVert\vT\rVert^2,
\qquad
C_{\Delta,\mathrm{LoRA}}\coloneqq
\kappa^2\bigl(B_a^2L_\sigma^2+E L_\sigma'B_a\sqrt p\bigr),
\label{eq:lora-data-laplacian-bound}
\end{equation}
where $\kappa$ and $E$ depend only on $B_x,B_y,B_a,B_\sigma,\alpha,r$ and the width $p$, and are
independent of $\vT$. In particular
$\lVert\nabla_{\vT}\Ldata\rVert\le C_{\nabla,\mathrm{LoRA}}(1+\lVert\vT\rVert)\lVert\vT\rVert$
and $|\Delta_{\vT}\Ldata|\le C_{\Delta,\mathrm{LoRA}}(1+\lVert\vT\rVert^2)$.
\end{lemma}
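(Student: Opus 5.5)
The plan is a direct chain-rule computation carried out one sample at a time, followed by averaging over $i$. Write $\vz_i=\mW(\mU,\mV)\vx_i$, $\vg_i\coloneqq\va\odot\sigma'(\vz_i)\in\R^p$ and $e_i\coloneqq\widehat y_i-y_i$. The whole argument rests on one structural fact: $\vz_i$ is affine in $\mU$ for fixed $\mV$ and affine in $\mV$ for fixed $\mU$. Consequently every \emph{pure} second derivative of $\vz_i$ with respect to a single coordinate of $\vT$ vanishes. This is exactly what the Laplacian needs, since it only involves diagonal second derivatives.

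\textbf{Value bound.} The bound \eqref{eq:lora-data-value-bound} is immediate. Assumption~\ref{as1} gives $\lVert\sigma(\vz_i)\rVert\le B_\sigma\sqrt p$, and Cauchy--Schwarz then gives $|\widehat y_i|\le B_aB_\sigma\sqrt p$. Hence $|e_i|\le E$ and $0\le\ell_i\le\frac12E^2$.

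\textbf{Gradient bound.} Since $d\vz_i=\frac{\alpha}{r}(d\mU\,\mV^\top+\mU\,d\mV^\top)\vx_i$, we have
\[
\nabla_{\mU}\widehat y_i=\tfrac{\alpha}{r}\,\vg_i(\mV^\top\vx_i)^\top,
\qquad
\nabla_{\mV}\widehat y_i=\tfrac{\alpha}{r}\,\vx_i(\mU^\top\vg_i)^\top .
\]
Using $\lVert\vg_i\rVert\le B_aL_\sigma$ together with $\lVert\vx_i\rVert\le B_x$, these have Frobenius norms at most $\kappa B_aL_\sigma\lVert\mV\rVert_F$ and $\kappa B_aL_\sigma\lVert\mU\rVert_F$ respectively. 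Therefore $\lVert\nabla_{\vT}\widehat y_i\rVert\le\kappa B_aL_\sigma\lVert\vT\rVert$. Multiplying by $|e_i|\le E$ and averaging over $i$ (triangle inequality) gives \eqref{eq:lora-data-gradient-bound}. The ``in particular'' forms then follow trivially, since $\lVert\vT\rVert\le(1+\lVert\vT\rVert)\lVert\vT\rVert$ and $\lVert\vT\rVert^2\le1+\lVert\vT\rVert^2$.

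\textbf{Laplacian bound.} For each scalar coordinate $\theta$ of $\vT$,
\[
\partial_\theta^2\ell_i=(\partial_\theta\widehat y_i)^2+e_i\,\partial_\theta^2\widehat y_i .
\]
Summing the first term over all coordinates gives $\lVert\nabla_{\vT}\widehat y_i\rVert^2\le\kappa^2B_a^2L_\sigma^2\lVert\vT\rVert^2$, by the gradient step.

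For the second term, $\partial_\theta^2\vz_i=0$, so $\partial_\theta^2\widehat y_i=\sum_k a_k\sigma''(z_{ik})(\partial_\theta z_{ik})^2$. We split by coordinate type.
\begin{itemize}
\item For $\theta=\mU_{kb}$, only $z_{ik}$ moves, with $\partial_\theta z_{ik}=\frac{\alpha}{r}(\mV^\top\vx_i)_b$. Summing over $k,b$ gives at most $\frac{\alpha^2}{r^2}L_\sigma'\lVert\va\rVert_1\lVert\mV^\top\vx_i\rVert^2\le\kappa^2L_\sigma'B_a\sqrt p\,\lVert\mV\rVert_F^2$, using $\lVert\va\rVert_1\le\sqrt p\,B_a$.
\item For $\theta=\mV_{jb}$, we have $\partial_\theta z_{ik}=\frac{\alpha}{r}\mU_{kb}(\vx_i)_j$. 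Summing over $j,b$ gives at most $\frac{\alpha^2}{r^2}\lVert\vx_i\rVert^2L_\sigma'\sum_k|a_k|\lVert\mU_{k,:}\rVert^2\le\kappa^2L_\sigma'B_a\lVert\mU\rVert_F^2$, which is also $\le\kappa^2L_\sigma'B_a\sqrt p\,\lVert\mU\rVert_F^2$.
\end{itemize}
Adding the two cases bounds $\sum_\theta|\partial_\theta^2\widehat y_i|$ by $\kappa^2L_\sigma'B_a\sqrt p\,\lVert\vT\rVert^2$. Multiplying by $|e_i|\le E$, adding the first-order term, and averaging over $i$ yields \eqref{eq:lora-data-laplacian-bound} with the stated $C_{\Delta,\mathrm{LoRA}}$.

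\textbf{Main obstacle.} The only delicate point is the second-derivative bookkeeping. One must use the bilinear structure so that no $\partial_\theta^2\vz_i$ term or cross term appears in the trace. One must also use $\lVert\va\rVert_1\le\sqrt p\,B_a$ in the $\mU$-block, which is where the factor $\sqrt p$ enters. Everything else is routine norm chasing. Note that $B_0$ never enters: $\mW_0$ only shifts $\vz_i$, and all bounds on $\sigma,\sigma',\sigma''$ are uniform.
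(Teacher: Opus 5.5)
Your proposal is correct and follows essentially the same route as the paper: the uniform residual bound $|e_i|\le E$, the chain rule through the bilinear pre-activation $\vz_i$ giving $\lVert\nabla_{\vT}\widehat y_i\rVert\le\kappa B_aL_\sigma\lVert\vT\rVert$, and a Laplacian in which only the $\sigma''$ term survives because pure second derivatives of $\vz_i$ vanish, with $\lVert\va\rVert_1\le\sqrt p\,B_a$ supplying the $\sqrt p$. The differences are only cosmetic: you write the gradient blocks explicitly and split the $\sigma''$ term by $\mU$/$\mV$ coordinates rather than by output neuron, and your sign $+e_i\,\partial_\theta^2\widehat y_i$ is the correct one, whereas the paper's $-e_i$ is harmless only because absolute values are taken.
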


The above lemma is proved in Appendix~\ref{app:subsec:loralemm}.

\begin{proof}[Proof of Theorem~\ref{thm:lora-general-profile-villani}]
We verify the four requirements of Definition~\ref{def:villani} as follows. In Step~1, we
establish the first requirement. In Step~2, we verify the second and third requirements.
Finally, in Steps~3 to 6, we verify the fourth requirement.

\paragraph{\hspace{1.00em}\bf (Step 1: $C^2$-Regularity.)}
We recall from \eqref{eq:lora-general-radial-potential},
\[
\widetilde V_{\mathrm{LoRA}}(\vT)
=
\Ldata(\vT)
+
R_{\mathrm{LoRA}}(\vT),
\qquad
R_{\mathrm{LoRA}}(\vT) = \frac{\lambda}{2}\phi\!\left(\lVert\vT\rVert\right).
\]
The data term is twice continuously differentiable: for each $i$, the map
\begin{equation}
\vT=(\mU,\mV)
\longmapsto
\mW(\vT)\vx_i
=
\mW_0\vx_i+\frac{\alpha}{r}\mU\mV^\top\vx_i
\label{eq:lora-affine-preactivation}
\end{equation}
is a polynomial (in fact bilinear) map of the entries of $\vT$, plus the fixed shift
$\mW_0\vx_i$; since $\sigma\in C^2(\R)$ with $\sigma''$ bounded under Assumption~\ref{as1},
and since compositions, sums and inner products preserve $C^2$-regularity,
\begin{equation}
\Ldata\in C^2(\R^D).
\label{eq:lora-data-smooth}
\end{equation}
(If $\sigma$ is in addition $C^\infty$, as for the sigmoid, $\tanh$, $\arctan$, $\operatorname{erf}$
and the trigonometric activations of Table~\ref{tab:actlora}, then $\Ldata\in C^\infty(\R^D)$;
only $C^2$ is required here.)

For the regularizer, $\phi\in C^2([0,\infty))$ by (P1) of
Assumption~\ref{ass:lora-admissible-profile}, but $\lVert\vT\rVert$ is not differentiable at
the origin. Since there is now only a single radial block (the entire parameter vector
$\vT\in\R^D$, with $D=(p+d)r$, rather than $H$ separate head blocks), the argument of Step~1 in
the proof of Theorem~\ref{thm:mh-general-profile-villani} applies verbatim with $D_0$ replaced
by $D$ and $\vT_j$ replaced by $\vT$. That is, for $\vT\neq\vzero$,
\begin{equation}
\nabla_{\vT}\phi\!\left(\lVert\vT\rVert\right)
=
\frac{\phi'\!\left(\lVert\vT\rVert\right)}{\lVert\vT\rVert}\vT,
\qquad
\nabla_{\vT}^2\phi\!\left(\lVert\vT\rVert\right)
=
\frac{\phi'\!\left(\lVert\vT\rVert\right)}{\lVert\vT\rVert}\mI_D
+
\left(
\phi''\!\left(\lVert\vT\rVert\right)
-
\frac{\phi'\!\left(\lVert\vT\rVert\right)}{\lVert\vT\rVert}
\right)
\frac{\vT\vT^\top}{\lVert\vT\rVert^2},
\label{eq:lora-radial-hessian}
\end{equation}
and, exactly as in \eqref{eq:mh-phi-prime-origin-limit}--\eqref{eq:mh-radial-C2}, using
$\phi'(0)=0$ and continuity of $\phi''$ at the origin,
\begin{equation}
\phi\!\left(\lVert\vT\rVert\right)\in C^2(\R^D),
\qquad
\left.\nabla_{\vT}\phi\!\left(\lVert\vT\rVert\right)\right|_{\vT=\vzero}=\vzero,
\qquad
\left.\nabla_{\vT}^2\phi\!\left(\lVert\vT\rVert\right)\right|_{\vT=\vzero}=\phi''(0)\mI_D.
\label{eq:lora-radial-C2}
\end{equation}
Hence $R_{\mathrm{LoRA}}\in C^2(\R^D)$, and combined with \eqref{eq:lora-data-smooth},
\begin{equation}
\widetilde V_{\mathrm{LoRA}}\in C^2(\R^D).
\label{eq:lora-total-C2}
\end{equation}

\paragraph{\hspace{1.00em}\bf (Step 2: Confinement and Integrability.)}
Fix $M>0$. By (P2) of Assumption~\ref{ass:lora-admissible-profile}, there exists $R_M>0$ such
that
\begin{equation}
\phi'(a)\ge Ma
\qquad
\text{for every }a\ge R_M,
\label{eq:lora-slope-lower-bound}
\end{equation}
and integrating gives, exactly as in \eqref{eq:mh-global-phi-lower-bound},
\begin{equation}
\phi(a)\ge \frac{M}{2}a^2-C_M
\qquad
\text{for every }a\ge0,
\qquad
C_M\coloneqq\max_{0\le a\le R_M}\left(\frac M2 a^2-\phi(a)\right)<\infty.
\label{eq:lora-global-phi-lower-bound}
\end{equation}
Since $\Ldata\ge0$, Equations~\eqref{eq:lora-general-radial-potential} and
\eqref{eq:lora-global-phi-lower-bound} imply
\begin{equation}
\widetilde V_{\mathrm{LoRA}}(\vT)
\ge
\frac{\lambda}{2}\phi\!\left(\lVert\vT\rVert\right)
\ge
\frac{\lambda M}{4}\lVert\vT\rVert^2
-
\frac{\lambda}{2}C_M.
\label{eq:lora-confining-lower-bound}
\end{equation}
Hence
\begin{equation}
\lim_{\lVert\vT\rVert\to\infty}\widetilde V_{\mathrm{LoRA}}(\vT)=+\infty,
\label{eq:lora-confinement}
\end{equation}
and, for every $s>0$, \eqref{eq:lora-confining-lower-bound} yields a Gaussian tail bound so
that
\begin{equation}
\int_{\R^D}\exp\!\left(-\frac2s\widetilde V_{\mathrm{LoRA}}(\vT)\right)d\vT<\infty.
\label{eq:lora-integrability}
\end{equation}
We note in passing that, by \eqref{eq:lora-data-value-bound} of
Lemma~\ref{lem:lora-data-bounds}, the data term is also bounded \emph{above} by $\frac12E^2$,
so $\widetilde V_{\mathrm{LoRA}}$ and $\frac\lambda2\phi(\lVert\vT\rVert)$ differ by a bounded
quantity; only the lower bound is used here.

\paragraph{\hspace{1.00em}\bf (Step 3: Gradient and Laplacian of the Regularizer.)}
By \eqref{eq:lora-radial-hessian} and \eqref{eq:lora-radial-C2}, for every $\vT\in\R^D$,
\begin{equation}
\left\lVert\nabla_{\vT}R_{\mathrm{LoRA}}(\vT)\right\rVert^2
=
\frac{\lambda^2}{4}\phi'\!\left(\lVert\vT\rVert\right)^2
\eqqcolon
\frac{\lambda^2}{4}G(\vT).
\label{eq:lora-G-definition}
\end{equation}
Taking the trace of \eqref{eq:lora-radial-hessian} as in
\eqref{eq:mh-block-laplacian-nonzero}--\eqref{eq:mh-block-laplacian} (with $D_0$ replaced by
$D$) gives, for every $\vT\in\R^D$,
\begin{equation}
\Delta_{\vT}\phi\!\left(\lVert\vT\rVert\right)
=
\mathcal L_\phi\!\left(\lVert\vT\rVert\right)
\coloneqq
\phi''\!\left(\lVert\vT\rVert\right)
+
(D-1)\frac{\phi'\!\left(\lVert\vT\rVert\right)}{\lVert\vT\rVert},
\label{eq:lora-radial-laplacian}
\end{equation}
with the convention $\mathcal L_\phi(0)=D\phi''(0)$. Set
\begin{equation}
\kappa_\phi(a)\coloneqq[\mathcal L_\phi(a)]_+,
\qquad
K(\vT)\coloneqq\kappa_\phi\!\left(\lVert\vT\rVert\right).
\label{eq:lora-K-definition}
\end{equation}
Then
\begin{equation}
\Delta_{\vT}R_{\mathrm{LoRA}}(\vT)
\le
\frac{\lambda}{2}K(\vT).
\label{eq:lora-regularizer-laplacian-upper}
\end{equation}

\paragraph{\hspace{1.00em}\bf (Step 4: Asymptotic Domination.)}
We show
\begin{equation}
\lim_{\lVert\vT\rVert\to\infty}\frac{G(\vT)}{\lVert\vT\rVert^2}=+\infty,
\qquad
\lim_{\lVert\vT\rVert\to\infty}\frac{K(\vT)}{G(\vT)}=0.
\label{eq:lora-domination-goals}
\end{equation}
Since $G(\vT)=\phi'(\lVert\vT\rVert)^2$ is a function of $a=\lVert\vT\rVert$ alone, both limits
reduce to single-variable statements. By (P2) of
Assumption~\ref{ass:lora-admissible-profile},
\begin{equation}
\frac{G(\vT)}{\lVert\vT\rVert^2}
=
\left(\frac{\phi'(a)}{a}\right)^2
\longrightarrow+\infty
\qquad
\text{as }a=\lVert\vT\rVert\to\infty,
\label{eq:lora-G-dominates-rho}
\end{equation}
which gives the first limit and, in particular, $G(\vT)\to\infty$ as $\lVert\vT\rVert\to\infty$.
Equivalently, and in the form used in Step~6,
\begin{equation}
\frac{1+\lVert\vT\rVert^2}{G(\vT)}\longrightarrow0
\qquad
\text{as }\lVert\vT\rVert\to\infty.
\label{eq:lora-quadratic-over-G}
\end{equation}
For the second limit, by (P2), $\phi'(a)>0$ for all sufficiently large $a$, so
\begin{equation}
\frac{\kappa_\phi(a)}{\phi'(a)^2}
\le
\frac{[\phi''(a)]_+}{\phi'(a)^2}
+
(D-1)\frac{1}{a\phi'(a)},
\label{eq:lora-kappa-pointwise-bound}
\end{equation}
where the first term tends to zero by (P3) and the second tends to zero because
\begin{equation}
\frac{1}{a\phi'(a)}
=
\frac{1}{a^2}\cdot\frac{1}{\phi'(a)/a}
\longrightarrow0
\label{eq:lora-angular-limit}
\end{equation}
by (P2). Hence $\kappa_\phi(a)/\phi'(a)^2\to0$, i.e. $K(\vT)/G(\vT)\to0$ as
$\lVert\vT\rVert\to\infty$, proving the second limit in \eqref{eq:lora-domination-goals}.

\paragraph{\hspace{1.00em}\bf (Step 5: Data-Term Bounds.)}
Because $\sigma$ is bounded under Assumption~\ref{as1}, Lemma~\ref{lem:lora-data-bounds}
supplies the \emph{quadratic} bounds
\begin{equation}
\left\lVert\nabla_{\vT}\Ldata(\vT)\right\rVert
\le
C_{\nabla,\mathrm{LoRA}}\lVert\vT\rVert,
\qquad
\left|\Delta_{\vT}\Ldata(\vT)\right|
\le
C_{\Delta,\mathrm{LoRA}}\lVert\vT\rVert^2,
\label{eq:lora-data-bounds-prf}
\end{equation}
and in particular
\begin{equation}
\left\lVert\nabla_{\vT}\Ldata(\vT)\right\rVert^2
\le
C_{\nabla,\mathrm{LoRA}}^2\lVert\vT\rVert^2 .
\label{eq:lora-data-gradient-sq-bound}
\end{equation}
This is the decisive point of the argument: the data term contributes only $O(\lVert\vT\rVert^2)$
to both the gradient-squared and the Laplacian, matching the $\lVert\vT\rVert^2$ scale against
which (P2) already guarantees domination in \eqref{eq:lora-quadratic-over-G}.

\paragraph{\hspace{1.00em}\bf (Step 6: Villani Limit.)}
Fix $s>0$. For arbitrary vectors $\va,\vb$ we have
$\lVert\va+\vb\rVert^2\ge\frac12\lVert\vb\rVert^2-\lVert\va\rVert^2$. Applying this with
$\va=\nabla_{\vT}\Ldata$ and $\vb=\nabla_{\vT}R_{\mathrm{LoRA}}$, and using
\eqref{eq:lora-G-definition} and \eqref{eq:lora-data-gradient-sq-bound},
\begin{equation}
\left\lVert\nabla_{\vT}\widetilde V_{\mathrm{LoRA}}(\vT)\right\rVert^2
\ge
\frac{\lambda^2}{8}G(\vT)
-
C_{\nabla,\mathrm{LoRA}}^2\lVert\vT\rVert^2 .
\label{eq:lora-full-gradient-lower}
\end{equation}
Similarly, by \eqref{eq:lora-regularizer-laplacian-upper} and
\eqref{eq:lora-data-bounds-prf},
\begin{equation}
\Delta_{\vT}\widetilde V_{\mathrm{LoRA}}(\vT)
\le
C_{\Delta,\mathrm{LoRA}}\lVert\vT\rVert^2
+
\frac{\lambda}{2}K(\vT).
\label{eq:lora-full-laplacian-upper}
\end{equation}
Combining \eqref{eq:lora-full-gradient-lower} and \eqref{eq:lora-full-laplacian-upper} and
factoring out $G(\vT)$, which is strictly positive for all $\lVert\vT\rVert$ large by (P2),
\begin{align}
\frac1s\left\lVert\nabla_{\vT}\widetilde V_{\mathrm{LoRA}}(\vT)\right\rVert^2
-
\Delta_{\vT}\widetilde V_{\mathrm{LoRA}}(\vT)
&\ge
\frac{\lambda^2}{8s}G(\vT)
-
\left(\frac{C_{\nabla,\mathrm{LoRA}}^2}{s}+C_{\Delta,\mathrm{LoRA}}\right)\lVert\vT\rVert^2
-
\frac{\lambda}{2}K(\vT)
\notag\\
&=
G(\vT)
\left[
\frac{\lambda^2}{8s}
-
\left(\frac{C_{\nabla,\mathrm{LoRA}}^2}{s}+C_{\Delta,\mathrm{LoRA}}\right)
\frac{\lVert\vT\rVert^2}{G(\vT)}
-
\frac{\lambda}{2}\frac{K(\vT)}{G(\vT)}
\right].
\label{eq:lora-villani-factorization}
\end{align}
By \eqref{eq:lora-quadratic-over-G} the second term in the bracket tends to $0$, and by the
second limit in \eqref{eq:lora-domination-goals} so does the third. Hence the bracket converges
to
\begin{equation}
\frac{\lambda^2}{8s}>0,
\label{eq:lora-positive-bracket-limit}
\end{equation}
and is therefore bounded below by $\lambda^2/(16s)$ for all $\lVert\vT\rVert\ge\rho_0$, for some
$\rho_0<\infty$. Since $G(\vT)\to+\infty$ as $\lVert\vT\rVert\to\infty$ by Step~4, we conclude
\begin{equation}
\frac1s\left\lVert\nabla_{\vT}\widetilde V_{\mathrm{LoRA}}(\vT)\right\rVert^2
-
\Delta_{\vT}\widetilde V_{\mathrm{LoRA}}(\vT)
\ge
\frac{\lambda^2}{16s}G(\vT)
\xrightarrow[\ \lVert\vT\rVert\to\infty\ ]{}
+\infty .
\label{eq:lora-villani-limit}
\end{equation}
Since $s>0$ was arbitrary, all four requirements in Definition~\ref{def:villani} hold.
\end{proof}

\subsection{Proof of Intermediate Lemmas Needed for Theorem~\ref{thm:lora-general-profile-villani}}\label{app:subsec:loralemm}

\begin{proof}[Proof of Lemma~\ref{lem:lora-data-bounds}]
Write $\mW(\vT)=\mW_0+\frac\alpha r\mU\mV^\top$ and $\vz_i(\vT)\coloneqq\mW(\vT)\vx_i\in\R^p$.
It is convenient to set
\begin{equation}
\vv_i\coloneqq\frac\alpha r\mV^\top\vx_i\in\R^r,
\qquad\text{so that}\qquad
\vz_i(\vT)=\mW_0\vx_i+\mU\vv_i,
\qquad
\lVert\vv_i\rVert\le\kappa\lVert\mV\rVert_F .
\label{eq:lora-preactivation-factored}
\end{equation}

\emph{Value and residual.} Since $|\sigma(u)|\le B_\sigma$ pointwise,
$\lVert\sigma(\vz_i(\vT))\rVert\le B_\sigma\sqrt p$ for every $\vT$, whence by Cauchy--Schwarz
\begin{equation}
|\widehat y_i(\vT)|
=
\bigl|\va^\top\sigma(\vz_i(\vT))\bigr|
\le
B_aB_\sigma\sqrt p,
\qquad
|e_i(\vT)|\coloneqq|\widehat y_i(\vT)-y_i|\le E,
\label{eq:lora-residual-bound}
\end{equation}
uniformly in $\vT$; this gives \eqref{eq:lora-data-value-bound}. Note that, in contrast with the
linear-growth case, $E$ does \emph{not} depend on $\vT$, $B_0$ or $\kappa$.

\emph{Gradient.} Differentiating $\vz_i(\vT)=\mW_0\vx_i+\frac\alpha r\mU\mV^\top\vx_i$ in the
directions $\dd\mU,\dd\mV$,
\begin{equation}
\dd\vz_i
=
\frac{\alpha}{r}\bigl(\dd\mU\,\mV^\top\vx_i+\mU\,(\dd\mV)^\top\vx_i\bigr),
\label{eq:lora-preactivation-differential}
\end{equation}
so, using $\lVert\mV^\top\vx_i\rVert\le B_x\lVert\mV\rVert_F$ and
$\lVert\mU(\dd\mV)^\top\vx_i\rVert\le B_x\lVert\mU\rVert_F\lVert\dd\mV\rVert_F$,
\begin{equation}
\lVert\dd\vz_i\rVert
\le
\kappa\bigl(\lVert\mV\rVert_F\lVert\dd\mU\rVert_F+\lVert\mU\rVert_F\lVert\dd\mV\rVert_F\bigr)
\le
\kappa\lVert\vT\rVert\,\lVert(\dd\mU,\dd\mV)\rVert,
\label{eq:lora-preactivation-differential-bound}
\end{equation}
the last step by Cauchy--Schwarz together with
$\lVert\vT\rVert^2=\lVert\mU\rVert_F^2+\lVert\mV\rVert_F^2$. Setting
$\vg_i\coloneqq\operatorname{diag}(\sigma'(\vz_i))\va$, we have $\lVert\vg_i\rVert\le L_\sigma B_a$
and $\dd\widehat y_i=\vg_i^\top\dd\vz_i$, so $\dd\ell_i=-e_i\,\dd\widehat y_i$ gives
\begin{equation}
|\dd\ell_i|
\le
E\,B_aL_\sigma\,\kappa\lVert\vT\rVert\,\lVert(\dd\mU,\dd\mV)\rVert .
\label{eq:lora-loss-differential-bound}
\end{equation}
Hence $\lVert\nabla_{\vT}\ell_i(\vT)\rVert\le\kappa B_aL_\sigma E\lVert\vT\rVert$ for every $i$,
and averaging over $i$ with the triangle inequality yields
\eqref{eq:lora-data-gradient-bound}.

\emph{Laplacian.} For a scalar coordinate $\theta$ of $\mU$ or $\mV$,
\begin{equation}
\frac{\partial^2\ell_i}{\partial\theta^2}
=
\left(\frac{\partial\widehat y_i}{\partial\theta}\right)^2
-
e_i\,\frac{\partial^2\widehat y_i}{\partial\theta^2},
\label{eq:lora-second-derivative-decomposition}
\end{equation}
so $\Delta_{\vT}\ell_i=\lVert\nabla_{\vT}\widehat y_i\rVert^2-e_i\Delta_{\vT}\widehat y_i$, and we
bound the two terms separately.

For the first, \eqref{eq:lora-preactivation-factored} gives
$\partial\vz_i/\partial U_{ab}=(\vv_i)_b\,\ve_a$ and
$\partial\vz_i/\partial V_{cd}=\frac\alpha r(\vx_i)_c\,\mU_{:,d}$, whence
$\partial\widehat y_i/\partial U_{ab}=(\vg_i)_a(\vv_i)_b$ and
$\partial\widehat y_i/\partial V_{cd}=\frac\alpha r(\vx_i)_c\,(\mU^\top\vg_i)_d$. Summing squares
over all $D=(p+d)r$ coordinates,
\begin{equation}
\lVert\nabla_{\vT}\widehat y_i\rVert^2
=
\lVert\vg_i\rVert^2\lVert\vv_i\rVert^2
+
\frac{\alpha^2}{r^2}\lVert\vx_i\rVert^2\lVert\mU^\top\vg_i\rVert^2
\le
\kappa^2\lVert\vg_i\rVert^2\bigl(\lVert\mV\rVert_F^2+\lVert\mU\rVert_F^2\bigr)
\le
\kappa^2B_a^2L_\sigma^2\lVert\vT\rVert^2 .
\label{eq:lora-output-gradient-sum}
\end{equation}

For the second, $\vz_i$ is \emph{bilinear} in $(\mU,\mV)$, so
$\partial^2\vz_i/\partial\theta^2=0$ for every single coordinate $\theta$ (the nonzero second
partials of $\vz_i$ are the mixed ones $\partial^2\vz_i/\partial U_{ab}\partial V_{cd}$, which do
not enter the Laplacian). The chain rule therefore leaves only the $\sigma''$ term,
\begin{equation}
\Delta_{\vT}\widehat y_i
=
\sum_{\theta}\sum_{j=1}^p a_j\,\sigma''\bigl((\vz_i)_j\bigr)
\left(\frac{\partial(\vz_i)_j}{\partial\theta}\right)^{2},
\label{eq:lora-output-laplacian}
\end{equation}
and, coordinate by coordinate,
$\sum_\theta(\partial(\vz_i)_j/\partial\theta)^2
=\lVert\vv_i\rVert^2+\frac{\alpha^2}{r^2}\lVert\vx_i\rVert^2\lVert\mU_{j,:}\rVert^2
\le\kappa^2\bigl(\lVert\mV\rVert_F^2+\lVert\mU_{j,:}\rVert^2\bigr)$.
Using $|\sigma''|\le L_\sigma'$, $\lVert\va\rVert_1\le\sqrt p\,B_a$ and
$\lVert\va\rVert_\infty\le B_a$,
\begin{equation}
\bigl|\Delta_{\vT}\widehat y_i\bigr|
\le
L_\sigma'\kappa^2
\Bigl(\lVert\va\rVert_1\lVert\mV\rVert_F^2+\lVert\va\rVert_\infty\lVert\mU\rVert_F^2\Bigr)
\le
L_\sigma'\kappa^2B_a\sqrt p\,\lVert\vT\rVert^2 .
\label{eq:lora-output-laplacian-bound}
\end{equation}

Combining \eqref{eq:lora-output-gradient-sum}, \eqref{eq:lora-output-laplacian-bound} and
$|e_i|\le E$ from \eqref{eq:lora-residual-bound},
\begin{equation}
\bigl|\Delta_{\vT}\ell_i(\vT)\bigr|
\le
\kappa^2\bigl(B_a^2L_\sigma^2+EL_\sigma'B_a\sqrt p\bigr)\lVert\vT\rVert^2 ,
\label{eq:lora-sample-laplacian-bound}
\end{equation}
and averaging over $i$ proves \eqref{eq:lora-data-laplacian-bound} with
$C_{\Delta,\mathrm{LoRA}}$ as stated.
\end{proof}

\begin{remark}
    As in Section~\ref{subsec:mh-generated-profiles}, the LoRA loss as in Definition~\ref{def:lora-general-radial-potential} satisfies Assumption~\ref{ass:mh-admissible-profile} for $\phi_{\log}(a) = a^2\log(1+a^2)$ and $\phi_\varepsilon(a)=a^{2+\varepsilon}$ for any $\varepsilon > 0$ as shown in Corollaries~\ref{cor:mh-log-profile-standalone} and \ref{cor:mh-joint-polynomial}.
\end{remark}

\subsection{Activations satisfying Assumption~\ref{as1}}\label{app:subsec:actlora}
~\\
\begin{table}[h]
\begin{center}
\begin{tabular}{lccc}
\toprule
$\sigma(u)$ & $B_\sigma=\sup|\sigma|$ & $L_\sigma=\sup|\sigma'|$ & $L_\sigma'=\sup|\sigma''|$ \\
\midrule
Sigmoid $\dfrac{1}{1+e^{-u}}$ & $1$ & $\tfrac14$ & $\dfrac{1}{6\sqrt3}\approx0.0962$ \\[1.6ex]
Tanh $\dfrac{e^u-e^{-u}}{e^u+e^{-u}}$ & $1$ & $1$ & $\dfrac{4}{3\sqrt3}\approx0.7698$ \\[1.6ex]
$\arctan(u)$ & $\dfrac{\pi}{2}\approx1.5708$ & $1$ & $\dfrac{3\sqrt3}{8}\approx0.6495$ \\[1.6ex]
$\sin(u)$, $\cos(u)$ & $1$ & $1$ & $1$ \\
\bottomrule
\end{tabular}
\end{center}
\caption{Activation functions satisfying Assumption~\ref{as1}, with explicit constants.}
\label{tab:actlora}
\end{table}

\section{A Schematic of the Unregularized Multi-Headed Loss}\label{sec:diag}

For readability of the diagrams we drop the bracketing of the head-index superscripts of the various matrices, that is used in the main text.  

\providecommand{\alert}[1]{\textcolor{red}{#1}}
\newcommand{\Wv}{\bm{W}_v}
\newcommand{\WQ}[1]{\bm{W}_Q^{#1}}
\newcommand{\WK}[1]{\bm{W}_K^{#1}}
\newcommand{\Th}{\bm{T}_H}
\newcommand{\Yh}{\widehat{\bm{Y}}}

% ---- glossy diagram styling ----
% Palette chosen for GREYSCALE separation. Approx. luminance
% (0.299R+0.587G+0.114B) is spread across the range so the figures survive
% black-and-white printing:
%   cOut 207 (lightest) > cQK 178 > cOp 152 > cIn 124 > cAt 103 > cTgt 95 > cTr 79
% Redundant, non-colour cues are also used:
%   trainable = THICK border,  ground truth = DASHED border.
\definecolor{cIn} {RGB}{122,126,134}   % input / data        L~124
\definecolor{cTr} {RGB}{ 35, 84,165}   % trainable           L~ 79
\definecolor{cQK} {RGB}{120,200,215}   % projected Q, K      L~178
\definecolor{cOp} {RGB}{215,140, 50}   % arithmetic ops      L~152
\definecolor{cAt} {RGB}{128, 78,168}   % attention / softmax L~103
\definecolor{cOut}{RGB}{245,210, 95}   % model output        L~207
\definecolor{cTgt}{RGB}{ 40,125, 85}   % ground truth        L~ 95
\definecolor{cLo} {RGB}{190, 55, 55}   % loss                L~101
 
\tikzset
{
  glossy/.style={
    draw=#1!78!black, line width=0.5pt, rounded corners=3pt,
    top color=white, middle color=#1!16, bottom color=#1!45,
    blur shadow={shadow blur steps=5, shadow xshift=0.5pt,
                 shadow yshift=-0.9pt, shadow opacity=45},
    font=\sffamily\sansmath\scriptsize, align=center, inner sep=3.5pt,
    minimum height=8.5mm, minimum width=13mm},
  gin/.style   ={glossy=cIn},
  gtrain/.style={glossy=cTr, line width=1.2pt},          % thick = trained
  gqk/.style   ={glossy=cQK},
  gop/.style   ={glossy=cOp},
  gsm/.style   ={glossy=cAt},
  gout/.style  ={glossy=cOut},
  gtar/.style  ={glossy=cTgt, line width=0.9pt,
  dash pattern=on 1.7pt off 1.3pt},        % dashed = ground truth
  gloss/.style ={glossy=cLo, circle, minimum size=8mm, minimum width=8mm},
  gcirc/.style ={glossy=cOp, circle, minimum size=8.5mm, minimum width=8.5mm},
  flow/.style  ={-{Stealth[length=2.4mm,width=1.8mm]}, draw=gray!55!black, line width=0.7pt},
  dlbl/.style  ={font=\sffamily\small, text=gray!55!black}
  }

\begin{figure}[ht]
  \centering
 \scalebox{0.86}{%
  \begin{tikzpicture}
    % --- nodes -----------------------------------------------------------
    \node[gin,minimum width=18mm,minimum height=18mm]    (X)  at (0,0)        {$\bm{X}$\\[-1pt]$t\times d$};
    \node[gtrain] (wq) at (2.15, 1.45) {$\bm{W}_Q^{j}$\\[-1pt]\tiny$d\times r$};
    \node[gtrain] (wk) at (2.15,-1.45) {$\bm{W}_K^{j}$\\[-1pt]\tiny$d\times r$};
    \node[gqk]    (Q)  at (4.25, 1.45) {$\bm{Q}^{j}$\\[-1pt]\tiny$t\times r$};
    \node[gqk]    (K)  at (4.25,-1.45) {$\bm{K}^{j}$\\[-1pt]\tiny$t\times r$};
 
    \node[gop, minimum width=17mm] (Z)  at (6.75,0)
        {$\bm{M}^{j}=\dfrac{\bm{Q}^{j}(\bm{K}^{j})^{\!\top}}{\sqrt d}$};
    \node[gop, minimum width=16mm] (SM) at (9.5,0) {$\mathrm{RowSoftMax}_\beta$};
    \node[gsm]                     (S)  at (11.75,0) {$\bm{S}^{j}$\\[-1pt]\tiny$t\times t$};
 
    \node[gtrain] (V)   at (11.75,-2.75) {$\bm{X}\bm{W}_v$\\[-1pt]\tiny$t\times d$};
    \node[gcirc]  (mul) at (13.6,0) {$\times$};
    \node[gout, minimum width=18mm,minimum height=18mm] (out) at (15.9,0)
        {$\widehat{\bm{Y}}^{j}(\bm{X})$\\[-1pt]$t\times d$};
 
    % --- flow ------------------------------------------------------------
    \draw[flow] (X.north) |- (wq.west);
    \draw[flow] (X.south) |- (wk.west);
    \draw[flow] (wq) -- (Q);
    \draw[flow] (wk) -- (K);
    \draw[flow] (Q.east) -- (Z.north west);
    \draw[flow] (K.east) -- (Z.south west);
    \draw[flow] (Z) -- (SM);
    \draw[flow] (SM) -- (S);
    \draw[flow] (S) -- (mul);
    \draw[flow] (V.east) -| (mul.south);
    \draw[flow] (mul) -- (out);
    \draw[flow] (X.south) |- (0,-2.75) -- (V.west);
 
    % --- annotations -----------------------------------------------------
    \node[dlbl, below=0.8mm of Z]  {scores / logits};
    \node[dlbl, above=0.8mm of S]  {row-stochastic};
    \node[dlbl, below=0.8mm of V]  {$\bm{W}_v$ --- (not trained)};
  \end{tikzpicture}}
\caption{Single Key--Query Attention Head.}
  \label{fig:head}
\end{figure}

\begin{figure}[ht]
  \centering 
\scalebox{0.85}{%
  \begin{tikzpicture}
    % --- heads -----------------------------------------------------------
    \node[gin,minimum width=18mm,minimum height=18mm] (X) at (0,0) {$\bm{X}$\\[-1pt]$t\times d$};
 
    \node[gsm, minimum width=26mm] (h1) at (3.5, 2.05) {Head $1$:\ \ $\bm{S}^{1}(\bm{X})$};
    \node[gsm, minimum width=26mm] (h2) at (3.5, 0.75) {Head $2$:\ \ $\bm{S}^{2}(\bm{X})$};
    \node[font=\sffamily\small, text=gray!60!black] at (3.5,-0.35) {$\vdots$};
    \node[gsm, minimum width=26mm] (hH) at (3.5,-1.55) {Head $H$:\ \ $\bm{S}^{H}(\bm{X})$};
 
    \node[gcirc] (sum)  at (6.4,0.25) {$+$};
    \node[gsm, minimum width=15mm] (Ssum) at (8.7,0.25)
        {$\displaystyle\sum_{j=1}^{H}\bm{S}^{j}$\\[-1pt]\tiny$t\times t$};
 
    \node[gtrain] (V)  at (8.7,-2.5) {$\bm{X}\bm{W}_v$\\[-1pt]\tiny$t\times d$};
    \node[gcirc]  (mm) at (10.9,0.25) {$\times$};
    \node[gout, minimum width=18mm,minimum height=18mm] (Yh) at (13.2,0.25)
        {$\hat{\bm{Y}}(\bm{X})$\\[-1pt]$t\times d$};
    \node[gtar,minimum width=18mm,minimum height=18mm] (Y)    at (13.2, 2.35) {$\bm{Y}$\\[-1pt]$t\times d$};
    \node[gloss] (loss) at (16,1.3) {$\ell$};
 
    % --- flow ------------------------------------------------------------
    \draw[flow] (X.east) -- (h1.west);
    \draw[flow] (X.east) -- (h2.west);
    \draw[flow] (X.east) -- (hH.west);
    \draw[flow] (h1) -| (sum); \draw[flow] (h2) -- (sum); \draw[flow] (hH) -| (sum);
    \draw[flow] (sum) -- (Ssum);
    \draw[flow] (Ssum) -- (mm);
    \draw[flow] (V.east) -| (mm.south);
    \draw[flow] (mm) -- (Yh);
    \draw[flow] (X.south) |- (0,-2.5) -- (V.west);
    \draw[flow] (Yh) -- (loss);
    \draw[flow] (Y)  -- (loss);
 
    \node[dlbl, below=1.2mm of loss, align=center]
        {$\tfrac12\lVert\bm{Y}-\hat{\bm{Y}}(\bm{X})\rVert_F^{2}$};
    \node[dlbl, below=0.8mm of V] {$\bm{W}_v$ --- (not trained), shared by all heads};
  \end{tikzpicture}}
\caption{Multi-Head Key--Query Attention Loss.}
  \label{fig:mhead}
\end{figure}

% \vspace{0.4em}
% The $H$ heads share the input and the value stream; they differ \alert{only} in how they weight tokens.

\end{document}

%\include{standard-appendix}

%%%%%%%%%%%%%%%%%%%%%%%%%%%%%%%%%%%%%%%%%%%%%%%%%%%%%%%%%%%%%%%%%%%%%%%%

%    Templates for common elements of a journal article; for additional
%    information, see the AMS-LaTeX instructions manual, instr-l.pdf,
%    included in the PROC author package, and the amsthm user's guide,
%    linked from http://www.ams.org/tex/amslatex.html .

%    Section headings
\section{}
\subsection{}

%    Ordinary theorem and proof
\begin{theorem}[Optional addition to theorem head]
% text of theorem
\end{theorem}

\begin{proof}[Optional replacement proof heading]
% text of proof
\end{proof}

%    Figure insertion; default placement is top; if the figure occupies
%    more than 75% of a page, the [p] option should be specified.
\begin{figure}
\includegraphics{filename}
\caption{text of caption}
\label{}
\end{figure}

%    Mathematical displays; for additional information, see the amsmath
%    user's guide, linked from http://www.ams.org/tex/amslatex.html .

% Numbered equation
\begin{equation}
\end{equation}

% Unnumbered equation
\begin{equation*}
\end{equation*}

% Aligned equations
\begin{align}
  &  \\
  &
\end{align}

%-----------------------------------------------------------------------
% End of proc-l-template.tex
%-----------------------------------------------------------------------